\documentclass[times]{article} % For LaTeX2e

\usepackage[utf8]{inputenc} % allow utf-8 input
\usepackage[T1]{fontenc}    % use 8-bit T1 fonts
\usepackage[margin=1in]{geometry}% \title{}

\usepackage[hidelinks]{hyperref}       % hyperlinks
\usepackage{url}            % simple URL typesetting
\usepackage{booktabs}       % professional-quality tables
\usepackage{amsfonts}       % blackboard math symbols
\usepackage{nicefrac}       % compact symbols for 1/2, etc.
\usepackage{microtype}      % microtypography
\usepackage{xcolor}         % colors
\usepackage[shortlabels]{enumitem}
\usepackage{float}
\usepackage{wrapfig}
\usepackage{listings}

\usepackage{natbib}

\usepackage{amsmath}
\usepackage{amssymb}
\usepackage{amsthm}
\usepackage{mathtools}
\usepackage{graphicx}
\usepackage{subcaption}

\usepackage{microtype}
\usepackage{graphicx}
\usepackage{dirtytalk}
\usepackage{subfiles}
\usepackage{enumitem}
\usepackage{xspace}
\usepackage{forloop}
\usepackage{multirow}
\usepackage{algorithmic}
\usepackage{refcount}
\usepackage{amsfonts}

\usepackage{amsmath}
\usepackage{amssymb, tikz, bbm}
\usepackage{mathtools}
\usepackage{comment}
\usepackage{nicefrac}
\usepackage{thmtools} 
\usepackage{thm-restate}

\usepackage{amsmath,amsfonts,bm,thmtools, thm-restate}
\usepackage{amssymb,amsthm, comment, wrapfig}

\usepackage[utf8]{inputenc} % allow utf-8 input
\usepackage[T1]{fontenc}    % use 8-bit T1 fonts
\usepackage{hyperref, cleveref}       % hyperlinks
\usepackage{url}            % simple URL typesetting
\usepackage{booktabs}       % professional-quality tables
\usepackage{amsfonts, bbm}       % blackboard math symbols
\usepackage{nicefrac}       % compact symbols for 1/2, etc.
\usepackage{microtype}      % microtypography
\usepackage{xcolor}         % colors
\usepackage{sidecap}

%%%%%%%%%%%%%%%%%%%%%%%%%%%%%%%%
% THEOREMS
%%%%%%%%%%%%%%%%%%%%%%%%%%%%%%%%
\theoremstyle{plain}
\newtheorem{theorem}{Theorem}[section]
\newtheorem{proposition}[theorem]{Proposition}
\newtheorem{lemma}[theorem]{Lemma}

\newtheorem{assumption}[theorem]{Assumption}
\newtheorem{remark}[theorem]{Remark}
\usepackage{physics}

\usepackage{tikz}

\usepackage{hyperref}
\usepackage{url}

\usepackage{amsmath,amssymb,amsfonts,bm}
\usepackage{amsthm}
\usepackage{graphicx}
\usepackage{flexisym}
% \usepackage{breqn}

%%%%%%%%%%%%%%%%%%%%%%%%%%%%%%%%
% THEOREMS
%%%%%%%%%%%%%%%%%%%%%%%%%%%%%%%%
% \theoremstyle{plain}
% \newtheorem{theorem}{Theorem}[section]
% \newtheorem{proposition}[theorem]{Proposition}
% \newtheorem{lemma}[theorem]{Lemma}
% \newtheorem{corollary}[theorem]{Corollary}
% \theoremstyle{definition}
% \newtheorem{definition}[theorem]{Definition}
% \newtheorem{assumption}[theorem]{Assumption}
% \newtheorem{remark}[theorem]{Remark}
% % Highlight a newly defined term
% \newcommand{\newterm}[1]{{\bf #1}}
% \renewcommand{\d}{\mathrm{d}}

%%%%%%%%%%%%%%%%%%%%%%%%%%
\usepackage[ruled,linesnumbered]{algorithm2e}
\SetKwComment{Comment}{/* }{ */}
\usepackage{wasysym}
\usepackage{mathtools}
\usepackage{authblk}
\usepackage{algorithmic}
\usepackage{subcaption}
\usepackage{wrapfig}
\usepackage{resizegather}
%%%%%%%%%%

% Random variables

% rm is already a command, just don't name any random variables m

% Random vectors

% Elements of random vectors

% Random matrices

% Elements of random matrices

% Vectors

% Elements of vectors

% Matrix

% Tensor
\DeclareMathAlphabet{\mathsfit}{\encodingdefault}{\sfdefault}{m}{sl}
\SetMathAlphabet{\mathsfit}{bold}{\encodingdefault}{\sfdefault}{bx}{n}

% Graph

% Sets

% Don't use a set called E, because this would be the same as our symbol
% for expectation.

% Entries of a matrix

% entries of a tensor
% Same font as tensor, without \bm wrapper

% The true underlying data generating distribution

% The empirical distribution defined by the training set

% The model distribution

% Stochastic autoencoder distributions

\newcommand{\KL}[2]{\mathsf{KL}\left(#1\Vert #2\right)}
\newcommand{\FI}[2]{\mathsf{FI}\left(#1\Vert #2\right)}
\newcommand{\TV}[2]{\mathsf{TV}\left(#1, #2\right)}

 % Laplace distribution

\newcommand{\E}{\mathbb{E}}

% \newcommand{\KL}{D_{\mathrm{KL}}}

% Wolfram Mathworld says $L^2$ is for function spaces and $\ell^2$ is for vectors
% But then they seem to use $L^2$ for vectors throughout the site, and so does
% wikipedia.

 % See usage in notation.tex. Chosen to match Daphne's book.

\DeclareMathOperator*{\argmin}{arg\,min}

\newcommand{\0}{\mathbf{0}}

% \addbibresource{refs.bib}
% \DefineBibliographyStrings{english}{%
%   backrefpage = {page},% originally "cited on page"
%   backrefpages = {pages},% originally "cited on pages"
% }
\usepackage{authblk}
\allowdisplaybreaks[1]
\title{Efficient Approximate Posterior Sampling with Annealed Langevin Monte Carlo}

\author[1]{Advait Parulekar}
\author[1]{Litu Rout}
\author[2]{Karthikeyan Shanmugam}
\author[1]{Sanjay Shakkottai}
\affil[1]{Chandra Family Department of Electrical and Computer Engineering, The University of Texas at Austin}
\affil[2]{Google DeepMind}

\date{}

\usepackage{setspace}

% The \author macro works with any number of authors. There are two commands
% used to separate the names and addresses of multiple authors: \And and \AND.
%
% Using \And between authors leaves it to \LaTeX{} to determine where to break
% the lines. Using \AND forces a linebreak at that point. So, if \LaTeX{}
% puts 3 of 4 authors names on the first line, and the last on the second
% line, try using \AND instead of \And before the third author name.

%\iclrfinalcopy % Uncomment for camera-ready version, but NOT for submission.
\begin{document}

\maketitle

\begin{abstract}
  We study the problem of posterior sampling in the context of score based generative models. We have a trained score network for a prior $p(x)$, a measurement model $p(y|x)$, and are tasked with sampling from the posterior $p(x|y)$. Prior work has shown this to be intractable in $\textsf{KL}$ (in the worst case) under well-accepted computational hardness assumptions. Despite this, popular algorithms for tasks such as image super-resolution, stylization, and  reconstruction enjoy empirical success. Rather than establishing distributional assumptions or restricted settings under which exact posterior sampling is tractable, we view this as a more general ``tilting'' problem of biasing a distribution towards a measurement. Under minimal assumptions, we show that one can tractably sample from a distribution that is \textit{simultaneously} close to the posterior of a \textit{noised prior} in $\textsf{KL}$ divergence and the true posterior in Fisher divergence. Intuitively, this combination ensures that the resulting sample is consistent with both the measurement and the prior. To the best of our knowledge these are the first formal results for (approximate) posterior sampling in polynomial time.
\end{abstract}

\section{Introduction}
\label{sec:intro}

Score-based generative models~\citep{song2020generativemodelingestimatinggradients} including DALL\textsc{-}E~\citep{dalle}, Stable Diffusion~\citep{rombach2022highresolutionimagesynthesislatent}, Imagen~\citep{imagen}, and Flux~\citep{blackforestlabs}, provide a powerful framework for sampling from complex data distributions. Given access to samples from a target distribution, these models learn a family of \emph{smoothed score functions}, i.e., vector fields that estimate the gradient of the log-density of the data corrupted with varying levels of noise. Intuitively, these score functions can be used to map an image corrupted with a certain amount of noise to an image with less noise. Once such a family of score functions is learned, it can be used to iteratively denoise an image starting from pure noise and generate a sample from the data distribution. 
% Score Based Generative models like Stable Diffusion \citep{rombach2022highresolutionimagesynthesislatent} are an effective tool for learning and sampling from distributions described by a large dataset. Given many samples from such a target distribution, the idea is to learn "smoothed scores", which can be thought of as mappings that take an image with a certain amount of noise added to it, and try to return an image with less noise. Once such a map is learned, one can use it to iteratively denoise an image starting from pure noise.

The success of score-based generative models in capturing complex prior distributions has led to their widespread adoption in downstream tasks such as inpainting~\citep{lugmayr2022repaint}, super-resolution~\citep{ddrm,chung2022diffusionposteriorsamplinggeneral,pGDM,psld,stsl}, MRI reconstruction~\citep{song2022solvinginverseproblemsmedical}, and stylization~\citep{stylealigned,rout2024rbmodulationtrainingfreepersonalizationdiffusion,rfi}. In these tasks, we begin with a prior $p$ specified to us through a large number of samples. We also have a likelihood or a reward model denoted by $R_y$ that indicates our preference at inference time, which is typically parameterized by a measurement $y$. The tasks is to obtain a sample from $p$ that is consistent with $R_y$.

In many practical scenarios, such as those mentioned above, the measurement model is given by $y = \mathcal{A}(x) + \eta$, where $\mathcal{A}$ is a known measurement operator and $\eta$ is noise. We seek a sample $x$ from the prior such that $y \approx \mathcal{A}(x)$. This is often implemented by using $R_y = \Vert \mathcal{A}(x)-y\Vert^2$ as a potential function and considering a $\textsf{KL}$ penalty. Formally, this is equivalent to sampling from the tilted distribution $\mu_0$, which is defined as follows:
\begin{align}
    \tag{Posterior Sampling}
    \label{eq:posterior-sampling}
    \mu_0 = \argmin_\nu \E_{\nu}[R_y(X)]+\KL{\nu}{p} \implies \mu_0 \propto pe^{-R_y}
\end{align}
% The success of this framework in representing a prior distribution has to led to its use in downstream applications such as inpainting, superresolution, MRI reconstructions, stylization, etc. As before, we begin with a prior $p$ specified to us through a large number of i.i.d. samples. We \textit{also} have a likelihood, or reward model, $R_y$ that indicates our preference at inference time, parameterized by $y$ (say, a measurement), and want to obtain a sample that "comes from" $p$ while being biased towards those elements in the support of $p$ that are consistent with $R_y$. Concretely, in many scenarios such as those mentioned above, there is a measurement model is $y = Ax+\eta$, and we would like to generate a sample $x$ consistent with both the prior $p$ such that $y\approx Ax$. In practice, this is often formalized by using $R_y = \Vert Ax-y\Vert^2$ as a potential function and considering a $\texttt{KL}$ penalty.
% $$\text{Sample from: } p_R = \argmin_\mu \E_{\mu}[R_y(X)]+\KL{\mu}{p} \implies p_R \propto pe^{-R_y}\hspace{0.5cm} \text{(Posterior Sampling)}$$
This paper explores the extent to which score networks trained to model the prior $p$ can be used for sampling the tilted distribution. We refer to this type of tilting as \ref{eq:posterior-sampling}.  Indeed, if $p$ is the prior, and $e^{-R_y}$ is a likelihood, then $pe^{-R}/Z$ is the posterior given the measurement $y$. This setting differs from traditional \textit{conditional generation}, where conditioning variables (e.g., measurements) are fed as input to the score network. In contrast, our focus is on a \textit{training-free} setup: given a measurement $y$ at inference time, we aim to sample from $p(x|y)$ using only a score network trained on the unconditional prior $p(x)$. While such networks are known to enable efficient sampling from $p(x)$~\citep{chen2023samplingeasylearningscore},  our goal in this paper is to understand their role in sampling from $p(x|y)$.

% Our goal in this paper is to explore the extent to which score networks trained for the prior $p$ can be used for such sampling. We refer to this type of tilting as posterior sampling. Indeed, if $p$ is the prior, and $e^{-R_y}$ is a likelihood, then $pe^{-R}/Z$ is the posterior given the measurement $y$. We remark that this is different from \textit{conditional} generation, where the score network itself takes as input additional conditioning variables. In that setting, the conditioning variables are also used during training. The setting we describe is training free - we are given a measurement $y$ of a sample $x$, and want to generate a sample from $p(x|y)$ using only scores for $p(x)$. As eluded to above, such scores allow us to efficiently sample from $p(x)$, but we would like to understand their role in sampling from $p(x|y)$.
%For "vanilla" sampling, well trained score models are large neural networks with possibly billions of parameters \citep{rombach2022highresolutionimagesynthesislatent} that allow us to generate samples from the prior $p$ in a fixed runtime. In some applications, we would like to generate a sample conditioned on some property. For instance, in super-resolution, MRI reconstruction, inpainting, deblurring applications, we may observe a noisy linear measurement $Ax+\eta$ for some $A$. During generation, there is now a natural tradeoff between consistency with the measurement, and consistency with the prior. The 

There has been growing interest in establishing provable guarantees for posterior sampling. 
%
%% Indeed, there are two extremes wherein the posterior sampling problem is simple. On one end of the spectrum, if the likelihood is completely \textit{uninformative} so that the posterior is not much different than the prior, one can expect that an efficient sampler for the prior can be modified with little difficulty to sample from the posterior. On the other end of the spectrum, if the likelihood is \textit{very} informative, the prior becomes less important, and the problem may be solved by just finding a point consistent with the likelihood. This happens, for instance, when the likelihood comes from a measurement model that is a small Gaussian perturbation of a specific point -- here even just returning a perturbation of the maximum likelihood estimate could be a good approximation to the true posterior. 
%
%
In general, we cannot directly use the score based generative models, because we cannot efficiently compute the posterior smoothed scores from the prior smoothed scores. While empirically successful methods often perform well in practice and implicitly aim to solve the posterior sampling problem, provable polynomial-time guarantees remain elusive. In fact, many of the efficient algorithms proposed~\citep{chung2022diffusionposteriorsamplinggeneral,psld} can be proven to be biased. A formal counterpoint was presented in~\citet{gupta}, which showed that one could set up a posterior sampling problem to invert a (hypothesized) cryptographic one-way function, establishing cryptographic hardness. Intuitively, this hardness stems from the fact that posterior sampling is a composite sampling problem that encourages consistency with both a prior distribution as well as the measurement likelihood, which is difficult when the regions of highest likelihood have a small probability under the prior.
%% when these two requirements can guide the sampler along different directions.
%% when the regions of highest likelihood have a small probability under the prior?between them .
%resulting in a delicate trade-off between satisfying 

In light of this, recent work has focused on identifying sufficient conditions under which provable or asymptotically correct posterior sampling is possible, while avoiding such lower bounds~\citep{bruna2024posteriorsamplingdenoisingoracles, xu2024provablyrobustscorebaseddiffusion}. Instead, we take the view that exact posterior sampling might be a more difficult goal than we really need to achieve. In what sense can we tractably bias a sample from a prior towards a likelihood? 
%% Is there am approximation of posterior sampling that we can achieve in polynomial time?

%These results raise an important question: \textit{what kind of tilting can we achieve on instances where the posterior sampling problem is known to be hard?} Understanding this trade-off between computational hardness and provable \ref{eq:posterior-sampling} is a key focus of this work.

% There has been considerable interest in provable guarantees for the posterior sampling problem. In fact, many efficient algorithms proposed in the literature can be shown to introduce bias.While empirically successful methods generally perform well, and do implicitly try and solve the exact posterior sampling problem, provable polynomial guarantees seem elusive. A formal counterpoint to this literature came in \citep{gupta} which showed that one could set up a posterior sampling problem to invert a (hypothesized) cryptographic one-way function, establishing cryptographic hardness. In light of this, there were some efforts to provide sufficient conditions for provable sampling \citep{bruna2024posteriorsamplingdenoisingoracles}, or asymptotic correctness \citep{xu2024provablyrobustscorebaseddiffusion}, which avoided the computational lower bounds. But what kind of tilting can we achieve on instances where the posterior sampling problem is known to be hard?

\textbf{Contributions.} 
%Instead of sampling from the posterior in $\textsf{KL}$, we derive a pair of weaker guarantees that are applicable under minimal assumptions. 
We introduce a notion of posterior sampling that is possible in polynomial time, bypassing the hardness of sampling in $\textsf{KL}$.
We develop guarantees with our method {\em Annealed Langevin Monte Carlo} (ALMC, Algorithm~\ref{alg:AULA}) % interpolates between these two extremes to 
in the general regime where the influences of the prior and the likelihood might be in conflict.
% in a non-trivial way.
%But we may find ourselves in the middle where the two potentially opposing influences of the prior and the likelihood conflict in non-trivial ways.
We start with a sample that disregards the prior entirely -- emphasizing only consistency with the likelihood. This sample is then annealed towards the true posterior by drawing its marginal closer to the posteriors of progressively denoised priors. 
%While this may resemble DDPM superficially, it differs fundamentally in that at no point do we use a true annealed posterior score (which we cannot compute from the prior score).
%Crucially, we show that the metric derivative of the path traced by posterior of noised priors is small for higher noise levels. Using this we show that  ALMC up to some time tracks the posterior of noised priors closely in KL divergence (Lemmas \ref{lem:absolutely_continuous}, \ref{lem:dtlog_mut_upper_bound}). Beyond this point when the metric derivative overwhelms discretization errors, we show that the ALMC continues to track the posterior of noised priors in Fisher Divergence until reaching the true posterior (Lemmas \ref{lem:log_prob_derivative}, \ref{lem:mu_t_tail}). 
%
%% {\color{blue}KS: Crucially, we show that the metric derivative of the path traced by posterior of noised priors is small for higher noise levels. Using this we show that the KL divergence between ALMC up to some time tracks the posterior of noised priors closely (Lemmas \ref{lem:absolutely_continuous}, \ref{lem:dtlog_mut_upper_bound}). After this point we show that when the metric derivative overwhelms discretization errors, the ALMC tracks the posterior of noised priors in Fisher Divergence till reaching the true posterior (Lemmas \ref{lem:log_prob_derivative}, \ref{lem:mu_t_tail}). (We can then skip to contributions.)}. 
Other than at polynomially low noise levels, we show that using ALMC we can efficiently transition from the posterior of a noised prior to a posterior of a slightly less noised prior. This efficiency is captured by bounds on how quickly these posteriors can change as we vary the level of noising on the priors (Lemmas \ref{lem:absolutely_continuous}, \ref{lem:dtlog_mut_upper_bound}), as well as regularity conditions on the posteriors themselves, being as they were posteriors on priors that are regularized by annealing (Lemmas \ref{lem:log_prob_derivative}, \ref{lem:mu_t_tail}).
This brings us to the two main contributions of our work, \begin{enumerate}
    \item[a.] We show that an early-stopped Annealed Langevin Monte Carlo (ALMC) algorithm can track the posterior of a slightly noised prior in polynomial time in $\textsf{KL}$, and thus sample from a distribution close to the {\em posterior for a noisy prior}.
    
    \item[b.] Although tracking the above path in $\textsf{KL}$ beyond this point is generally intractable, we show that this early stopped distribution also has a low Fisher Divergence relative to the {\em true posterior}.     
\end{enumerate}

%% The early stopped $\textsf{KL}$ guarantee for the posterior of a noised prior (Contribution a) is derived  using a novel bound on the \textit{action} of the path of annealing. Warm-starting from this KL closeness, we   

%% To our knowledge, these are the first bounds on Fisher Divergence for denoising diffusion models ({\em Contribution b}); see also Remark~\ref{rem:ALMCdisambiguation}. 
Our results require minimal assumptions (Assumptions \ref{assumptions}) -- that the prior should have Lipschitz score, be sub-Gaussian, and that the measurement operator $R_y$ should be smooth and convex. Our motivation for this pair of results stems from
% ({\em Contribution a}) is 
the phenomenon of ``mode collapse'', shown in the context of ``unannealed'' Langevin Monte Carlo for convergence in $\textsf{FI}$ \citep{balasubramanian2022theorynonlogconcavesamplingfirstorder}. Indeed, we show in Sections~\ref{sec:metastability} and \ref{sec:FIImplication} that for a multimodal distribution (for example, a mixture of Gaussians), Fisher Divergence alone suffices only to guarantee a type of {\em local} convergence, 
%closely approximate mode parameters (e.g., means of a multimodal Gaussian), 
and cannot generally provide any guarantees on the corresponding mode weights (e.g., mixture weights). Our early stopped $\textsf{KL}$ guarantee for the posterior of a noised prior provides a notion of global correctness in density. Specifically, in the mixture-of-Gaussian setting, we show that we can explicitly avoid mode collapse (Section~\ref{sec:FIImplication}). %The key idea to achieve this is a novel bound on the \textit{action} of the path of annealing. 
Taken together, 
%{\color{blue} by focusing on the posterior of the noised prior}, 
these results provide a response to the intractability of posterior sampling in $\textsf{KL}$.

\textbf{Notation:} We use $p_0$ to denote a prior, $R_y$ (or $R$) to denote a likelihood, and $\mu_0\propto p_0e^{-R}$ to denote a posterior. We use $\gamma$ to refer to a standard Gaussian. For time $t$, $p_t$ denotes the Gaussian smoothed prior (or noised prior) with density $p_t(x) = e^{td}p_{0}(e^{t}x)*\gamma$, where $d$ is the ambient dimension ($x \in \mathbb{R}^d$), and $*$ is the convolution operator. Similarly, we define $(\mu_0)_t(x) = e^{td}\mu_{0}(e^{t}x)*\gamma$ (the noised true posterior) 
and $\mu_t \propto p_te^{-R}$ (the posterior of the noised prior). We have $\KL{\alpha}{\beta} = \E_\alpha \left[\log\nicefrac{\alpha}{\beta}\right]$, $\textsf{TV}(\alpha, \beta) = \sup \left(\alpha(A)-\beta(A)\right)$ where the supremum is over all measurable sets $A$. %% {\color{blue} KS: $*$ is convolution right ? Not defined yet }

\subsection{Related Works}\label{sec:related_works}
\textbf{Sampling:} We refer the reader to \citet{chewi2023log} for an exposition of works on sampling. There are strong connections between sampling and optimization, explored in various places including \citet{wibisono2018samplingoptimizationspacemeasures}. Approximately, we can think of Langevin Monte Carlo (LMC) for sampling as corresponding to Gradient Descent for optimization, and log-concave distribution correspond to convex functions. More recently, denoising diffusion models \citep{ho2020denoisingdiffusionprobabilisticmodels, song2022denoisingdiffusionimplicitmodels, song2020generativemodelingestimatinggradients, song2021scorebasedgenerativemodelingstochastic} begin with a noisy image and iteratively denoise to get a sample. This is efficient, but requires a trained \textit{score network}. Finally, the idea of running LMC towards a changing target distribution is related to works on annealing and tempering \citep{Marinari1992, HAJEK1989443}. 
% In some such works, a temperature parameter $T$ is introduced and the distributions $e^{\frac{\log p}{T}}$ are considered as a means to sampling from $p$. At high temperatures, particles can mix faster and capture the global distribution approximately, while at low temperatures the particles can finely capture the local structure of $p$. 
One can think of DDPM \citep{ho2020denoisingdiffusionprobabilisticmodels} as doing this using "heat" in a different way - by Gaussian convolution of the measures (adding heat to the particles).

\textbf{Posterior Sampling:} This is a very active area of research, with a number of different approaches. Some methods try to estimate the posterior score $\nabla \log p_t(x_t|y)$ directly \citep{chung2022diffusionposteriorsamplinggeneral, stsl, song2022solvinginverseproblemsmedical}; we refer the reader to \citet{daras2024survey} for a more extensive treatment. The barrier for provable results with these methods is that getting the scores for the noisy posteriors exactly can be computationally intractable. Others use a sequence of operations alternatingly aligning the iterate with the measurement and prior \citep{cordero2025non, xu2024provablyrobustscorebaseddiffusion, wu2024practicalasymptoticallyexactconditional, rout2024rbmodulationtrainingfreepersonalizationdiffusion}. These are variants of ``Split-Gibbs'' sampling, which has a biased stationary distribution to which there are generally asymptotic convergence results, but no finite time, or even unbiased, guarantees. An exception is \citet{wu2024practicalasymptoticallyexactconditional}, which gets an ``average'' Fisher Divergence guarantee. There are also particle filtering methods \citep{chung2022diffusionposteriorsamplinggeneral, dou2024diffusion}, which use Sequential Monte Carlo to estimate the posterior using a set of particles. Here the guarantees are in the limit as the number of particles grows to infinity. Indeed, formal guarantees appeared to be elusive, and a result of \citet{gupta} showed that posterior sampling is intractible in the worse case under the existence of a one way function. More recently \citet{bruna2024posteriorsamplingdenoisingoracles} showed that posterior sampling can also be reduced to sampling from an ill-conditioned ising model, which is known to be impossible unless $\textsf{NP}=\textsf{RP}$. 

\textbf{Fisher Divergence bounds:} In the classical (that is, without a trained score network) sampling literature, recently \citet{balasubramanian2022theorynonlogconcavesamplingfirstorder, wibisono2025mixingtimeproximalsampler} proposed using Fisher Divergence to capture the phenomenon of metastability, which can be thought of as a type of approximate first order convergence. 
\section{Background}
\textbf{Gradient Flows: }
Consider a Markov process $X_t$ described by the SDE below. Let $\rho_t$ denote the law of $X_t$, and let $B_t$ denote a Wiener process. The measure $\rho_t$ can be thought of as evolving according to a vector field $v_t$. This flow can be expressed using the Fokker-Planck equation as shown to the right below.
\begin{equation}
\tag{Fokker-Planck}
dX_t = v_t(X_t)~ dt + \sqrt{2}dB_t \iff\partial_t \rho_t = -\nabla\cdot(\rho_tv_t)+\Delta\rho_t
\label{eq:FP}
\end{equation}
An absolutely continuous path $t \mapsto \rho_t$ is \textit{generated} by $v_t$ if the \ref{eq:FP} equation is satisfied. Also, for any absolutely continuous path, there is a canonical ``minimal'' velocity field that generates it. We refer the reader to \citet{ambrosio} for a detailed exposition.

\textbf{Langevin Dynamics:} 
Langevin Dynamics refers to the SDE
\begin{equation}
\tag{Langevin}
dX_t = \nabla \log \pi(X_t)~ dt + \sqrt{2}dB_t \iff\partial_t \rho_t = \nabla\cdot (\rho_t ~\nabla \log \frac{\rho_t}{\pi})
\label{eq:LD}
\end{equation}
It was noted in \citet{jordan} that the law of the process is a gradient flow for the $\textsf{KL}$ divergence functional $\textsf{KL}(\cdot || \pi)$ in the space of probability measures endowed with a Wasserstein metric. Convergence of $\rho_t$ to $\pi$ is characterized by a log-Sobolev inequality (\textsf{LSI}). Let $\textsf{FI}$ denote the Fisher divergence (defined below), then the $\textsf{LSI}$ states 
\begin{equation}
\tag{$\alpha_{\pi}\text{-}\textsf{LSI}$}
\forall ~\rho, ~\textsf{KL}(\rho||\pi)\le \frac{1}{\alpha_{\pi}}~\textsf{FI}(\rho || \pi) \hspace{1cm} \textsf{FI}(\rho||\pi) = \E_{\rho}\Vert \nabla \log \frac{\rho}{\pi}\Vert^2
\label{eq:LSI}
\end{equation}
While log-Sobolev inequalities are usually difficult to establish tightly, one can show that a measure whose negative log-density is $\frac{1}{\alpha_\pi}$-strongly convex satisfies \ref{eq:LSI} \citep{bakry:hal-00929960}. If a measure $\pi$ satisfies a log-Sobolev inequality, one can show that Langevin Dynamics enjoys linear convergence in $\textsf{KL}$ \citep{vempala2022rapidconvergenceunadjustedlangevin}, specifically that 
$$\KL{\rho_t}{\pi}\le e^{-2\alpha_\pi t}\KL{\rho_0}{\pi}$$
However, even for ``simple'' distributions like a mixture of two well-separated Gaussians, the \textsf{LSI} could have a very bad constant (in this case, exponentially small in the separation; see for instance Remark 3 in \citet{chen2021dimensionfreelogsobolevinequalitiesmixture}). This often prohibits the use of Langevin Monte Carlo in modern applications.

%{\color{blue} KS: We may want to say that along the flow the $\texttt{FI}$ decreases monotonically to zero as this would be the norm of the gradient along the gradient flow without any LSI assumptions. It seems this is said later. May be that is ok.}

\textbf{Reversing the Flow:}
Modern score based generative models sample from a prior distribution $\pi$ by training a neural network to learn the flow that would \textit{reverse} the forward Gaussian Langevin flow. Langevin Dynamics for a Gaussian is also called the Ornstein–Uhlenbeck (OU) process
\begin{equation}
    \tag{OU}
    dX_t = -X_tdt + \sqrt{2}dB_t \iff \partial_t \rho_t = \nabla \cdot\left(\rho_t(\nabla\log \rho_t+x)\right)
    \label{eq:OU}
\end{equation}
Sampling $X_0\sim \pi_0$ and running the above SDE for time $t$ results in $X_t\sim \pi_t$. We note that $\pi_t$ can explicitly be written as: $\pi_t(x) = e^{td}\pi_{0}(e^{t}x)*\gamma$. From classical literature on reversing SDEs \citep{Anderson1982ReversetimeDE}, we know the following: 
\begin{equation}
\label{eq:time_reversal}
\underbrace{dX_t = -X_t dt + \sqrt{2}dB_t}_{\text{forward process}}  \iff \underbrace{dX_t^{\leftarrow} = \left(X_t^{\leftarrow} + 2 \nabla \log \pi_t(X_t^{\leftarrow})\right) dt + \sqrt{2}dB_t}_{\text{reverse process}}.
\end{equation}
One can begin at $X^{\leftarrow}_0 \sim \pi_T$ and run the reverse process to get $X^{\leftarrow}_t\sim \pi_{T-t}$ until $X^{\leftarrow}_T\sim \pi_0$. In fact, the random variables $X_t$ and $X^{\leftarrow}_{T-t}$ have the same law. The key to being able to implement this process is the use of the \textit{score} $\nabla \log \pi_t$. Due to Tweedie's lemma~\citep{tweedie}:
\begin{equation}
    \tag{Tweedie}
    \sqrt{1-e^{-2t}} ~\nabla \log \pi_t(x) = e^{-t}x_t-\E\left[x|e^{-t}x+\sqrt{1-e^{-2t}}\eta = x_t\right] \hspace{1cm} \eta\sim\gamma
    \label{eq:tweedie}
\end{equation}
These can be learned using a simple variational characterization of least squares regression. Consider a family of models $s_\theta(x, t)$ parameterized by $\theta$. We find
\begin{equation}
\theta^* = \argmin \E_{x, \eta} \Vert x - s_\theta(x+\sigma_t\eta, t)\Vert^2
\label{eq:score_matching}
\end{equation}
From here, we can estimate the score $\nabla\log \pi_t(x)$ as $\nabla\log \pi_t(x) \approx \frac{s_{\theta^*}(x, t)-x}{\sigma_t^2}$ \footnote{There is a line of work analyzing the propagation of score matching errors into the sampling distribution \citep{chen2023samplingeasylearningscore, lee2023convergencescorebasedgenerativemodeling}. Because of our interest in the posterior sampling problem, we will assume that we have the exact prior score.}.

\textbf{Annealed Langevin: }Rather than using the reverse process specified above, one could use an \textit{``annealed''} Langevin Dynamics.
%meaning the drift term of the SDE is the score of a Gaussian convolved prior (see also discussion in Remark~\ref{rem:ALMCdisambiguation}).
Unlike traditional Langevin where the drift of the SDE is given by the score of a single density, here the density evolves over time as follows:
\begin{equation}
    \tag{Annealed Langevin}
    dX_t = \nabla \log \pi_t(X_t)dt + \sqrt{2}dB_t
    \label{eq:ALD}
\end{equation}
Unlike the true reverse SDE, this annealed Langevin incurs a bias that stems from the fact that it never quite reaches $\pi_t$ by time $t$. The bias is characterized in \citet{guo2024provable}, \citet{cordero2025non}, where it is shown to be related to the \textit{action} of the path $\pi_t$ through the space of distributions. Specifically, for the path $\pi_t$ described above, the action is bounded in \citet{cordero2025non} by a quantity that is independent of any functional inequalities.

In fact\footnote{We use a superscript here to emphasize that $\pi^t$ need not be the marginal of an OU process, like $\pi_t$.}, any path $t\mapsto \pi^t$ with velocity field $v_t$ can be efficiently sampled from by starting with $X_0\sim \pi^0$ and running $\dot X_t = v_t(X_t)\implies X_t\sim \pi^t.$ However, for an arbitrary path $t\mapsto \pi^t$, it may not be easy to initialize $X_0\sim \pi^0$, or to compute the corresponding velocity field $v_t$. Implementing the ODE also incurs a discretization bias. 
\begin{remark}[Action]
    We can think of the action of a path as giving the run time of sampling along it using annealed Langevin. Different paths connecting $\pi^0$ and $\pi^T$ coming from different fields $v_t$ give different actions. Some $v_t$ lead to paths that are fast but difficult to compute, like the optimal transport path, or the \textit{constant speed geodesic} connecting $\pi^0$ to $\pi^T$. This path can be shown to have the least action over all paths, but to implement this we would need to compute the optimal transport map. On the other hand, Annealed Langevin has a large action but could be easier to implement.
\end{remark}

\textbf{Discretization: }Langevin Monte Carlo is an efficient discretization of Langevin Dynamics, where the drift is fixed over small intervals of time. Suppose we run our algorithm for time $T$, and suppose our discretization step size is $\delta$. Let $B_t$ denote a Wiener Process. We have the following ``interpolated'' process
$$dX_t = \nabla \log \pi(X_{k\delta})~dt + \sqrt{2}~dB_t, \hspace{1cm}t\in [k\delta, (k+1)\delta)$$
We can integrate this between $k\delta$ and $(k+1)\delta$ to get
\begin{equation}\label{eq:LMC}
\tag{LMC}
X_{(k+1)\delta} = X_{k\delta}+\delta \nabla \log \pi (X_{k\delta})+\sqrt{2}(B_{(k+1)\delta}-B_{k\delta})
\end{equation}
We refer to this as running LMC {\em towards} $\pi$. Similarly, \ref{eq:ALD} has the corresponding interpolation $dX_t = \nabla \log \pi_{k}(X_{k\delta})~dt + \sqrt{2}~dB_t$ for $t\in [k\delta, (k+1)\delta)$, which can be discretized as 
\begin{equation}
\tag{Annealed LMC}
\label{eq:AnnealedLMC}
X_{(k+1)\delta} = X_{k\delta}+\nabla \log \pi_{k\delta}(X_{k\delta}) \delta+\sqrt{2\delta}~(B_{(k+1)\delta}-B_{k\delta})
\end{equation}

\begin{remark}[Annealing]\label{rem:ALMCdisambiguation} There are two notions of annealing in the context of sampling.
%, both of which are often referred to as Annealed Langevin Dynamics. 
The first is {\em temperature} annealing, where the diffusive term of the SDE (\ref{eq:LD}) is modified to be $\sqrt{ \nicefrac{2}{\log(2 + t)}}$ \citep{geman}. Second is Gaussian annealing, where the diffusive term is fixed, but the drift term of (\ref{eq:LD}) is modified by using the score of a smoothed prior. Indeed, the continuous time variant of DDPM \citep{song2020generativemodelingestimatinggradients} is such an annealing and 
%, with their drift term being affine in such an annealed score.
%
%Equation (\ref{eq:ALD}) and the corresponding 
Algorithm~\ref{alg:AULA} is an archetype of the latter type of annealing for posterior sampling.
%% , and Algorithm~\ref{alg:AULA} we study as a discrete-time archetype of (\ref{eq:ALD}) is  termed as Annealed Langevin Monte Carlo.
\end{remark}

%In \citep{chen2023samplingeasylearningscore} a corollary of Girsanov's theorem \citep{girsanov} is used to bound this difference for certain score based generative algorithms. We state it below because it will be useful for us too. Consider two SDEs
% \begin{equation}\label{eq:grisanov_two_SDEs}
% \begin{split}
%     dX_t &= v_t(X_t)~dt+\sqrt{2}~dB_t \iff \partial_t \rho_t = -\nabla\cdot(\rho_t v_t)+\Delta\rho_t\\
%     dX_t'&=v_t'(X_t')~dt+\sqrt{2}~dB_t\iff \partial_t \rho_t' = -\nabla\cdot(\rho_t' v_t')+\Delta\rho_t'
% \end{split}
% \end{equation}
% The $\texttt{KL}$ divergence between $\rho_t$ and $\rho_t'$ can be bounded as below:
% \begin{lemma}[Lemma 1 of \citep{guo2024provable}]\label{lem:girsanov}
%     Consider $X_t, X_t'$ defined in \ref{eq:grisanov_two_SDEs}, beginning with $X_0\sim \rho_0, X_0'\sim \rho_0'$. Then we have 
%     $$\KL{\rho_t}{\rho_t'}= \KL{\rho_0}{\rho_0'}+\frac{1}{4}\E_{\{X_t\}} \int_0^T \Vert v_t(X_t)-v_t'(X_t)\Vert^2~dt$$   
% \end{lemma}
% This was also recently used to bound the bias of some Annealed Langevin Monte Carlo algorithms in sampling from $\mu_t$\citep{guo2024provable, cordero2025non}. 
% Using Girsanov's theorem, it was shown that $\KL{\mu_t}{\tilde\mu_t}\lessapprox \mathcal{O}\left(\int\Vert v_t\Vert^2_{L_2(\mu_t)}dt\right)$.

\subsection{Local Mixing and Metastability}
\label{sec:metastability}
%The convergence of Langevin Dynamics and LMC in $\textsf{KL}$ is characterized by a log-Sobolev inequality for the target measure. 
Recall the interpretation of Langevin Dynamics as gradient flow in the space of measures towards a minimum of the functional $\KL{\rho}{\pi}$. There is only one global minima corresponding to the correct distribution: $\KL{\rho}{\pi} = 0\implies \rho = \pi$. If we view the relative Fisher information $\FI{\rho}{\pi}$ as a gradient norm in this analogy, one can ask whether we can quickly find a first order \textit{approximately} stationary point $\rho$ satisfying $\FI{\rho}{\pi}<\epsilon$. It is shown in \citet{balasubramanian2022theorynonlogconcavesamplingfirstorder} that LD achieves $\FI{\overline \rho_t}{\pi}  < \epsilon$ in polynomial time $\mathcal{O}(d^2/\epsilon^2)$ for the \textit{average} iterate, that is $\overline \rho = \frac{1}{T}\int \rho_t dt$. We remark that this convergence is independent of $\textsf{LSI}$, but describes a weaker type of convergence as discussed below.

%% Sanjay removed this footnote 
%% \footnote{$\textsf{FI}$ convergence implies $\textsf{KL}$ convergence under $\textsf{LSI}$; however we would also directly have convergence in $\textsf{KL}$ under $\textsf{LSI}$.}. 

There is a sense in which $\textsf{FI}$ convergence ensures local mixing within ``modes'' of a distribution. 
Take two distributions $\gamma_1, \gamma_2$. Let $\gamma_{1|\mathcal{B}_\varepsilon (x)}$ (respectively,  $\gamma_{2|\mathcal{B}_\varepsilon (x)}$) denote the distribution $\gamma_1$ conditioned on being within a ball of radius $\varepsilon$ around the point $x$. In Lemma \ref{lem:localFI}, we show that for small enough $\varepsilon$:
\begin{equation}\label{eq:local-FI}
\tag{Pointwise \textsf{LSI}}
\E_{X\sim \gamma_1}\KL{\gamma_{1|\mathcal{B}_\varepsilon (X)}}{\gamma_{2|\mathcal{B}_\varepsilon (X)}}  \lesssim \varepsilon \FI{\gamma_{1}}{\gamma_{2}}
\end{equation}
In other words, {\em conditioned on being within a small radius of any point}, the two distributions match \textit{in $\textsf{KL}$}, on average\footnote{If the standard $\textsf{LSI}$: $\KL{\gamma_1}{\gamma_2}\lesssim  \FI{\gamma_1}{\gamma_2}$ were to hold, that would be the ``global'' analog of this result. However, the setting of regions of low density in between high density regions that is typical of multimodal distributions precludes such an LSI.}. In a distribution with multiple separated modes, this means that conditioned on any specific mode, the sampler is accurate, even in $\textsf{KL}$. For intuition, consider a distribution that has multiple modes (e.g., a mixture of Gaussians). The $\textsf{FI}$ convergence implies that if initialized close to one of the modes, LMC will converge quickly to a sample ``from this mode''.
Notably, however, in this setting, $\textsf{FI}$ convergence is not very sensitive to the {\em weights} of the modes because the $\textsf{FI}$ involves a gradient operation on the log-density, which makes it insensitive to mode weights. Thus, this is too weak to ensure a global convergence.
%(see Lemma \ref{lem:FIMoG}, similar to Proposition 1 of \citep{balasubramanian2022theorynonlogconcavesamplingfirstorder} or Section \ref{sec:FIImplication}). 
We further discuss this in Remark~\ref{sec:FIImplication} in the context of posterior sampling.

\subsection{Posterior Sampling}\label{sec:hardness}
%
%\feedbackS{This section to be rewritten.}
%
The discussion thus far has been about the classical sampling problem -- we want to sample from $\pi$ given $\nabla \log \pi$ or $\nabla \log \pi_t$. In the posterior sampling problem, we also have a likelihood $R$, and we would like to sample from $\mu =\nicefrac{\pi e^{-R}}{\int \pi e^{-R}}$. There is no immediate way to use the prior smoothed scores to get the posterior smoothed scores. Many approaches to posterior sampling (Section 3 of \citet{daras2024survey}) proceed by trying to estimate $\nabla \log (\mu_0)_t$, but none establish a complete formal guarantee.

\begin{SCfigure*}[0.9][t]
    \centering
    \includegraphics[trim = {0cm 0.25cm 0cm 0.25cm}, clip, width=0.5\linewidth]{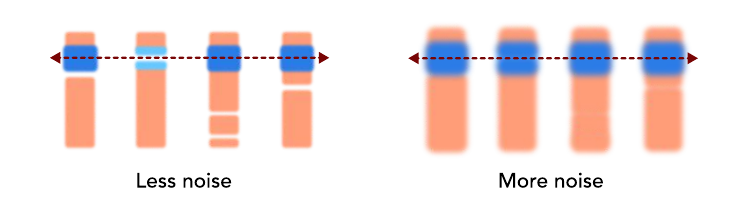}    
    \caption[leftcaption]{Hardness of posterior sampling: In this instance, the prior is represented by the orange region, we measure a coordinate specified by the red arrow. The posterior is represented by the blue region.}    
    \label{fig:hard_instance}
\end{SCfigure*}

% \begin{figure}
%     \centering
%     \includegraphics[width=0.8\linewidth]{images/hard_instancev2.pdf}
%     \caption{Figure to accompany the discussion in Section \ref{sec:hardness}. In this instance, the prior is represented by the orange region, we measure a coordinate specified by the red arrow. The posterior is represented by the blue region.}
%     \label{fig:hard_instance}
% \end{figure}

In fact, the \textit{hardness} of sampling from a posterior has been established in recent works. \citet{gupta} describes an instance in which sampling from the prior is tractable yet sampling from a posterior derived from a noisy linear measurement is intractable under a cryptographic hardness assumption (specifically, the existence of a strong one way function). \citet{bruna2024posteriorsamplingdenoisingoracles} reduces the posterior sampling problem to an Ising model in which the prior is a uniform distribution of the hypercube and shows hardness under standard computational hardness results. We will discuss this difficulty intuitively using the Figure \ref{fig:hard_instance}. 

Consider the following posterior sampling instance. The prior consists of a number of modes (in Figure~\ref{fig:hard_instance}, there are four, one corresponding to each of the vertical ``bars''). The measurement is the vertical coordinate (one such measurement is represented by the red dotted line). In our case, the leftmost bar and the two to the right are consistent with the measurement, while the second from the left is not. However, we cannot use the scores $\nabla\log\pi_t$ from high noise levels $t$ to tell whether a specific mode is consistent. That is, high noise levels scores cannot distinguish between the true prior and a prior with a different pattern of consistency, say one in which every mode is consistent. For distinguishing this, only the low noise level scores are useful, but usually by the time we are using the low noise level scores in an algorithm, we have already committed to a mode and cannot drift our samples to other modes.

This suggests that we look at posterior sampling at two scales. At a local scale, the low noise level prior scores $\nabla\log\pi_t$ (combined with the gradients of the log-likelihoods $\nabla R(x)$) contain enough information to sample correctly conditioned on any small neighborhood, and the locality of such a task ensures that this can be achieved by an SDE in polynomial time. The difficulty with sampling truly in $\textsf{KL}$ is that these local guarantees cannot be accurately stitched together. We will see that the high noise level scores can be used to "warm-start" the local sampling described above.

\section{Annealed Langevin Monte Carlo for Posterior Sampling}\label{sec:annealedLMC}
%
%Annealing has been used in various contexts for sampling (see the corresponding discussion in Section \ref{sec:related_works} and Remark \ref{rem:ALMCdisambiguation}). The general idea is to find a path $\pi^t$ that exploits the ease of transitioning from $\pi^t\to \pi^{t+\delta}$, such that $\pi^0$ is easy to sample from, and $\pi^T$ is the desired distribution. While arguably both simulated Annealing \citep{geman, HAJEK1989443} and DDPM \citep{ho2020denoisingdiffusionprobabilisticmodels} are examples of such algorithms, an application most relevant to posterior sampling is 
%
%Inspired by this, w
We construct a path $t\mapsto \mu_t$ of posteriors, with $\mu_t\propto p_te^{-R}$ (that is, posteriors of noised priors). In Figure \ref{fig:alg_overview}, this curve is represented by the blue curve between $\mu_{T_{ws}}$ and $\mu_{0}$. This path is absolutely continuous (see Lemma \ref{lem:absolutely_continuous}) and thus generated by some velocity field $v_t$. However, because we do not know $v_t$, we cannot use this field to traverse the curve. Our results bound the \textit{action} of this path to show that \ref{eq:AnnealedLMC} tracks a discretization of this continuous path.
%% using \ref{eq:AnnealedLMC}. 
We denote a sample at time $t$ by $X_t$, and the associated distribution by $\rho_t.$ There are two phases to our algorithm, as below.
\begin{itemize}
\item \textbf{Warm Start:} We sample our initial point $X_0$ from a standard Gaussian $\gamma$, and run \ref{eq:LMC} for target $\gamma e^{-R}/Z$ for $\log \frac{1}{\epsilon}$ iterates. Because $R$ is convex, $\gamma e^{-R}$ is strongly log-concave, so efficient convergence to within $\epsilon$ in $\textsf{KL}$ follows from prior work \citep{vempala2022rapidconvergenceunadjustedlangevin}. We can think of this warm start as biasing our samples towards the measurement. At this point, we have not aligned our samples at all with the prior. 
\item \textbf{Annealing: } Starting from $\mu_{T_{ws}}$ with $T_{ws}\asymp \frac{1}{\epsilon^2}\log \frac{1}{\epsilon}$, we run \ref{eq:AnnealedLMC} to track the distributions $\mu_t$ from $T_{ws}$ to $0$. We use a parameter $\kappa$ to control the rate at which we move along this path. Moving slowly results in better agreement between the law of the iterate and the corresponding target.
\end{itemize}
\begin{algorithm}
\caption{Annealed Langevin Monte Carlo}
\begin{algorithmic}[1]\label{alg:AULA}
\REQUIRE $x_T \sim \gamma$, rate $1/\kappa$, Warm Up period $T$, Warm Start period $T_{ws}$, step size $\delta$
\ENSURE $x_0$
\STATE $\triangleright$ Warm Start, sample $X_T\sim \mu_T\approx\mu_\infty$
\FOR{$i = 1$ to $T$}
    \STATE Sample $\eta_i\sim \gamma$
    \STATE $z_i = z_{i-1} - \delta (z_{i-1} + \nabla R(z_{i-1})) + \sqrt{2\delta} ~ \eta_i$
\ENDFOR
\STATE $\triangleright$ Annealing phase, track distributions $\{\mu_t\}$ from $T_{ws}\to 0$
\STATE $x_{T_{ws}\kappa/\delta} = z_{T}$
\FOR{$i = T_{ws}\kappa/\delta$ to $0$}
    \STATE Sample $\eta_i\sim \gamma$
    \STATE $x_{i-1} = x_{i} + \delta (\nabla \log p_{\frac{i\delta}{\kappa}}(x_{i}) - \nabla R(x_{i})) + \sqrt{2\delta} ~ \eta_i$
\ENDFOR
\end{algorithmic}
\end{algorithm}
\begin{SCfigure}[0.9][ht]
\label{fig:alg_overview}
    \centering
    \caption{Beginning at $\gamma$, we use LMC to sample an initialization close to $\mu_\infty$. We then run the Annealed LMC tracking $\mu_t$. The blue path represents the target distributions, first the Langevin path from $\gamma \to \mu_\infty$, followed by $\{\mu_t\}$ from $\mu_\infty$ to $\mu_0$ (the true posterior). The orange curve indicates the laws of the iterates of LMC towards $\mu_\infty$ in the first phase, and the laws of the iterates of Annealed LMC towards $\{\mu_t\}$ for the second phase.}
    \includegraphics[width=0.45\linewidth]{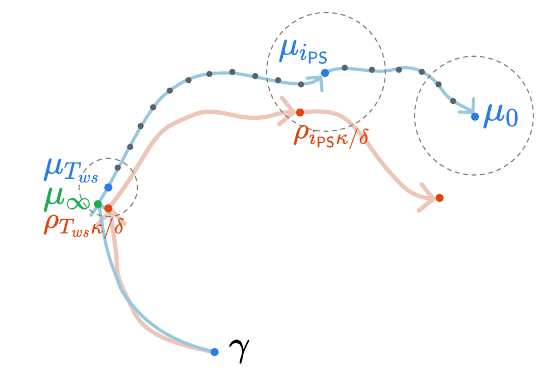}    
\end{SCfigure}
\textbf{A note on the rate $\kappa$:} From Lemma \ref{lem:warmstart} we know that we can sample from close to $\mu_{T_{ws}}$ in $\textsf{KL}$ for $T_{ws}\asymp\frac{1}{\epsilon^2}\log \frac{1}{\epsilon}$ using LMC for target $\mu_\infty$. Rather than running the annealing backward at the same rate as the forward OU process, we slow it down\footnote{This is inspired by a similar rate parameter in 
\citep{wu2024practicalasymptoticallyexactconditional}.} by a factor of $\kappa$. Concretely, our iterates go from $X_{T_{ws}\kappa/\delta} \to X_0$, the annealing targets go from $\mu_{T_{ws}}\to \mu_0$ in the continuous process, but in the discretized algorithm, the iterate $X_{i-1}$ uses target $\mu_{i\delta/\kappa}$. Finally, the law of the iterates $X_i\sim \rho_i$ goes from $\rho_{T_{ws}\kappa/\delta}$ to $\rho_0$.

\textbf{The pathology of $t\mapsto \mu_t$:} It is illustrative to contrast the path $t\mapsto \mu_t$ with the path $t\mapsto p_t$ from a recent application of \ref{eq:AnnealedLMC} for sampling from the \textit{prior}~\citep{cordero2025non}. The path $t\mapsto p_t$ can be followed efficiently because the curve $p_t$ is ``continuous'' in that the forward process is just an OU process with $W_2(p_t, p_{t+\delta})\sim \delta$, resulting in an action that can be bounded. However, even when $p_t$ is close to $p_{t+\delta}$ we need not have $\mu_t$ close to $\mu_{t+\delta}$. A simple example is that of Figure~\ref{fig:example}. We have a prior represented in orange, a noisy measurement represented by the red arrow, a likelihood represented by the gray region, and a posterior represented by the blue shaded region. On the right side, the smaller mode is quite likely under the posterior. On the left side for a lower noise level, that mode has all but vanished from the posterior. This results in two distributions $\mu_t$ and $\mu_{t+\delta}$ such that $\delta$ is small, $p_t$ is close to $p_{t+\delta}$ in Wasserstein, but $\mu_t$ is not close to $\mu_{t+\delta}$. This ``discontinuity'' is the reason we cannot get a $\textsf{KL}$ bound for $\mu_0$. However, the noising process introduces enough regularity that we can get bounds for the Wasserstein derivatives up until small $t$. Furthermore, the changes in the scores $\nabla \log \mu_{t+\Delta} - \nabla \log \mu_t$ are better behaved than changes in the log-probabilities $\log \mu_{t+\Delta} - \log \mu_t$. This allows us to get guarantees in $\textsf{FI}$ rather than $\textsf{KL}$ for $\mu_0$.
\begin{SCfigure*}[0.9][ht]
    \centering
    \includegraphics[width=0.39\linewidth]{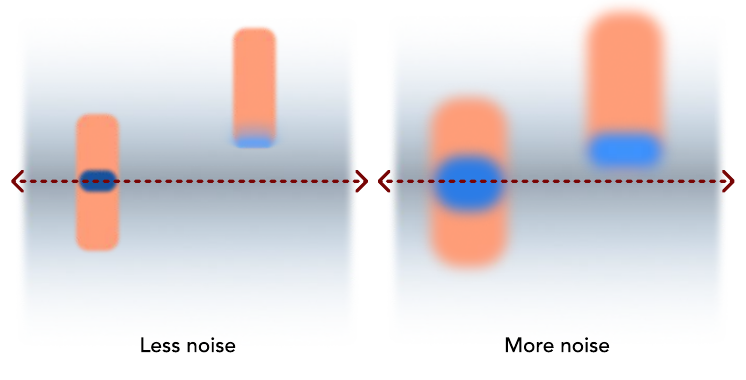}    
    \caption[leftcaption]{%Figure to accompany discussion in Section \ref{sec:annealedLMC}. 
    ``Discontinuity'' of $\{\mu_t\}$: The prior consists of two vertical orange bars. We obtain a measurement, represented by the dotted line, of the vertical coordinate corrupted by some Gaussian noise. The log-likelihood is represented by the colored gradient, with dark representing regions of higher likelihood. Like the prior, the posterior represented in blue is bimodal, with one mode corresponding to each of the modes of the prior.}   
    \label{fig:example}
\end{SCfigure*}

\section{Results}\label{sec:results}
In this section, we will describe our main results. Most proofs have been deferred to the appendices, where the theorem statements contain the exact polynomial dependencies.
\begin{assumption}\label{assumptions}
    We make the following assumptions:
    \begin{itemize}
        \item[(i)] The prior $p_0$ is $\mathfrak{m}$-sub-Gaussian, with zero mean.
        \item[(ii)] the score $\nabla_x \log p_0(x)$ is $\mathfrak{L}-$Lipschitz.
        \item[(iii)] The log-likelihood function $R(x)$ is smooth, convex, and bounded below by $0$ such that there exists $\mathfrak{x}, \Vert \mathfrak{x}\Vert \le \mathfrak{D}, R(\mathfrak{x}) = 0$, and $\nabla^2 R \preceq \mathfrak{R}I$.
    \end{itemize}
\end{assumption}

\begin{remark}\label{rem:assumptions}
    The first assumption is generally satisfied by natural distributions, for instance, by images where each pixel is bounded intensity. The second assumption is standard in the literature \citep{chen2023samplingeasylearningscore, lee2023convergencescorebasedgenerativemodeling}. The third assumption establishes a regularity for the likelihood. In the case of noisy linear measurements $y=Ax+\sigma\eta$ for $\eta\sim \gamma$, $\mathfrak{R}\le \Vert A\Vert^2/\sigma^2$.
\end{remark}

{\textbf{Warm Start:}} We begin by getting a sample from (close to) the limiting distribution $\mu_\infty = \lim_{t\to \infty}\mu_t$. We incur errors because we stop in finite time, and due to discretizations.
\begin{restatable}{lemma}{Warmstart}\label{lem:warmstart}
    Take $T = \mathcal{O}(\frac{d}{\epsilon^2}\log\frac{\KL{\gamma}{\mu_\infty}}{\epsilon})$ and $T_{ws} = \mathcal{O}\left(\log \frac{d}{\epsilon}\right)$. The \textbf{Warm Start} phase of Algorithm \ref{alg:AULA} results in a sample $X_{T}$ satisfying $\KL{\mu_{T_{ws}}}{\text{Law}(X_{T})}\le \epsilon$.
\end{restatable}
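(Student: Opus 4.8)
The plan is to recognise the warm-start recursion $z_i = z_{i-1}-\delta(z_{i-1}+\nabla R(z_{i-1}))+\sqrt{2\delta}\,\eta_i$ as exactly \ref{eq:LMC} for the fixed target $\mu_\infty\propto\gamma\,e^{-R}$, whose potential is $-\log\mu_\infty(x)=\tfrac12\|x\|^2+R(x)+\mathrm{const}$, and then to apply the standard discretised-Langevin-under-$\textsf{LSI}$ machinery, reconciling along the way two mismatches between what that machinery gives and what the statement asks for. First I would certify the regularity of $\mu_\infty$: by Assumption \ref{assumptions}(iii), $R$ is convex, smooth, and $\nabla^2 R\succeq 0$, so $-\log\mu_\infty$ is $1$-strongly convex and smooth, and by the Bakry--\'Emery criterion \cite{bakry:hal-00929960} $\mu_\infty$ satisfies a $1$-$\textsf{LSI}$ and (being strongly log-concave) Talagrand's $T_2$ inequality. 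A short computation also gives $\KL{\gamma}{\mu_\infty}\le\E_\gamma[R]<\infty$, finite and polynomial in the fixed problem constants since smoothness of $R$ forces at most quadratic growth; this is the quantity inside the logarithm in the statement.

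Next I would invoke the Vempala--Wibisono guarantee \cite{vempala2022rapidconvergenceunadjustedlangevin}: with step size $\delta$ taken polynomially small in $\epsilon/d$ and with $T$ iterations as in the statement, $\KL{\text{Law}(X_T)}{\mu_\infty}\le\epsilon_1$ for a target accuracy $\epsilon_1=\mathrm{poly}(\epsilon/d)$ that I would fix at the very end; the $\log(\KL{\gamma}{\mu_\infty}/\epsilon)$ factor is the usual warm-start term, here with $\rho_0=\gamma$. Talagrand's inequality then upgrades this to $W_2(\text{Law}(X_T),\mu_\infty)\le\sqrt{2\epsilon_1}$.

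In parallel I would handle the first mismatch, namely that the true annealing target $\mu_{T_{ws}}$ is not $\mu_\infty$. Writing $\KL{\mu_{T_{ws}}}{\mu_\infty}=\int\mu_{T_{ws}}\log\frac{p_{T_{ws}}}{\gamma}+\log\frac{\int\gamma e^{-R}}{\int p_{T_{ws}}e^{-R}}$, both terms are dominated by $\KL{p_{T_{ws}}}{\gamma}$ together with the positivity of $\int\gamma e^{-R}$. Since the forward OU/heat flow contracts the prior to $\gamma$ geometrically, Assumption \ref{assumptions}(i) gives $\KL{p_{T_{ws}}}{\gamma}\lesssim e^{-2T_{ws}}\,\mathrm{poly}(d,\mathfrak{m})$ (this can also be read off the path-regularity estimates behind Lemma \ref{lem:absolutely_continuous}), so $T_{ws}=O(\log\frac{d}{\epsilon})$ places $\mu_{T_{ws}}$ within $\mathrm{poly}(\epsilon/d)$ of $\mu_\infty$ in $\textsf{KL}$, hence also in $W_2$.

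Combining, $W_2(\mu_{T_{ws}},\text{Law}(X_T))$ is $\mathrm{poly}(\epsilon/d)$-small, and the hard part is the second mismatch: converting $W_2$-closeness into the \emph{directed} quantity $\KL{\mu_{T_{ws}}}{\text{Law}(X_T)}$, where the iterate law sits in the second slot whereas the Langevin analysis naturally bounds the iterate-versus-target direction, so the triangle inequality is not available for free. My approach would exploit the Gaussian smoothing built into \ref{eq:LMC}: the final step writes $\text{Law}(X_T)$ as a convolution with $\gamma_{2\delta}$, so it has an $O(1/\delta)$-Lipschitz score and sub-Gaussian tails no heavier than those of $\mu_\infty$ (which are Gaussian by strong log-concavity and themselves dominate those of $\mu_{T_{ws}}$ since $R\ge0$). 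Together with the $W_2$-closeness this yields a pointwise comparison $\text{Law}(X_T)(x)\ge(1-\mathrm{poly}(\epsilon/d))\mu_\infty(x)$ on a ball of polynomial radius and a tail lower bound on $\log\frac{\text{Law}(X_T)}{\mu_\infty}$ that is integrable against $\mu_{T_{ws}}$, which together bound $\E_{\mu_{T_{ws}}}[\log\frac{\mu_{T_{ws}}}{\text{Law}(X_T)}]$ by $\epsilon$. Tracking how many powers of $d/\epsilon$ this conversion costs (along with the coupled choices of $\delta$, the ball radius, and the tail cutoff) is what pins down $\epsilon_1$, and hence the $d^3/\epsilon^2$ iteration count; that bookkeeping is routine but is where essentially all the polynomial factors go.
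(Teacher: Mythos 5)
Your high-level skeleton matches the paper's: run Vempala--Wibisono to get $\KL{\text{Law}(X_T)}{\mu_\infty}$ small, reverse the argument order of the \textsf{KL}, and transfer from $\mu_\infty$ to $\mu_{T_{ws}}$. But the place you flag as ``the hard part'' --- converting $W_2$-closeness into the directed bound $\KL{\mu_{T_{ws}}}{\text{Law}(X_T)}$ --- is where your proposal has a genuine gap, and it is exactly where the paper introduces a clean lemma that you are, in effect, trying to re-derive by hand.

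The paper reverses the direction by invoking the HWI inequality (Lemma~\ref{lem:HWI}), $\KL{\mu_\infty}{\text{Law}(X_T)} \le W_2(\text{Law}(X_T),\mu_\infty)\sqrt{\FI{\mu_\infty}{\text{Law}(X_T)}}$, then bounds the Fisher term by a polynomial in problem constants, and uses Talagrand to control $W_2$ by $\sqrt{\KL{\text{Law}(X_T)}{\mu_\infty}}$. The transfer from $\mu_\infty$ to $\mu_{T_{ws}}$ is then done in the density, via the bound $\sup_x|\log\mu_{T_{ws}} - \log\mu_\infty|$ from Lemma~\ref{lem:mu_t_tail}, which acts as a multiplicative change-of-measure factor. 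Your proposed substitute --- a pointwise comparison ``$\text{Law}(X_T)(x) \ge (1-\mathrm{poly}(\epsilon/d))\mu_\infty(x)$ on a ball'' plus an integrable tail bound --- is not justified: $W_2$-closeness does not in general yield pointwise density comparability, and the ingredients you cite (the $O(1/\delta)$-Lipschitz score of the last LMC iterate, sub-Gaussian tails) do not obviously deliver it without an argument of HWI-type strength. Making this rigorous would essentially reprove HWI, so the honest move is to cite it; as written, the step is a sketch where the paper has a lemma. In addition, your supporting tail claim --- that $\mu_\infty$ has tails ``dominating those of $\mu_{T_{ws}}$ since $R\ge 0$'' --- is wrong: $R$ appears identically in $\mu_{T_{ws}}\propto p_{T_{ws}}e^{-R}$ and $\mu_\infty\propto\gamma e^{-R}$, so positivity of $R$ says nothing about their ratio; what you actually need is control of $p_{T_{ws}}/\gamma$, which is what Lemma~\ref{lem:mu_t_tail} supplies via the global bound on $\partial_t\log\mu_t$.

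A minor structural difference that is fine either way: you transfer $\mu_{T_{ws}}\leftrightarrow\mu_\infty$ first (at the $W_2$ level, via OU contraction) and then reverse the \textsf{KL} direction, whereas the paper reverses first (at fixed target $\mu_\infty$) and transfers last (at the density-ratio level). Both orderings can be made to work, but the second transfer mechanism is the one for which the paper has the needed lemma in hand.
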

\begin{proof}[Proof Sketch]
    The Warm Start phase is LMC for the target $\mu_\infty$. Because $\gamma$ is {\color{black} strongly} log-concave, $R$ is convex, $\gamma e^{-R}$ is {\color{black} strongly} log-concave, so efficient sampling is possible. We can shift the guarantee to $\mu_{T_{ws}}$ because $\mu_\infty \approx\mu_{T_{ws}}$.
    %Because we will be tracking the path $\mu_t$ backward, we need to start at some finite $t$ in the next phase, and we use a weak triangle inequality $\KL{\mu_{T_{ws}}}{\rho_{T_1}}\le \KL{\rho_{T_1}}{\mu_{T_{ws}}}$.
\end{proof}
\textbf{Annealing Phase: }We can now begin our annealing towards the target distribution. If we traverse the annealed path $\mu_t \propto p_te^{-R}$, the $\textsf{KL}$ divergence between the law of the iterates $\rho_{t\kappa/\delta}$ and $\mu_t$ is $$\KL{\mu_t}{\rho_{t\kappa/\delta}}\lessapprox \KL{\mu_{T_{ws}}}{\rho_{T_{ws}\kappa/\delta}}+\mathcal{O}\left(\nicefrac{\int_t^{T_{ws}} \Vert v_t\Vert^2 ~dt}{\kappa}\right),$$ where $v_t$ denotes the velocity field that generates the path $\{\mu_t\}$. An important aspect of this phase is the rate $1/\kappa$ which slows traversal of the path $\{\mu_t\}$ allowing the iterates to better track the distribution.
\begin{restatable}{theorem}{KLBound}\label{thm:KLbound}
    Suppose we run \textbf{Warm Start} phase with $T = \mathcal{O}\left(d\kappa \log (\kappa \KL{\gamma}{\mu_\infty})\right), T_{ws} = \log\kappa d$, following which we run the \textbf{Annealing Phase} with $\delta=\kappa^{-\nicefrac{1}{4}}$. This results in a $\tau= \kappa^{-\nicefrac{3}{16}}$ satisfying 
    \begin{equation}
    \KL{\mu_{\tau}}{\rho_{\tau\kappa/\delta}}\le \text{poly}(d, 1/\kappa)%d^2\kappa^{-1/16}).
    \end{equation}
\end{restatable}
\begin{proof}[Proof Sketch]
    Important technical tools we use are bounds on the magnitude of the derivatives $\partial_t \log p_t, \partial_t\log \mu_t$ (Lemmas \ref{lem:log_prob_derivative} and \ref{lem:dtlog_mut_upper_bound}). These, together with Lemma \ref{lem:absolutely_continuous}, allow us to bound the metric derivative $\Vert v_t\Vert^2_{L_2(\mu_t)} = \lim_{\Delta\to 0} W_2(\mu_{t+\Delta}, \mu_t)/\Delta$, where $v_t$ is the drift implementing the path $\mu_t$. The dominant term in the $\textsf{KL}$ distance comes from the action $\int \Vert v_t\Vert_{L_2(\mu_t)}^2~dt$. 
\end{proof}
Theorem \ref{thm:KLbound} shows that we can track the annealed path up until $\tau$ defined above for a polynomial run time. Beyond that, $\rho_t$ does not track $\mu_{t\delta/\kappa}$ closely. We now consider the Fisher Divergence. 
\begin{restatable}{theorem}{FIbound}
\label{thm:FIbound}
    Suppose we run \textbf{Warm Start} phase with $T = \mathcal{O}\left(d^3\kappa \log (\kappa \KL{\gamma}{\mu_\infty})\right)$, $T_{ws} = \log \kappa d$, following which we run the \textbf{Annealing Phase} with $\delta=\kappa^{-\nicefrac{1}{4}}$. This results in a $\tau= \kappa^{-\nicefrac{3}{16}}$ satisfying
    $$\FI{\rho_{\tau\kappa/\delta}}{\mu_{0}}\le \mathcal{O}\left(d^{3/2}\kappa^{-3/32}\right).$$
\end{restatable}

\begin{proof}[Proof Sketch]
    Consider $\partial_t \rho_t = \nabla\cdot(\rho_t\nabla\log\frac{\rho_t}{\mu_{i\delta/\kappa}})$. de Bruijn's identity states:
    $$-\partial_t \KL{\rho_t}{\mu_{i\delta/\kappa}}\ge \FI{\rho_t}{\mu_{i\delta/\kappa}}$$
    Since we are using an annealed LMC, to telescope this as in the LMC analysis 
    %we use a modification of this that incorporates discretization errors \citep{balasubramanian2022theorynonlogconcavesamplingfirstorder}:
    %$$\KL{\rho_{(i+1)\delta}}{\mu_{i\delta/\kappa}}-\KL{\rho_{i\delta}}{\mu_{i\delta/\kappa}}\gtrapprox \int_{i\delta}^{(i+1)\delta} \FI{\rho_{t}}{\mu_{i\delta/\kappa}}~dt$$
    %To telescope this sum, 
    we also need to bound $$\KL{\rho_{i\delta}}{\mu_{i\delta/\kappa}}-\KL{\rho_{i\delta}}{\mu_{(i-1)\delta/\kappa}}=-\E_{\rho_{i\delta}} (\log\mu_{i\delta/\kappa}-\log\mu_{(i-1)\delta/\kappa}).$$ Because the initialization $\rho_{T_{ws}}$ is sub-Gaussian, we can bound the drifts of our algorithm to show that the resulting $\rho_t$ is sub-Gaussian. Lemmas \ref{lem:log_prob_derivative} and \ref{lem:dtlog_mut_upper_bound} again allow us to bound $\log\mu_{i\delta/\kappa}-\log\mu_{(i-1)\delta/\kappa}$, which we show grows at most polynomially. As a consequence, we have    $$\sum_{i=\tau\kappa/\delta}^{T_{ws}\kappa/\delta}\int_{i\delta}^{(i+1)\delta} \FI{\rho_{t}}{\mu_{i\delta/\kappa}}~dt\lessapprox \KL{\rho_{T_{ws}}}{\mu_{T_{ws}}}$$
    From here, we finish using a weak triangle inequality for $\textsf{FI}$ to get a guarantee against $\mu_0$.
    
\end{proof}
% \begin{remark}[Technical challenges]
% The posterior sampling setting presents some unique challenges compared to sampling from a prior.
% %In ``prior" sampling, the score (under the prior) is subGaussian \citep{gupta}. This may not be true of the posterior. Because of this, 
% Important technical tools we use are bounds on the magnitude of the derivatives $\partial_t \log p_t, \partial_t\log \mu_t$ (Lemmas \ref{lem:dtlog_mut_upper_bound, lem:log_prob_derivative}). These, together with Lemma \ref{lem:absolutely_continuous} allow us to bound the drift $v_t$ implementing the path $\mu_t$. Our results do not feature guarantees down to $t=0$ in $\textsf{KL}$, primarily because such bounds diverge as $t\to 0$. However, for $t>0$, we can tradeoff run time with accuracy. 
%    % \end{itemize}
% \end{remark}
These results are driven by Lemmas \ref{lem:log_prob_derivative} and \ref{lem:dtlog_mut_upper_bound}, which effectively show that the posteriors $\mu_t$ change in a relatively mild way until some small $t>0$, allowing us to anneal our samples in polynomial time. 
Putting these together, we have the following conclusion, which states that {\em there is an iterate close to the last iterate that satisfies a simultaneous ``global'' $\textsf{KL}$ guarantee to a posterior for a noised prior and a ``local'' $\textsf{FI}$ guarantee to the true posterior.}

\begin{restatable}[\textsf{KL} + \textsf{FI}]{cor}{FinalCorollary}
\label{cor:FinalCorrolary}
% \begin{restatable}{corollary}{FinalCorollary}\label{cor:FinalCorrolary}
In algorithm \ref{alg:AULA}, suppose we run \textbf{Warm Start} phase with $T = \mathcal{O}\left(d^3\kappa \log (\kappa \KL{\gamma}{\mu_\infty})\right)$, $T_{ws} = \log \kappa d$, following which we run the \textbf{Annealing Phase} with $\delta = \kappa^{\nicefrac{1}{4}}$, then there is $\tau \le  \tilde{\mathcal{O}} (\kappa^{-\nicefrac{3}{16}})$, such that $\rho_{\tau\kappa/\delta}$ simultaneously satisfies
\begin{itemize}
    \item $\KL{\mu_{\tau}}{\rho_{\tau\kappa^{\nicefrac{5}{4}}}}\le \mathcal{O}(d\kappa^{-\nicefrac{1}{2}})$, which implies $\TV{\rho_{\tau\kappa^{\nicefrac{5}{4}}}}{\mu_{\tau}}\le \mathcal{O}(\sqrt{d\kappa^{-\nicefrac{1}{2}}}$.
    %\mathcal{O}\left(d^2\kappa^{-1/16}\right)$, which implies $\TV{\rho_{\tau\kappa/\delta}}{\mu_{\tau}}\le \mathcal{O}\left(d\kappa^{-1/32}\right)$.
    \item $\FI{\rho_{\tau\kappa^{\nicefrac{5}{4}}}}{\mu_{0}}\le \mathcal{O}(d\kappa^{-\nicefrac{1}{16}})$
    %\mathcal{O}\left(d^{3/2}\kappa^{-3/32}\right).$
\end{itemize}
For this choice of $\kappa$, the algorithm has run time $\tilde{\mathcal{O}}(\kappa^{\nicefrac{5}{4}})$.

% {\color{blue}KS: Everything else is set in terms of $\kappa$, so it should say that if all parameters are set in terms of $\kappa$, then poly run time holds ?}
%$\tilde{\mathcal{O}}(\kappa^{5/4})$.
\end{restatable} 

\subsection{Local and Global guarantees - the Implications of Corollary~\ref{cor:FinalCorrolary}}\label{sec:FIImplication}
%[\textbf{$\textsf{FI}$ guarantee vs. ($\textsf{FI}$ + KL) guarantee}]

%%Specifically, you need to show **both** the guarantees are needed. Why not KL alone, and why not Fisher alone. The story so far answers why not Fisher alone, but does not say the problem with KL alone.}

It is possible to \textit{just} get convergence in $\textsf{FI}$, indeed running LMC towards the posterior, 
$$X_{i+1} = X_i +\delta \nabla \log \mu_0(X_i)+ \sqrt{2\delta}\epsilon, \hspace{1cm} \epsilon \sim \mathcal{N}(0, I),$$
results in polynomial convergence to $\mu_0$ in $\textsf{FI}$ as in \citet{balasubramanian2022theorynonlogconcavesamplingfirstorder}. However, convergence in $\textsf{FI}$ is susceptible to the phenomenon of ``mode collapse'', where for instance, in a multimodal distribution, the sampler significantly under-samples a specific mode depending on initialization. This is particularly critical in our setting - one could interpret posterior sampling for multi-modal priors as equivalent to conditionally sampling from a subset of modes that is consistent with a measurement. We will illustrate this below for a mixture of two Gaussians, and show how Theorem \ref{thm:KLbound} avoids this failure mode.
%However, being a convergence in scores, this approach could suffer from mode collapse (\feedbackS{Why? Elaborate and give ref}). 
%However, we would also like some guarantee on the global distribution of \textit{density}, not just the scores\footnote{The issue with score is that the gradient operation nearly eliminates any dependence on the weights of various modes when the density is multimodal, which is the underlying reason for the mode collapse behavior with $\textsf{FI}$ \citep{balasubramanian2022theorynonlogconcavesamplingfirstorder}.}.

Let us define a bimodal prior and a likelihood: $$p_0 = \frac{1}{2}\mathcal{N}(\mathbf{0}, I) + \frac{1}{2}\mathcal{N}\left(\lambda\begin{bmatrix}1 \\1\end{bmatrix}, I_2\right),\hspace{1cm} R(\mathbf{x}) = \frac{1}{2\eta}\Vert \text{diag}([0, 1]) \mathbf{x}\Vert^2$$
Let $\frac{1}{\eta'} = 1+ \frac{1}{\eta}$, and let $A_\square = \text{diag}([1, \square])$ for any $\square$. Then the posterior can be written as
$$\mu_0 = \alpha_0\mathcal{N}\left(\mathbf{0}, A_{\eta'}\right) +(1-\alpha_0)\mathcal{N}\left(\lambda A_{\eta'}\begin{bmatrix}1\\1\end{bmatrix}, A_{\eta'}\right), \hspace{1cm} \alpha_0 = \frac{1}{1+e^{-\frac{\lambda^2}{1+\eta}}}$$
However, we see in Lemma \ref{lem:FIMogoriginal} that even the distribution (with equal mode weights)
$$\mu_0' = \frac{1}{2} \mathcal{N}\left(\mathbf{0}, A_{\eta'}\right) +\frac{1}{2}\mathcal{N}\left(\lambda A_{\eta'}\begin{bmatrix}1\\1\end{bmatrix}, A_{\eta'}\right)$$
satisfies $\FI{\mu_0}{\mu_0'}\le e^{-\lambda^2\left(\frac{\eta-15}{8(1+\eta)}\right)}$. So for $\eta > 15$, $\lambda\to \infty$, $\textsf{FI}$ completely fails to discriminate the distribution with the correct mode weights of $(\alpha_0, 1- \alpha_0)$ from an incorrect distribution with equal weights $(\nicefrac{1}{2}, \nicefrac{1}{2})$.
 %
%%\feedbackS{What is the relation between $\ell$ and $\eta$? Actually, there is no $\ell$ in this setting, and the means and variances are fixed ....}
%
%Thus by itself a guarantee in FI is not useful for posterior sampling, unless we can separately ensure that our initialization will lead to a good posterior sample. 
%Annealing our samples from the warm-start $\gamma e^{-R}$ allows us to get a simultaneous $\textsf{KL}$ (for the posterior of a noised prior) and $\textsf{FI}$ (for the true posterior) guarantee. Intuitively, the $\textsf{KL}$ guarantee is much more sensitive to relative weights between the target and the law of the iterate, and ensures that the density of $\rho_{t_{\textsf{PS}}\kappa/\delta}$ is close to the density of $\mu_{t_{\textsf{PS}}}$ wherever there is density for $\mu_{t_{\textsf{PS}}}$. Potentially, this avoids the ``mode collapse'' failure mode for $\textsf{FI}$ convergence wherein the densities are far despite the scores being close. 
%
%To make this discussion more concrete,
%% \textbf{$\textsf{KL}$ guarantee for a noisy posterior:}
Now consider a {\em noisy} prior, and the corresponding posterior 
%(the mean of the second mode is weighted by $e^{-t}$, when compared to the prior $p_0$): 
$$p_t = \frac{1}{2}\mathcal{N}(\mathbf{0}, I) + \frac{1}{2}\mathcal{N}\left(\lambda e^{-t}\begin{bmatrix}1\\1\end{bmatrix}, I\right), \hspace{0.5cm} \mu_t = \alpha_t\mathcal{N}\left(\mathbf{0}, A_\eta\right) +(1-\alpha_t)\mathcal{N}\left(e^{-t}A_\eta\mathbf{e}, A_\eta\right), \hspace{0.5cm} \alpha_t = \frac{1}{1+e^{-\frac{\lambda ^2 e^{-2t}}{1+\eta}}}$$
%Note that as $t\to 0$, we have $\mu_t\to \mu_0$. While $t \ll \log 1/\eta$, the first mode has exponentially (in $\frac{1}{\eta}$) more mass than the second, but for $t\approx \log 1/\eta$, the two modes have comparable mass. 
%We have shown that the sampler iterate at time $\tau$ satisfies simultaneously $\KL{\mu_\tau}{\rho_{\tau\kappa/\delta}} \le \epsilon$ and $\FI{\rho_{\tau\kappa/\delta}}{\mu_0}\le \epsilon^{3/2}$. 
%If we posit for this remark that the sampling algorithm returns a mixture of Gaussians\footnote{The sampling algorithm might not result in distributions that are exactly mixture of Gaussians, but it allows us to illustrate the type of guarantee we have. \feedbackS{Expand on this remark, to argue that this is a meaningful approximation to posit. E.g., show that the KL divergence between the mixture of Gaussian and that from the sampler is small, and/or some other closeness.}}, the $\textsf{FI}$ guarantee together with Lemma \ref{lem:FIparameterbound} implies that the result is a mixture of two Gaussians:
%$\alpha\mathcal{N}(\mu_1, \Sigma_1)+(1-\alpha)\mathcal{N}(\mu_2, \Sigma_2)$,
%with $$ (\Vert \mu_1\Vert^2\Vert+\Vert \Sigma_1-A_\eta\Vert^2\Vert \Sigma_1^{-1}\Vert^2) \Vert A_\eta^{-1}\Vert + (\Vert \mu_2-\mathbf{e}e^{-t}\Vert^2\Vert+\Vert \Sigma_2-A_\eta\Vert^2\Vert \Sigma_2^{-1}\Vert^2) \Vert A_\eta^{-1}\Vert\le \epsilon^{3/2}.$$

%This allows us to bound $\mu_1\approx_\epsilon 0, \mu_2\approx_\epsilon e^{-t} \mathbf{e},$ and $\Sigma_1, \Sigma_2\approx_\epsilon A_\eta$. 
As we saw previously, with the $\textsf{FI}$ guarantee alone, there is no guarantee on the weight $\alpha$, which could range from $\nicefrac{1}{2}$ to exponentially close to 1. However the $\textsf{KL}$ (which implies a $\textsf{TV}$) guarantee shows that the weights can themselves not be off by more $\sqrt{\epsilon}$, which means $\alpha = \alpha_t\pm \sqrt{\epsilon}$.

We can now complete the discussion of Section \ref{sec:hardness}. We saw in Section \ref{sec:metastability} that a $\textsf{FI}$ gurarantee can be be interpreted as a type of ``local'' $\textsf{KL}$ guarantee, and that these local guarantees cannot be stitched to get a $\textsf{KL}$ guarantee. In a multimodal setting, such as this one, however, the weights of the modes themselves fall under the purview of the overall $\textsf{KL}$ bound (Theorem \ref{thm:KLbound}), which sets them by solving a "simplified" posterior sampling problem.

\begin{remark}
Approximating the posterior of a noised prior is in some sense the best we can do tractably. Consider the lower bound instance of \citet{gupta}. In summary, they use a one way function $\textsf{f}: \{-1, 1\}^d\to\{-1, 1\}^d$ such that $\textsf{f}(x) = y$ is easy to compute, but $\textsf{f}^{-1}(y)=x$ is difficult. They construct a posterior sampling problem, where the prior corresponds to a uniform distribution over $\{-1, 1\}^d$, the measurement is a specific $\textsf{f}(x)=y$, and the posterior would correspond to distribution concentrated on the true inverse $\textsf{f}^{-1}(y)$. Using the same measurement but noising the prior sufficiently results in a distribution for $x$ that is uniform over $\{-1, 1\}^d$. In our notation, this is analogous to saying that the posterior $\mu_t$ is concentrated on the true $\textsf{f}^{-1}(y)$ only for very small values of $t$.
\end{remark}

\begin{remark}\label{rem:synthetic}
\begin{SCfigure*}[0.9][h]
    \centering
\includegraphics[width=0.5\linewidth, trim=1cm 12cm 9cm 5cm, clip=true]{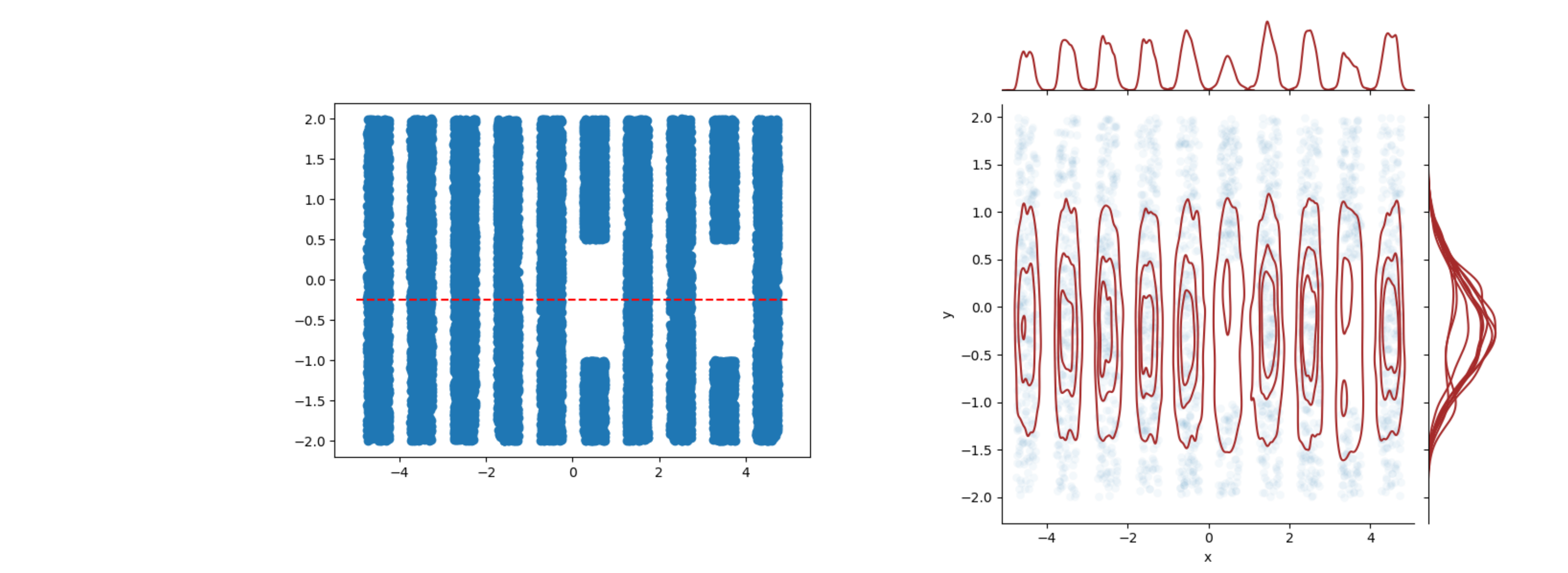}
    \caption{Our prior (shown on the left) consists of several vertical bars, two of which have gaps in them. The measurement model encourages the vertical coordinate to be $-0.25$, as indicated by the red horizontal line. The distribution of the sampler is depicted with kernel density plots for each of the resulting modes (shown to the right in red overlaid on top of the prior).}
    \label{fig:priorposterior}
\end{SCfigure*}
    Consider a prior consisting of several vertical ``bars'' in $\mathbb{R}^2$, two of which have a gap in them in some range of the vertical coordinate (see Figure~\ref{fig:priorposterior}). Our measurement operator gives us only a noisy measurement of the vertical coordinate (the red dotted line, in this case at $y = -.25$). In this case, the two bars with gaps in them should be very unlikely under the true posterior. However, the posterior of a {\em noised} prior would not notice this gap for some time. The annealed Langevin algorithm we describe results in the sampler shown on the right. A kernel density estimate for each of the resulting modes is plotted in red. Note that each of the modes is discovered, and the two modes that should have a lower weight under the posterior do have a smaller weight (as we can see from the marginals).
\end{remark}
\section{Conclusion}

We study the Annealed Langevin Monte Carlo algorithm to generate samples from an approximation to the true posterior distribution. We show that this algorithm simultaneously satisfies two properties: when initialized with an efficient ``warm-start'', an iterate close to the final iterate is {\em (i)} close in $\textsf{KL}$ with respect to the posterior with a noisy prior, and {\em (ii)} close in $\textsf{FI}$ with respect to the true posterior. To the best of our knowledge, these constitute the first polynomial-time results for a suitable notion of approximate posterior sampling.

%% We view our result as working towards a theory for what notion of posterior sampling is possible tractably. In practice, we would like such guarantees to be independent of $\textsf{LSI}$ (which is unlikely to hold with a reasonable constant for the complex multimodal distributions we are generally interested in). We would also like to leverage the powerful score networks that have now been pretrained for various priors. While convergence in KL may not be possible, we have shown that convergence to the posterior for a \textit{noised} prior \textit{is}.

We believe this type of guarantee is also possible with other popular posterior sampling frameworks like Split-Gibbs sampling, which can be interpreted as a different discrete path through the space of distributions. Furthermore, there may be other paths $\{\mu_t\}$ that allow us to sample from interpretable approximations to the true posterior (such as on that more closely aligns with DDPM, rather than Annealed Langevin); this is an interesting avenue for future work.

\section*{Acknowledgments}
This research has been supported by NSF Grants 2019844, 2505865 and 2112471, and the UT Austin Machine Learning Lab.
%% and computing support on the Vista GPU Cluster through the Center for Generative AI (CGAI) and the Texas Advanced Computing Center (TACC) at UT Austin.

\newpage
\bibliography{refs.bib}
\bibliographystyle{iclr/iclr2026_conference}
%%%%%%%%%%%%%%%%%%%%%%%%%%%%%%%%%%%%%%%%%%%%%%%%%%%%%%%%%%%%
\newpage
\appendix
\section{Preliminaries}

\subsection{Notation and Overview}
\textbf{Notation.}The prior is denoted $p$. The log-likelihood, or the measurement consistency, is denoted $R$. We denote by $p_t$ the distribution $p$ passed through the OU channel, which is to say, if $X_t$ is an OU process with $X_0$ having law $p$, then $ p_t$ is the law of $X_t$. We use $\mu$ to denote posteriors, so $\mu_0$ is the posterior $p_0e^{-R}/Z$, and $\mu_t$ is $p_te^{-R}$. We use $\circ$ to denote composition, so $(f\circ g)(x) = f(g(x))$ 

We use $C_c^\infty(\mathcal{U})$ to denote the space of all smooth functions on $\mathcal{U}$ with compact support, $\mathcal{P}_2(\mathbb{R}^d)$ to denote the set of measures on $\mathbb{R}^d$, and $\mathcal{P}_{2, ac}(\mathbb{R}^d)$ to denote the set of measures that are absolutely continuous with respect to the Lebesgue measure.

\begin{remark}[Constants greater than one]\label{rem:constants_greater_than_one}
    For simplicity, we assume that each of the constants defined in Assumption \ref{assumptions} is a constant greater than one.
\end{remark}

\textbf{Overview.} In Section \ref{appx:preliminaries} we review some identities that will be useful. In \ref{appx:misc} we state some prior work with references. In Appendix \ref{appx:annealedLMC} we discuss various aspects of the algorithm discussed in Section \ref{sec:annealedLMC}. In Appendix \ref{appx:bounds} we state and prove some bounds that are useful to Appendix \ref{appx:annealedLMC}. 

\subsection{Preliminaries}\label{appx:preliminaries}
% \begin{lemma}
% The OU process can be implemented exactly as $X_t = e^{-t}X_0+\sqrt{1-e^{-2t}}\eta$ where $\eta$ is a unit variance Gaussian. The law $p_t$ of $X_t$ can be written as $p_t(x) = (p\circ e^t)*\gamma_{1-e^{-2t}}$. Here $(p\circ e^t)(x)\equiv p(e^tx)$.
% \end{lemma}
% \begin{proof}
% Please see \citep{}
% \end{proof}

\begin{lemma}[Identities]\label{lem:identities}
    We have the following identities, under benign regularity conditions. These are commonly used in the literature but are repeated here for completeness
    \begin{enumerate}
    \item For $f, g:\mathbb{R}^d\to \mathbb{R}$, we have $\nabla \cdot (f*g) = (\nabla\cdot f)*g$
    \item For $f:\mathbb{R}^d\to \mathbb{R}^d, g:\mathbb{R}^d\to \mathbb{R}$, we have $\nabla (f*g) = (\nabla f)*g$
    \item For $f, g:\mathbb{R}^d\to \mathbb{R}$, we have $\Delta (f*g) = (\Delta f)*g$
    \item For $f: \mathbb{R}\to \mathbb{R}$, $g:\mathbb{R}^d\to \mathbb{R}$, $\nabla \cdot(f\nabla g) = \nabla f\cdot \nabla g+f\Delta g$
    \item For $f:\mathbb{R}\to \mathbb{R}$, $f\nabla \log f = \nabla f$
    \end{enumerate}
\end{lemma}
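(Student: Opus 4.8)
The plan is to verify each of the five identities by elementary manipulations; the only point requiring care is the justification for interchanging differentiation with the convolution integral, which is exactly what the phrase ``under benign regularity conditions'' is there to cover.

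For identities (1)--(3) I would write the convolution in integral form, $(f*g)(x)=\int_{\mathbb{R}^d} f(x-y)\,g(y)\,\mathrm{d}y$, and observe that under the stated regularity (it suffices, e.g., that $g\in L^1$ and $f$ has bounded continuous derivatives, or that one of the two factors has compact support) the dominated convergence theorem licenses differentiating under the integral sign. Each partial derivative $\partial_{x_i}$ then falls only on the $f(x-y)$ factor, giving $\partial_{x_i}(f*g)=(\partial_{x_i}f)*g$. Summing over $i$ when $f$ is a vector field yields $\nabla\cdot(f*g)=(\nabla\cdot f)*g$ (identity 1); applying the scalar version coordinatewise yields $\nabla(f*g)=(\nabla f)*g$ (identity 2); and iterating the scalar version twice yields $\Delta(f*g)=(\Delta f)*g$ (identity 3).

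For identity (4) I would simply expand the divergence coordinatewise and apply the one-dimensional product rule: $\nabla\cdot(f\nabla g)=\sum_i \partial_i(f\,\partial_i g)=\sum_i (\partial_i f)(\partial_i g)+f\,\partial_i^2 g=\nabla f\cdot\nabla g+f\,\Delta g$. For identity (5), on the set $\{f>0\}$ the chain rule gives $\nabla\log f=\nabla f/f$, hence $f\nabla\log f=\nabla f$; the convention that $f\nabla\log f$ is read as $0$ where $f=0$ (together with $\nabla f=0$ at interior zeros of a nonnegative differentiable $f$) extends the identity to all of $\mathbb{R}^d$. The ``main obstacle,'' such as it is, is only bookkeeping: stating cleanly the hypotheses under which differentiation commutes with the convolution integral and citing a standard reference (e.g.\ \cite{ambrosio}) for that interchange; the rest is direct computation.
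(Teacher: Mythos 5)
Your proposal is correct and matches the paper's approach: identities (1)--(3) follow by differentiating under the convolution integral (the paper obtains (3) by composing (1) and (2), which is equivalent to your ``iterate twice''), and (4)--(5) are the same one-line product/chain rule computations the paper dismisses as ``common calculus manipulations.'' Your explicit mention of sufficient regularity hypotheses for interchanging $\partial$ and $\int$, and of the convention at zeros of $f$ in (5), is a modest but welcome tightening of what the paper leaves implicit.
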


\begin{proof}
Follows from switching the order of the integrals and the derivatives. The principle is that convolution commutes with linear operators.
\begin{enumerate}
    \item 
    \begin{align*}
        \nabla \cdot (f*g) 
        &= \sum_i \partial_i\int f(x-y)g(y)~dy=\int\sum_i \partial_i\left(f(x-y)g(y)\right)dy\\
        &=\int\sum_i \left(\partial_if(x-y)\right)g(y)dy= (\nabla \cdot f)*g
    \end{align*}
    \item 
    \begin{align*}
        \nabla (f*g) 
        &= \nabla_x\int f(x-y)g(y)~dy=\int\nabla_xf(x-y)g(y)dy= (\nabla f)*g
    \end{align*}
    \item 
    Follows from the above two:
    $$\Delta (f*g) = \nabla \cdot \nabla(f*g) = \nabla \cdot((\nabla f)*g) = \nabla \cdot(\nabla f) * g = (\Delta f)*g$$
    \end{enumerate}
    The remaining are common calculus manipulations.
\end{proof}

\begin{lemma}[Gaussians]\label{lem:gaussians}
    The following hold for Gaussians $\gamma_{\sigma^2}(x)$
    \begin{enumerate}
        \item $\nabla \gamma_{\sigma^2} = -\frac{x}{\sigma^2}\gamma_{\sigma^2}$
        \item $\Delta \gamma_{\sigma^2} = \left(\frac{\Vert x\Vert}{\sigma^4}-\frac{d}{\sigma^2}\right)\gamma_{\sigma^2}$
        \item $\Delta \log\gamma = -\frac{d}{\sigma^2}$
    \end{enumerate}
\end{lemma}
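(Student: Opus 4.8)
The plan is to unwind the definition $\gamma_{\sigma^2}(x) = (2\pi\sigma^2)^{-d/2}\exp(-\|x\|^2/(2\sigma^2))$ and differentiate directly. Each of the three parts is an elementary calculus computation, so there is no substantive obstacle; the only mild subtlety is choosing, for part 3, the cleanest of several equivalent routes. I would treat the parts in order, using part 1 as the workhorse for part 2.

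For part 1, apply the chain rule to the exponent: since $\nabla\|x\|^2 = 2x$, differentiating $\exp(-\|x\|^2/(2\sigma^2))$ produces the factor $-x/\sigma^2$, and the normalizing constant is inert, giving $\nabla\gamma_{\sigma^2} = -\frac{x}{\sigma^2}\gamma_{\sigma^2}$. For part 2, write $\Delta\gamma_{\sigma^2} = \nabla\cdot\nabla\gamma_{\sigma^2} = \nabla\cdot\!\left(-\frac{x}{\sigma^2}\gamma_{\sigma^2}\right)$ and invoke identity 4 of Lemma~\ref{lem:identities}, $\nabla\cdot(f\,\mathbf v) = \nabla f\cdot\mathbf v + f\,\nabla\cdot\mathbf v$, with $f = \gamma_{\sigma^2}$ and $\mathbf v = -x/\sigma^2$. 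Then $\nabla\cdot\mathbf v = -d/\sigma^2$, and by part 1 we have $\nabla\gamma_{\sigma^2}\cdot\mathbf v = \frac{\|x\|^2}{\sigma^4}\gamma_{\sigma^2}$, whence $\Delta\gamma_{\sigma^2} = \left(\frac{\|x\|^2}{\sigma^4} - \frac{d}{\sigma^2}\right)\gamma_{\sigma^2}$ (the $\|x\|$ in the displayed statement should read $\|x\|^2$).

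For part 3, I would take logarithms first rather than combine parts 1 and 2: $\log\gamma_{\sigma^2}(x) = -\frac{d}{2}\log(2\pi\sigma^2) - \frac{\|x\|^2}{2\sigma^2}$, so $\nabla\log\gamma_{\sigma^2} = -x/\sigma^2$ and therefore $\Delta\log\gamma_{\sigma^2} = \nabla\cdot(-x/\sigma^2) = -d/\sigma^2$. This is the ``hard'' part only in the trivial sense that one could instead expand $\Delta\log f = \Delta f/f - \|\nabla f/f\|^2 = \left(\frac{\|x\|^2}{\sigma^4} - \frac{d}{\sigma^2}\right) - \frac{\|x\|^2}{\sigma^4} = -\frac{d}{\sigma^2}$, which recovers the same answer but redoes the work of parts 1 and 2; I would present the direct computation and remark on the consistency check in passing.
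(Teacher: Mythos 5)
Your computation is correct and matches the paper's approach, which simply asserts the lemma "follows from standard calculus rules"; your explicit chain-rule, product-rule, and direct-logarithm derivations are exactly the intended argument, and your observation that the $\Vert x\Vert$ in part 2 should read $\Vert x\Vert^2$ correctly identifies a typo in the statement (the paper itself uses the $\Vert y\Vert^2$ form when applying this identity in Lemma~\ref{lem:log_prob_derivative}).
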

The above also follow from standard calculus rules.
\subsection{Miscelleneous results}\label{appx:misc}

\begin{lemma}[Girsanov, \citep{oksendal2003stochastic}]\label{lem:girsanov}
Let $X_0\sim \rho_0, X_0'\sim \rho_0'$, and suppose
\begin{equation}\label{eq:grisanov_two_SDEs}
\begin{split}
    dX_t &= v_t(X_t)~dt+\sqrt{2}~dB_t \iff \partial_t \rho_t = -\nabla\cdot(\rho_t v_t)+\Delta\rho_t\\
    dX_t'&=v_t'(X_t')~dt+\sqrt{2}~dB_t\iff \partial_t \rho_t' = -\nabla\cdot(\rho_t' v_t')+\Delta\rho_t'
\end{split}
\end{equation}
The $\texttt{KL}$ divergence between $\rho_t$ and $\rho_t'$ can be bounded as
$$\KL{\rho_t}{\rho_t'}\le \KL{\rho_0}{\rho_0'}+\frac{1}{4}\E_{\{X_t\}} \int_0^T \Vert v_t(X_t)-v_t'(X_t)\Vert^2~dt$$   
\end{lemma}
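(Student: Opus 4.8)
The plan is to lift both SDEs in \eqref{eq:grisanov_two_SDEs} to laws on path space, apply Girsanov's theorem, and then descend to the time-$t$ marginals via the data-processing inequality. Write $\mathbb P$ for the law of $\{X_t\}_{t\in[0,T]}$ and $\mathbb Q$ for the law of $\{X_t'\}_{t\in[0,T]}$ on $C([0,T];\R^d)$. First I would disintegrate both measures over the initial coordinate and use the chain rule for relative entropy,
\[
\KL{\mathbb P}{\mathbb Q}=\KL{\rho_0}{\rho_0'}+\E_{x\sim\rho_0}\bigl[\KL{\mathbb P^{x}}{\mathbb Q^{x}}\bigr],
\]
where $\mathbb P^{x},\mathbb Q^{x}$ are the path laws conditioned on the common starting value $X_0=X_0'=x$; this already produces the $\KL{\rho_0}{\rho_0'}$ term.

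For the conditional contribution, the key structural point is that the two diffusions share the same noise coefficient $\sqrt2$ (diffusion matrix $2I$). Set $u_t:=\tfrac{1}{\sqrt2}\bigl(v_t(X_t)-v_t'(X_t)\bigr)$ and $Z_T:=\exp\!\bigl(\int_0^T u_t\cdot d\tilde B_t-\tfrac12\int_0^T\Vert u_t\Vert^2\,dt\bigr)$, where $\tilde B$ drives the $\mathbb Q^x$-dynamics. Girsanov's theorem identifies $\mathbb P^{x}$ with $Z_T\,d\mathbb Q^{x}$ and makes $B_t:=\tilde B_t-\int_0^t u_s\,ds$ a Brownian motion under $\mathbb P^{x}$. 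Rewriting $d\tilde B_t=dB_t+u_t\,dt$ gives $\log Z_T=\int_0^T u_t\cdot dB_t+\tfrac12\int_0^T\Vert u_t\Vert^2\,dt$, so taking $\E_{\mathbb P^x}$ and using that $\int_0^{\cdot} u_t\cdot dB_t$ is a mean-zero martingale under $\mathbb P^x$,
\[
\KL{\mathbb P^{x}}{\mathbb Q^{x}}=\tfrac12\,\E_{\mathbb P^x}\!\int_0^T\Vert u_t\Vert^2\,dt=\tfrac14\,\E_{\mathbb P^x}\!\int_0^T\bigl\Vert v_t(X_t)-v_t'(X_t)\bigr\Vert^2\,dt.
\]
Integrating over $x\sim\rho_0$ recovers the claimed identity for the path measures; since $\rho_t$, $\rho_t'$ are the pushforwards of $\mathbb P$, $\mathbb Q$ under evaluation at time $t$, the data-processing inequality then gives $\KL{\rho_t}{\rho_t'}\le\KL{\mathbb P}{\mathbb Q}$, which is the stated bound (valid for every $t\le T$).

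The hard part will be the regularity bookkeeping hidden in the phrase ``benign regularity conditions'': I need $Z$ to be a genuine martingale, so that $Z_T\,d\mathbb Q^x$ is a probability measure and Girsanov applies, and I need $\int_0^{\cdot} u_t\cdot dB_t$ to be a true martingale so that its terminal expectation vanishes. For the drifts arising in our setting ($v_t$ built from the OU term $-x$, the Lipschitz prior score, and $\nabla R$) one has at most linear growth, so I would either verify a Novikov-type condition or, more robustly, localize along the exit times $\tau_n=\inf\{t:\Vert X_t\Vert\ge n\}$: prove the identity with $T$ replaced by $T\wedge\tau_n$, bound the moments of $\{X_t\}$ uniformly in $n$, and pass to the limit using monotone convergence on the right-hand side and lower semicontinuity of relative entropy on the left. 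Everything else is the textbook Girsanov computation of \cite{oksendal2003stochastic}.
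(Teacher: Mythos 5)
The paper does not prove Lemma~\ref{lem:girsanov}; it is stated as a known fact with a citation to \cite{oksendal2003stochastic}, so there is no internal proof to compare against. Your argument is the standard one and is correct in substance: disintegrate the path laws over the initial condition (chain rule for relative entropy to peel off $\KL{\rho_0}{\rho_0'}$), apply Girsanov with the common diffusion coefficient $\sqrt{2}$ so that $u_t=\tfrac{1}{\sqrt2}\bigl(v_t-v_t'\bigr)$ and $\KL{\mathbb P^x}{\mathbb Q^x}=\tfrac14\E_{\mathbb P^x}\int_0^T\lVert v_t-v_t'\rVert^2\,dt$, then push down to time-$t$ marginals via data processing. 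Your localization remarks about Novikov/martingale issues are exactly the ``benign regularity'' the lemma glosses over.

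One point worth being explicit about, since you noticed it implicitly: the equality sign in the lemma as stated is only correct at the level of path measures. Your proof correctly yields $\KL{\rho_t}{\rho_t'}\le\KL{\rho_0}{\rho_0'}+\tfrac14\,\E_{\mathbb P}\int_0^T\lVert v_t(X_t)-v_t'(X_t)\rVert^2\,dt$ for the marginals, which is the form the paper actually uses downstream (and which the word ``bounded'' in the statement suggests was the intent). So you should not try to upgrade your final data-processing step to an equality; your inequality is the right conclusion, and the ``$=$'' in the lemma should be read as an upper bound.
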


\begin{lemma}[LMC convergence under Log-Concavity \citep{vempala2022rapidconvergenceunadjustedlangevin}]\label{lem:LMC_convergence}
Let \( k \in \mathbb{N} \), and let \( \mu_{kh} \) denote the law of the \(k\)-th iterate of the Langevin Monte Carlo (LMC) algorithm with step size \( h > 0 \). Assume that the target distribution \( \pi \propto \exp(-V) \) satisfies a logarithmic Sobolev inequality with constant \( C_{\text{LSI}}(\pi) \leq \frac{1}{\alpha} \), and that \( \nabla V \) is \( \beta \)-Lipschitz. Then, for all \( h \leq \frac{1}{4\beta} \) and for all \( N \in \mathbb{N} \),
$$\mathrm{KL}(\mu_{Nh} \,\|\, \pi) \leq \exp(-\alpha Nh) \, \mathrm{KL}(\mu_0 \,\|\, \pi) + \mathcal{O}\left( \frac{\beta^2 d h}{\alpha} \right).$$
In particular, letting \( \kappa := \frac{\beta}{\alpha} \), for all \( \varepsilon \in [0, \kappa \sqrt{d}] \) and for step size $h \asymp \frac{\varepsilon^2}{\beta \kappa d}$, we have $\sqrt{\mathrm{KL}(\mu_{Nh} \,\|\, \pi)} \leq \epsilon$ after $N = \mathcal{O} \left( \frac{\kappa^2 d}{\epsilon^2} \log \frac{\mathrm{KL}(\mu_0 \,\|\, \pi)}{\epsilon^2} \right)$ iterations.
\end{lemma}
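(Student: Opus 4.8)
The plan is to reproduce the interpolation-method proof of \cite{vempala2022rapidconvergenceunadjustedlangevin}. First I would embed the discrete iterates into a continuous-time diffusion: on each interval $t\in[kh,(k+1)h]$ run $\deriv X_t=-\nabla V(X_{kh})\,\deriv t+\sqrt2\,\deriv B_t$ starting from the $k$-th LMC point, so that $X_{(k+1)h}$ is exactly the next iterate. Writing $\rho_t$ for the law of $X_t$, this ``frozen-drift'' diffusion obeys the Fokker--Planck equation $\partial_t\rho_t=\nabla\cdot(\rho_t b_t)+\Delta\rho_t$ with $b_t(x)=\E[\nabla V(X_{kh})\mid X_t=x]$; it differs from the exact Langevin flow of $\pi$ only through the stale drift, so the whole difficulty is controlling that gap.

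Next I would differentiate the relative entropy $H(t)=\KL{\rho_t}{\pi}$ along this flow. Integration by parts gives $H'(t)=-\FI{\rho_t}{\pi}-\E\langle\nabla\log\tfrac{\rho_t}{\pi}(X_t),\,\nabla V(X_{kh})-\nabla V(X_t)\rangle$ (using the tower rule to replace $b_t(X_t)$ by $\nabla V(X_{kh})$). By Cauchy--Schwarz and Young's inequality, absorbing a quarter of the Fisher information, and then $\beta$-Lipschitzness of $\nabla V$, this becomes $H'(t)\le-\tfrac34\FI{\rho_t}{\pi}+\beta^2\E\|X_t-X_{kh}\|^2$. The one-step displacement I would control by writing $X_t-X_{kh}=-(t-kh)\nabla V(X_{kh})+\sqrt2(B_t-B_{kh})$, which gives $\E\|X_t-X_{kh}\|^2\le 2h^2\E\|\nabla V(X_{kh})\|^2+4dh$; this reduces everything to a uniform-in-$k$ bound $\sup_k\E_{\rho_{kh}}\|\nabla V\|^2<\infty$, which I would get from a short self-bounding recursion (using $\E_\pi\|\nabla V\|^2=\E_\pi\Delta V\le d\beta$ together with the Fisher information already at hand), valid precisely when $h\lesssim 1/\beta$, yielding $\beta^2\E\|X_t-X_{kh}\|^2=\mathcal{O}(hd\beta^2)$ uniformly over the run.

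Finally I would invoke the log-Sobolev inequality, $\FI{\rho_t}{\pi}\ge\alpha H(t)$, to convert the previous display into $H'(t)\le-\alpha H(t)+\mathcal{O}(hd\beta^2)$ on each subinterval (tuning the Young split so the exponential rate is exactly $\alpha$, as in \cite{vempala2022rapidconvergenceunadjustedlangevin}), and then Grönwall over $[0,Nh]$ to obtain $H(Nh)\le e^{-\alpha Nh}H(0)+\mathcal{O}(hd\beta^2/\alpha)$, which is the first claim. The ``in particular'' clause is then bookkeeping: with $\kappa=\beta/\alpha$, choosing $h\asymp\varepsilon^2/(\beta\kappa d)$ makes the additive term $\mathcal{O}(\varepsilon^2)$, and $N=\mathcal{O}(\kappa^2 d\varepsilon^{-2}\log(H(0)/\varepsilon^2))$ makes the geometric term $\mathcal{O}(\varepsilon^2)$, after which one takes square roots. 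I expect the main obstacle to be exactly the uniform moment bound $\sup_k\E_{\rho_{kh}}\|\nabla V\|^2<\infty$: every other step is a one- or two-line manipulation, but that bound is what pins down the admissible step size $h\le 1/(4\beta)$ and forces the self-bounding recursion rather than a naive estimate.
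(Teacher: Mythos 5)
The paper does not prove this lemma; it is quoted verbatim from \cite{vempala2022rapidconvergenceunadjustedlangevin} and used as a black box. Your proposal is therefore being compared against the cited proof, not against anything in the present paper.

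Your outline is a faithful reconstruction of the interpolation argument of \cite{vempala2022rapidconvergenceunadjustedlangevin}: the frozen-drift diffusion, the Fokker--Planck with the conditional drift $b_t$, the entropy-dissipation identity with the stale-gradient cross term, Cauchy--Schwarz plus Young to absorb a fraction of the Fisher information, the displacement bound, the log-Sobolev step and Grönwall. The one place I would push back is the ``self-bounding recursion'' for $\sup_k \E_{\rho_{kh}}\|\nabla V\|^2$. A literal iteration-over-$k$ recursion of the form $\E_{\rho_{(k+1)h}}\|\nabla V\|^2 \le c\,\E_{\rho_{kh}}\|\nabla V\|^2 + \cdots$ does not close, because the Lipschitz step $\|\nabla V(X_{(k+1)h})\|\le\|\nabla V(X_{kh})\|+\beta\|X_{(k+1)h}-X_{kh}\|$ combined with Young's inequality always produces a coefficient strictly larger than $1$, so the bound blows up geometrically over $N\sim 1/(\alpha h)$ steps. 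What Vempala--Wibisono actually use is a \emph{per-distribution} inequality, not a recursion: for any $\rho$, integration by parts against $\pi$ together with $\nabla^2 V\preceq\beta I$ gives
\begin{equation*}
\E_\rho\|\nabla V\|^2 \;\le\; \FI{\rho}{\pi} + 2d\beta,
\end{equation*}
and the $\FI{\rho_{kh}}{\pi}$ on the right is then absorbed into the $-\tfrac34\FI{\rho_t}{\pi}$ term you already carry. Your parenthetical ($\E_\pi\|\nabla V\|^2=\E_\pi\Delta V\le d\beta$ ``together with the Fisher information already at hand'') is exactly the content of this lemma, so you have the right ingredients; just replace ``recursion'' with this one-shot bound and the proof goes through cleanly under $h\le 1/(4\beta)$. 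The remaining bookkeeping in the ``in particular'' clause is as you describe.
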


\begin{lemma}[HWI inequality \citep{OTTO2000361}]\label{lem:HWI}
    Let $\pi\in \mathcal{P}_2(\mathbb{R}^d)$ be a reference measure, and let $\rho\in \mathcal{P}_2(\mathbb{R}^d)$. We have
    $$\KL{\pi}{\rho}\le W_2(\pi, \rho)\sqrt{\FI{\pi}{\rho}}$$
\end{lemma}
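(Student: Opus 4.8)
The plan is to prove Lemma~\ref{lem:HWI} by the classical displacement-interpolation argument (essentially the Otto--Villani proof). Both sides are trivial unless $W_2(\pi,\rho)<\infty$ and $\FI{\pi}{\rho}<\infty$, so assume this. First I would take $(\sigma_s)_{s\in[0,1]}$ to be the constant-speed Wasserstein geodesic joining $\sigma_0=\rho$ to $\sigma_1=\pi$; by the Benamou--Brenier / McCann theory it is generated by a velocity field $(v_s)$ solving the continuity equation $\partial_s\sigma_s+\nabla\cdot(\sigma_s v_s)=0$, with $\lVert v_s\rVert_{L^2(\sigma_s)}=W_2(\pi,\rho)$ for a.e.\ $s$ (the metric speed of a constant-speed geodesic is constant and equals the endpoint distance).

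Next I would differentiate $F(s):=\KL{\sigma_s}{\rho}$ along the geodesic. Using $\int\partial_s\sigma_s\,dx=0$ and integrating by parts against the continuity equation,
\[
F'(s)=\int \big\langle \nabla\log\tfrac{\sigma_s}{\rho},\, v_s\big\rangle\, d\sigma_s .
\]
Then, since $-\log\rho$ is convex, McCann's displacement-convexity theorem makes $s\mapsto F(s)$ convex on $[0,1]$; together with $F(0)=\KL{\rho}{\rho}=0$, convexity gives $F(1)\le F(0)+F'(1^-)=F'(1^-)$, i.e.\ $\KL{\pi}{\rho}\le F'(1^-)$. Finally, evaluating the derivative formula at $s=1$ (where $\sigma_1=\pi$) and applying Cauchy--Schwarz in $L^2(\pi)$,
\[
F'(1^-)=\E_{\pi}\big\langle\nabla\log\tfrac{\pi}{\rho},\,v_1\big\rangle
\le \big(\FI{\pi}{\rho}\big)^{1/2}\,\lVert v_1\rVert_{L^2(\pi)}
=W_2(\pi,\rho)\sqrt{\FI{\pi}{\rho}},
\]
which is the claim.

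The hard part is not the strategy but making the two analytic steps rigorous. For the entropy-derivative formula one needs the geodesic interpolant absolutely continuous along $s$ and $v_s$ regular enough for the integration by parts; this is standard (Ambrosio--Gigli--Savar\'e), but $F$ may a priori be infinite at interior times and the boundary terms must vanish, so a clean treatment either mollifies/truncates and invokes lower semicontinuity of $\KL{\cdot}{\rho}$ and $\FI{\cdot}{\rho}$, or regularizes the geodesic by a small heat flow and passes to the limit. For the convexity step, log-concavity of the reference $\rho$ is exactly what is used: if $-\log\rho$ is only $K$-semiconvex (Hessian $\succeq -K I$ with $K>0$) one instead obtains the general HWI bound $\KL{\pi}{\rho}\le W_2(\pi,\rho)\sqrt{\FI{\pi}{\rho}}+\tfrac{K}{2}W_2(\pi,\rho)^2$, and the clean form stated here corresponds to the log-concave case $K\le 0$.
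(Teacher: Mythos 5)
The paper supplies no proof here—Lemma~\ref{lem:HWI} is cited from \cite{OTTO2000361} and used as a black box—so there is nothing in-paper to compare your argument against. Your sketch is the standard Otto--Villani displacement-interpolation proof: differentiate $F(s)=\KL{\sigma_s}{\rho}$ along the constant-speed Wasserstein geodesic from $\rho$ to $\pi$, invoke displacement convexity of $\mathrm{KL}(\cdot\,\|\,\rho)$ to get $F(1)\le F'(1^-)$, and close with Cauchy--Schwarz in $L^2(\pi)$. That is correct, and the regularity caveats you mention (lower semicontinuity / heat-flow mollification) are the right ones to handle the analytic gaps.

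The substantive point, which you correctly flag only in your final sentence, deserves to be front and center: the clean form $\KL{\pi}{\rho}\le W_2(\pi,\rho)\sqrt{\FI{\pi}{\rho}}$ holds \emph{only when} the reference $\rho$ is (semi-)log-concave with nonnegative curvature parameter. In general HWI carries the extra term $-\tfrac{K}{2}W_2^2(\pi,\rho)$ where $\nabla^2(-\log\rho)\succeq K I$, and when $K<0$ this term is positive, so the displayed inequality can fail. The lemma as stated in the paper imposes no curvature hypothesis on $\rho$, so it is not a theorem as written. This is not a defect of your proof---your proof makes the needed hypothesis explicit via McCann's displacement convexity---but it is a genuine issue for the paper, because in the proof of Lemma~\ref{lem:warmstart} the inequality is applied with $\rho=\textsf{Law}(X_T)$, the law of a discretized-LMC iterate, and no argument is given that this law is log-concave (convolution with a Gaussian preserves log-concavity, but the deterministic gradient step does not in general). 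The paper should either add the log-concavity hypothesis to the lemma and verify it where used, or replace the HWI step with an alternative route to bounding the reverse KL, such as a pointwise density-ratio bound between $\textsf{Law}(X_T)$ and $\mu_\infty$.
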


\begin{lemma}[Talagrands transportation inequality \citep{chewi2023log}]\label{lem:talagrand}
    Let $\pi\in \mathcal{P}_2(\mathbb{R}^d)$ be $\alpha-$strongly concave. Then we have 
    $$\KL{\rho}{\pi}\ge \frac{\alpha}{2}W_2^2(\rho, \pi).$$
\end{lemma}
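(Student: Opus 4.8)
Here "$\pi$ is $\alpha$-strongly concave" should be read as $\pi\propto e^{-V}$ with $\nabla^2 V\succeq\alpha I$, and the statement is the classical Talagrand $T_2$ transportation inequality. The plan is the Otto--Villani argument, which fits the paper's gradient-flow viewpoint: by Bakry--\'Emery $\pi$ satisfies the log-Sobolev inequality \ref{eq:LSI}, which in the normalization relevant here reads $\FI{\rho}{\pi}\ge 2\alpha\,\KL{\rho}{\pi}$ for every $\rho$; then one runs the Langevin gradient flow for $\pi$ starting from $\rho$ and integrates the Fisher-information dissipation to control $W_2(\rho,\pi)$.

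In detail: let $(\rho_t)_{t\ge0}$ solve \ref{eq:LD}, i.e. $\partial_t\rho_t=\nabla\cdot(\rho_t\nabla\log(\rho_t/\pi))$, with $\rho_0=\rho$; this is the $W_2$-gradient flow of $\KL{\cdot}{\pi}$, with velocity field $v_t=-\nabla\log(\rho_t/\pi)$ and $\|v_t\|_{L^2(\rho_t)}^2=\FI{\rho_t}{\pi}$. Entropy dissipation gives $-\frac{d}{dt}\KL{\rho_t}{\pi}=\FI{\rho_t}{\pi}\ge 2\alpha\KL{\rho_t}{\pi}$, hence $\KL{\rho_t}{\pi}\to0$ and in particular $W_2(\rho_t,\pi)\to0$. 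Since $t\mapsto\rho_t$ is an absolutely continuous curve in $(\mathcal{P}_2,W_2)$ with metric speed $\|v_t\|_{L^2(\rho_t)}=\sqrt{\FI{\rho_t}{\pi}}$, the triangle inequality yields
$$W_2(\rho,\pi)\le\int_0^\infty\sqrt{\FI{\rho_t}{\pi}}\,dt=\int_0^\infty\sqrt{-\tfrac{d}{dt}\KL{\rho_t}{\pi}}\,dt.$$
Writing $E(t)=\KL{\rho_t}{\pi}$ and using $-\dot E\ge 2\alpha E$ in the form $\sqrt{-\dot E}\le(-\dot E)/\sqrt{2\alpha E}$ (the contribution vanishing wherever $E=0$), the last integral is at most $\tfrac{1}{\sqrt{2\alpha}}\int_0^\infty(-\dot E)/\sqrt{E}\,dt=\tfrac{2\sqrt{E(0)}}{\sqrt{2\alpha}}=\sqrt{2\KL{\rho}{\pi}/\alpha}$. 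Squaring gives $W_2^2(\rho,\pi)\le\tfrac{2}{\alpha}\KL{\rho}{\pi}$, which is exactly the claimed inequality.

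The algebraic pieces — the dissipation identity, the LSI input, and the one-variable estimate at the end — are routine. The step that needs genuine care is the regularity behind the displayed bound: that the Langevin flow is truly an absolutely continuous curve in $(\mathcal{P}_2,W_2)$ whose metric derivative equals $\|v_t\|_{L^2(\rho_t)}$, and that $W_2(\rho_t,\pi)\to0$ rather than merely $\rho_t\rightharpoonup\pi$. I would handle this via the Benamou--Brenier characterization of $W_2$ together with the metric-space theory of gradient flows (as in \cite{ambrosio}, already cited by the paper), possibly after mollifying/truncating to reduce to a smooth well-posed regime and then passing to the limit.

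An alternative that avoids running the flow is the displacement-convexity proof: since $\nabla^2 V\succeq\alpha I$, the functional $\KL{\cdot}{\pi}$ is $\alpha$-geodesically convex along the constant-speed $W_2$-geodesic $(\rho_s)_{s\in[0,1]}$ from $\pi$ to $\rho$, so $0\le\KL{\rho_s}{\pi}\le s\,\KL{\rho}{\pi}-\tfrac{\alpha}{2}s(1-s)W_2^2(\rho,\pi)$; dividing by $s$ and letting $s\downarrow0$ gives $\KL{\rho}{\pi}\ge\tfrac{\alpha}{2}W_2^2(\rho,\pi)$ directly. In this route the substance is concentrated in McCann's displacement-convexity theorem and in constructing the geodesic via an optimal transport map, which needs $\rho\in\mathcal{P}_{2,ac}(\R^d)$ (the general case following by approximation). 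For the purposes of this paper it is enough to cite the inequality as classical (e.g. \cite{chewi2023log}).
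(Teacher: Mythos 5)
Your proposal is correct. Note, however, that the paper does not prove this lemma at all: it is imported as a classical black-box result with the citation \cite{chewi2023log}, so there is no in-paper argument to compare against. What you supply are the two standard derivations. The Otto--Villani route is carried out with the right constants: Bakry--\'Emery for an $\alpha$-strongly log-concave $\pi$ gives $\FI{\rho}{\pi}\ge 2\alpha\,\KL{\rho}{\pi}$ (beware that this differs by a factor of $2$ from the paper's normalization of \ref{eq:LSI}, as you correctly note), and the length estimate $W_2(\rho,\pi)\le\int_0^\infty\sqrt{\FI{\rho_t}{\pi}}\,dt\le\sqrt{2\KL{\rho}{\pi}/\alpha}$ squares to exactly the claimed bound. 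The only substantive gap you leave open is the one you flag yourself: identifying the metric derivative of the Langevin flow with $\sqrt{\FI{\rho_t}{\pi}}$ and upgrading $\rho_t\rightharpoonup\pi$ to $W_2(\rho_t,\pi)\to0$ (KL-convergence alone does not give $W_2$-convergence without control of second moments). A clean way to close it within your own argument: the computation already shows the curve has finite $W_2$-length on $[0,\infty)$, so $(\rho_t)$ is $W_2$-Cauchy and converges in $W_2$ to some limit, which must be $\pi$ by weak convergence. Your second, displacement-convexity argument sidesteps this regularity issue entirely (at the cost of invoking McCann's theorem and an approximation to reach general $\rho$), and is arguably the shorter self-contained proof; for the paper's purposes, citing the inequality as classical, as the authors do, is of course also sufficient.
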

\section{Proofs for Annealed Langevin}\label{appx:annealedLMC}
In this section, we elaborate on the proofs of section \ref{sec:annealedLMC}. Recall our general strategy for sampling.
\begin{figure}[h]
    \centering
    \includegraphics[width=0.5\linewidth]{images/annealed_path.pdf}
    \caption{(1.) We sample using LMC from $\mu_T\approx\mu_\infty$. (2.) We run \ref{eq:AnnealedLMC} along the path $t\mapsto\mu_t$.}
    \label{fig:enter-label}
\end{figure}
We begin by showing that the limiting distribution exists $\lim_{t\to\infty}\mu_t = \mu_\infty$. 
\begin{lemma}
\label{lem:mu_infty_exists}
    Let $\mu_t = p_te^{-R}/Z$. The sequence $\mu_t$ converges weakly to $\mu_\infty = \gamma e^{-R}/Z$.
\end{lemma}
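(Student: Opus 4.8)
The plan is to reduce the claim to the standard fact that the Ornstein--Uhlenbeck semigroup mixes to its Gaussian stationary law, and then to push weak convergence of $p_t$ through the bounded, continuous tilt $e^{-R}$.

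First, recall that $p_t$ is the law of $X_t$, where $X_t$ solves the \ref{eq:OU} dynamics started from $X_0 \sim p_0$. Solving the SDE explicitly gives $X_t \overset{d}{=} e^{-t} X_0 + \sqrt{1-e^{-2t}}\,\eta$ with $\eta \sim \gamma$ independent of $X_0$. By Assumption \ref{assumptions}(i) the prior $p_0$ has finite moments, so $e^{-t}X_0 \to 0$ almost surely (hence in probability) while $\sqrt{1-e^{-2t}} \to 1$ as $t \to \infty$; by Slutsky's theorem $X_t \Rightarrow \eta \sim \gamma$. Equivalently, $p_t \to \gamma$ weakly.

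Second, because $R$ is smooth with $R \ge 0$ (Assumption \ref{assumptions}(iii)), the function $e^{-R}$ is continuous and bounded, with $0 < e^{-R} \le 1$ everywhere. Hence for any bounded continuous test function $\phi:\mathbb{R}^d\to\mathbb{R}$ the product $\phi\, e^{-R}$ is bounded and continuous, and weak convergence $p_t \to \gamma$ gives $\int \phi\, e^{-R}\, dp_t \to \int \phi\, e^{-R}\, d\gamma$. Taking $\phi \equiv 1$ shows the normalizing constants converge: $Z_t := \int e^{-R}\, dp_t \to \int e^{-R}\, d\gamma =: Z_\infty$. The limit is strictly positive: since $R$ is continuous with $R(\mathfrak{x}) = 0$, there is a ball $B$ around $\mathfrak{x}$ on which $R \le 1$, so $Z_\infty \ge e^{-1}\gamma(B) > 0$; likewise each $Z_t \in (0,1]$, so all the normalizations in $\mu_t = p_t e^{-R}/Z_t$ and $\mu_\infty = \gamma e^{-R}/Z_\infty$ are well defined.

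Finally, combine the two limits: for any bounded continuous $\phi$,
\[
\int \phi\, d\mu_t \;=\; \frac{\int \phi\, e^{-R}\, dp_t}{Z_t} \;\xrightarrow[t\to\infty]{}\; \frac{\int \phi\, e^{-R}\, d\gamma}{Z_\infty} \;=\; \int \phi\, d\mu_\infty,
\]
where we used $Z_t \to Z_\infty > 0$ to pass the limit through the ratio. This is exactly weak convergence $\mu_t \Rightarrow \mu_\infty = \gamma e^{-R}/Z_\infty$, proving the lemma. There is no serious obstacle here; the only points that need (minor) care are checking that the normalizing constants converge to a strictly positive limit, so that division is continuous at the limit, and invoking subgaussianity of $p_0$ to send the $e^{-t}X_0$ term to zero cleanly.
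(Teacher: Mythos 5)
Your proof is correct and takes a genuinely different (and arguably cleaner) route than the paper. The paper works at the level of densities: it writes $p_t(x) = \int e^{td}p(e^t(x-y))\gamma_{1-e^{-2t}}(y)\,dy$, argues pointwise that this converges to $\gamma(x)$ using a distributional-limit heuristic for $e^{td}p(e^t\cdot)\to\delta$, and then justifies swapping limits and integrals via the dominated convergence theorem, with the dominating function coming from the density bound $p_t \le (2\pi(1-e^{-2t}))^{-d/2}$ (Lemma~\ref{lem:density_upper_bound}). You instead use the pathwise representation $X_t \overset{d}{=} e^{-t}X_0 + \sqrt{1-e^{-2t}}\,\eta$ of the OU semigroup to get weak convergence $p_t \Rightarrow \gamma$ directly from Slutsky, and then push the limit through the bounded continuous tilt $e^{-R}$ and the normalizing constants. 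Your route avoids the density-level manipulations and DCT entirely, requires no integrability bookkeeping for $\phi e^{-R}$, and sidesteps the slightly informal $\delta$-limit step in the paper; it also makes the role of $R\ge 0$ transparent (it is exactly what gives boundedness of the tilt). One small technical remark: the paper tests against $C_c^\infty$ while you test against $C_b$; both suffice here (for probability measures, vague convergence plus the easily-checked tightness of $\{p_t\}$ upgrades to weak convergence), so the conclusions agree. Both proofs correctly flag that the key point needing care is positivity of the limiting partition function, and you handle it by the same ball-around-$\mathfrak{x}$ argument.
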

\begin{proof}
    First note that if $p\in C_c^\infty(\mathbb{R})$, then $\lim_{t\to\infty} e^{td}p(e^tx) = \delta$ in the sense of distributions. We need to show for every $\phi\in C_c^\infty(\mathbb{R})$ that $\E_{\mu_\infty} \phi= \lim_{t\to \infty}\E_{\mu_t} \phi$. We have
    \begin{align*}
        \lim_{t\to\infty}\E_{\mu_t}\phi &= \lim_{t\to\infty}\frac{\int \phi(x)e^{-R(x)}p_t(x)~dx}{\int e^{-R(x)}p_t(x)~dx}\\
        &= \frac{\lim_{t\to\infty}\int \phi(x)e^{-R(x)}p_t(x)~dx}{\lim_{t\to\infty}\int e^{-R(x)}p_t(x)~dx}\\
        &= \frac{\int \lim_{t\to\infty}\phi(x)e^{-R(x)}p_t(x)~dx}{\int \lim_{t\to\infty}e^{-R(x)}p_t(x)~dx}\\
        &= \frac{\int \phi(x)e^{-R(x)}\gamma(x)~dx}{\int e^{-R(x)}\gamma(x)~dx}\\
        &= \E_{\mu_{\infty}}\phi
    \end{align*}
    The second equality holds as long as $\lim_{t\to\infty}\int e^{-R(x)}\left(\int e^{td}p(e^t(x-y))\gamma_{1-e^{-2t}}(y)~dy\right)~dx\ne0$. The third requires dominated convergence for $p_t(x)e^{-R(x)}\phi(x)$ and $p_t(x)e^{-R(x)}$. The fourth requires $\lim_{t\to\infty}p_t = \gamma$. We will confirm these below in reverse order. First we have
    \begin{align*}
    \lim_{t\to\infty}p_t 
    &= \lim_{t\to\infty} \int e^{td}p(e^t(x-y))\gamma_{1-e^{-2t}}(y)~dy\\
    &= \int  \lim_{t\to\infty} \left(e^{td}p(e^t(x-y))\gamma_{1-e^{-2t}}(y)\right)~dy\\
    &= \int  \left(\lim_{t\to\infty} e^{td}p(e^t(x-y))\right)\left(\lim_{t\to\infty} \gamma_{1-e^{-2t}}(y)\right)~dy\\
    &= \int \delta(x-y)\gamma(y) ~dy = \gamma
    \end{align*}
    From \ref{lem:density_upper_bound}, we know $p_t e^{-R(x)}\phi(x)\le \frac{1}{(1-e^{-2t})^{d/2}}e^{-R(x)}\phi(x)$ pointwise, and $\int \frac{1}{(1-e^{-2t})^{d/2}}e^{-R(x)}\phi(x)~dx=\frac{1}{(1-e^{-2t})^{d/2}}\int e^{-R(x)}\phi(x)~dx$. Because $e^{-R}$ and $\phi$ are both square integrable, $e^{-R}\phi$ is integrable from Cauchy Schwartz, and we can use the dominated convergence theorem to show that $$\lim_{t\to\infty}\int e^{-R(x)}p_t(x)\phi(x)~dx= \int \lim_{t\to\infty}e^{-R(x)}p_t(x)\phi(x)~dx.$$ We can show similarly that $$\lim_{t\to\infty}\int e^{-R(x)}p_t(x)~dx= \int \lim_{t\to\infty}e^{-R(x)}p_t(x)~dx.$$ Finally, we have $$\lim_{t\to\infty}\int e^{-R(x)}p_t(x)~dx= \int \lim_{t\to\infty}e^{-R(x)}p_t(x)~dx = \int e^{-R(x)}\gamma(x) ~dx > 0.$$
\end{proof}
This distribution is log-concave, and we can show that LMC converges quickly to $\mu_\infty$. Let $\textsf{Law}(X_t)$ denote the law of $X_t$ when $X_0\sim\gamma$ and we run \ref{eq:LMC} towards $\mu_\infty$ for time $T$ (Line 4 of Algorithm \ref{alg:AULA}). We show that $\rho_{ws}\approx\mu_\infty\approx\mu_{T_{ws}}$ for sufficiently large $T_{ws}, T$. The standard results on \ref{eq:LMC} convergence are usually given in terms of the $\textsf{KL}$ divergence between the law of the iterate and the target distribution. To apply Girsanov's Theorem \ref{lem:girsanov} later in \ref{thm:KLbound} we need the $\textsf{KL}$ divergence between the target and the law of the iterate. 

\Warmstart*
\begin{proof}
We will do this in three steps. First, we will show that standard results in this setting bound $\KL{\textsf{Law}(X_T)}{\mu_\infty}$. Then we will bound $\KL{\mu_\infty}{\textsf{Law}(X_T)}$ from $\KL{\textsf{Law}(X_T)}{\mu_\infty}$. In general, we cannot reverse the order of the arguments in a $\textsf{KL}$ divergence but we can under some conditions (log-concavity + lipschitzness of the scores + subgaussian target), and then show that $\KL{\mu_{T_{ws}}}{\textsf{Law}(X_T)}$ is small.

\textbf{Step 1. Showing that} $\KL{\textsf{Law}(X_T)}{\mu_{\infty}} < \epsilon$

The drift term 
$$\nabla \log \mu_\infty = \nabla \log(\gamma e^{-R}/Z)=-x-\nabla R$$
satisfies 
$$\Vert \nabla (-x-\nabla R)\Vert\le \sqrt{d}+\Vert \nabla^2R\Vert\le \sqrt{d}+\mathfrak{R},$$
and also $\Vert \nabla (x-\nabla R)\Vert\ge d$ from convexity of $R$, so $\mu_\infty$ is $d-$log-concave. From Lemma \ref{lem:LMC_convergence} (which is from \cite{vempala2022rapidconvergenceunadjustedlangevin}), we see that we can take $\beta = 1+\mathfrak{R}$, $\alpha = 1+\mathfrak{R}$, $\delta \asymp \frac{\epsilon^2}{(1+\mathfrak{R})d}$ and to get that at $T = \mathcal{O}\left(\frac{d}{\epsilon^2}\log\frac{\KL{\gamma}{\mu_\infty}}{\epsilon^2}\right)$ iterations we have $\KL{\text{Law}(X_T)}{\mu_\infty}\le \epsilon^2$.

\textbf{Step 2. Showing that} $\KL{\mu_{\infty}}{\textsf{Law}(X_T)} < \epsilon$. 

By Lemma \ref{lem:HWI} we have $$\KL{\mu_\infty}{\textsf{Law}(X_T)}\le W_2(\textsf{Law}(X_T), \mu_\infty)\sqrt{\FI{\mu_\infty}{\textsf{Law}(X_T)}}.$$
The Fisher divergence is bounded by a dimension dependent constant
\begin{align*}
    \FI{\mu_\infty}{\textsf{Law}(X_T)} 
    &= \E_{\mu_\infty}\Vert\nabla \log \mu_\infty - \nabla\log \textsf{Law}(X_T)\Vert^2\\
    &\le 2\E_{\mu_\infty}\Vert \nabla \log \mu_\infty\Vert^2 + 2\E_{\mu_\infty}\Vert \nabla \log \textsf{Law}(X_T)\Vert^2\\
    &\le \text{poly}(\mathfrak{m, R, L}, d)
\end{align*}

Overall we get $\KL{\mu_\infty}{\textsf{Law}(X_T)}\le \text{poly}(\mathfrak{m, R, L})W_2(\textsf{Law}(X_T), \mu_\infty)$.

Note that $\mu_\infty$ is at least $1-$strongly log-concave, so we have from Talagrands transportation inequality \ref{lem:talagrand}
\begin{align*}
\KL{\mu_\infty}{\textsf{Law}(X_T)}
&\le  \text{poly}(\mathfrak{m, R, L}) ~W_2(\textsf{Law}(X_T), \mu_\infty)\\
&\le \text{poly}(\mathfrak{m, R, L}) \sqrt{\KL{\textsf{Law}(X_T)}{\mu_\infty}}\le \text{poly}(\mathfrak{m, R, L})~\epsilon\\
\end{align*}

\textbf{Step 3. Showing that} $\KL{\mu_{T_{ws}}}{\textsf{Law}(X_T)} < \epsilon$

We can now also show that $\KL{\rho_{T_{\text{ws}}}}{\mu_{T_{\text{ws}}}}$ is small
\begin{align*}
\KL{\mu_{T_{\text{ws}}}}{\textsf{Law}(X_T)}
&= \E_{\mu_{T_{ws}}} \log \mu_{T_{ws}} - \log \textsf{Law}(X_T)\\
&= \E_{\mu_{T_{ws}}} \log \mu_{T_{ws}} - \log \mu_\infty+\log \mu_\infty-\log \textsf{Law}(X_T)\\
&= \KL{\mu_{T_{ws}}}{\mu_\infty} + \E_{\mu_{T_{ws}}}\left(\log \mu_\infty-\log \textsf{Law}(X_T)\right)\\
&= \E_{\mu_\infty} \left(\log \mu_\infty - \log \textsf{Law}(X_T)\right)\frac{\mu_{T_{ws}}}{\mu_\infty}\\
&\le \E_{\mu_\infty} \left[\left(\log \mu_\infty - \log \textsf{Law}(X_T)\right)\right]\sup_x\frac{\mu_{T_{ws}}(x)}{\mu_\infty(x)}\\
&= \KL{\mu_\infty}{\textsf{Law}(X_T)}\sup_x\frac{\mu_{T_{ws}}(x)}{\mu_\infty(x)}\\
&= \KL{\mu_\infty}{\textsf{Law}(X_T)}e^{\sup_x |\log \mu_{T_{ws}}-\log\mu_\infty|}
\end{align*}
We have from Lemma \ref{lem:mu_t_tail}
\begin{align*}
    e^{\sup_x |\log \mu_{T_{ws}}-\log\mu_\infty|}\le e^{\frac{e^{-2T_{ws}}}{1-e^{-2T_{ws}}}\text{poly}(\mathfrak{m, L, R}, d)}
\end{align*}

So if we set $T_{ws}=\mathcal{O}(\log \frac{d}{\epsilon})$, we get $\KL{\textsf{Law}(X_T)}{\mu_{T_{ws}}}<\text{poly}(\mathfrak{m, R, L})\epsilon$. 
\end{proof}

A map $t\mapsto \pi_t$ from $[0, T]\to \mathcal{P}_2(\mathbb{R}^d)$ is \textit{absolutely continuous} if for all $t$, $$|\dot{\mu}(t)|:= \lim_{\delta\to 0}\frac{W_2(\mu_t, \mu_{t+\delta})}{\delta}<\infty.$$ Consider the continuity equation $\partial\pi_t = -\nabla\cdot(\pi_t v_t)$. Any choice of $v_t$ results in a curve $t\mapsto \pi_t$, but, conversely if $t\mapsto \pi_t$ is an absolutely continuous curve, there exists a choice of $v_t$, such that $\partial_t\pi_t = -\nabla\cdot(\pi_t v_t)$ and $\Vert v_t\Vert_{L_2(\pi_t)}\le |\dot\mu(t)|$. We refer the reader to \cite{chewi2023log} or \cite{ambrosio2008gradientflows} for a more elaborate exposition. In order to use Girsanov's Theorem to bound the $\textsf{KL}$ distance for the drift between the target and the law of the iterate during annealed LMC, we will need to bound this derivative $|\dot \mu(t)|$.

\begin{lemma}
\label{lem:absolutely_continuous}
    The path $t\mapsto \mu_t$ is an absolutely continuous curve. There exists a velocity field $v_t$ satisfying $\partial_t \mu_t = -\nabla\cdot(\mu_t v_t)$, and $$\Vert v_t\Vert_{L_2(\mu_t)} \le\frac{de^{-t}}{(1-e^{-2t})^4}\text{poly}(\mathfrak{m, R, L}).$$
\end{lemma}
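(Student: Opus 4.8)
The plan is to exhibit a velocity field $v_t$ solving $\partial_t\mu_t = -\nabla\cdot(\mu_t v_t)$ together with the stated bound on $\Vert v_t\Vert_{L_2(\mu_t)}$; absolute continuity of $t\mapsto\mu_t$ then follows from the standard fact (see \cite{ambrosio2008gradientflows,chewi2023log}) that a solution of the continuity equation with $t$-integrable $\Vert v_t\Vert_{L_2(\mu_t)}$ forces $t\mapsto\mu_t$ to be absolutely continuous with $|\dot\mu(t)|\le\Vert v_t\Vert_{L_2(\mu_t)}$. So the whole content is the velocity bound.

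\textbf{Step 1: decompose $\partial_t\mu_t$.} Using $\mu_t = p_te^{-R}/Z_t$ with $Z_t=\int p_te^{-R}$ and the OU Fokker--Planck equation $\partial_t p_t = \nabla\cdot\big(p_t(x+\nabla\log p_t)\big)$ (equivalently $\partial_t\log p_t = d + x\cdot\nabla\log p_t + \Delta\log p_t + \Vert\nabla\log p_t\Vert^2$), a product-rule manipulation (Lemma~\ref{lem:identities}) yields
\begin{equation*}
\partial_t\mu_t = -\nabla\cdot(\mu_t w_t) + \mu_t r_t,\qquad w_t := -(x+\nabla\log p_t),\quad r_t := \nabla R\cdot(x+\nabla\log p_t) - \partial_t\log Z_t,
\end{equation*}
where $\partial_t\log Z_t = \E_{\mu_t}[\partial_t\log p_t]$; one checks $\E_{\mu_t}[r_t]=0$. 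The first term is already in divergence form and contributes $w_t$ to $v_t$; its norm is controlled by Tweedie's identity \eqref{eq:tweedie}, which gives $x+\nabla\log p_t = \tfrac{e^{-t}}{1-e^{-2t}}\big(\E[X_0\mid X_t=x]-e^{-t}x\big)$ --- the source of the factor $\tfrac{e^{-t}}{1-e^{-2t}}$ --- together with the sub-Gaussianity of $\mu_t$ (inherited from that of $p_t$ and the $\mathfrak{R}$-strong log-concavity of $e^{-R}$) and the Lipschitz control on $\nabla R$. The term $\nabla R\cdot(x+\nabla\log p_t)$ inside $r_t$ likewise carries a factor $(1-e^{-2t})^{-1}$.

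\textbf{Step 2 (the crux): the scalar source $\mu_t r_t$.} Here one must build $u_t$ with $-\nabla\cdot(\mu_t u_t)=\mu_t r_t$ and bound $\Vert u_t\Vert_{L_2(\mu_t)}$. One cannot route this through a Poincar\'e/log-Sobolev inequality for $\mu_t$: the point of this path (cf.\ Proposition~\ref{thm:LSIlowerbound} and the discussion around Figure~\ref{fig:example}) is that $\mu_t$ may have an exponentially bad functional-inequality constant, so any estimate going through such a constant would be useless. Instead I would use the Gaussian-smoothing structure of $p_t$. Writing $p_t = q_t*\gamma_{1-e^{-2t}}$ with $q_t(x_0)=e^{td}p(e^tx_0)$ represents $\mu_t$ as a mixture $\mu_t=\int\alpha_t(x_0)\,\mu_t^{x_0}\,dx_0$ of the components $\mu_t^{x_0}\propto e^{-R}\gamma_{1-e^{-2t}}(\cdot-e^{-t}x_0)$, each of which is $\big(\mathfrak{R}+\tfrac1{1-e^{-2t}}\big)$-strongly log-concave, with mixing weights $\alpha_t(x_0)\propto p(x_0)\,W_t(e^{-t}x_0)$, where $W_t(y)=\int e^{-R}\gamma_{1-e^{-2t}}(\cdot-y)$. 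The $t$-evolution of $\mu_t$ then splits into (a) the intrinsic evolution of the components, which depend smoothly on $(e^{-t}x_0,\,1-e^{-2t})$ and are strongly log-concave, so their velocity fields are explicit and bounded, and (b) the evolution of the mixing weights, whose rate is governed by $\partial_t\log\alpha_t = \partial_t\log W_t(e^{-t}\cdot)-\partial_t\log Z_t$. Part (b) is the delicate one, and it is exactly here that the global pointwise bounds on $\partial_t\log\mu_t$, $\nabla\log p_t$ and $\Delta\log p_t$ over the effective sub-Gaussian support of $\mu_t$ (established in Appendix~\ref{appx:bounds}) enter: they control the speed at which the mixing measure deforms, and --- crucially --- any inter-component mass transfer is, by Tweedie, smeared over a Gaussian of width $\sqrt{1-e^{-2t}}$, which is what keeps the induced contribution to $v_t$ polynomial in $(1-e^{-2t})^{-1}$ rather than exponential. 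Averaging the component-level velocities by conditioning, $v_t(x)=\E[\,\cdot\mid X_t=x]$, and applying Jensen's inequality turns all of this into an $L_2(\mu_t)$ bound with no functional-inequality dependence.

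\textbf{Step 3: assemble.} Set $v_t=w_t+u_t$, verify it solves the continuity equation, and collect the powers: the $(1-e^{-2t})^{-2}$ from the $\Vert\nabla\log p_t\Vert^2$ term in $r_t$, the $(1-e^{-2t})^{-1}$ from the component log-concavity constants, and a further $(1-e^{-2t})^{-1}$ from $\partial_t$-derivatives of $(1-e^{-2t})$ combine to give $\Vert v_t\Vert_{L_2(\mu_t)}\le \tfrac{e^{-t}}{(1-e^{-2t})^4}\,\text{poly}(\mathfrak{m, R, L}, d)$. I expect Step~2 to be the main obstacle: producing a divergence-form representation of $\mu_t r_t$ whose $L_2(\mu_t)$ norm is polynomial in $(1-e^{-2t})^{-1}$ --- rather than one that degrades with the (possibly enormous) mixing time of $\mu_t$ --- is precisely where the Gaussian-smoothing structure must be used in an essential, non-black-box way, and it is also why the bound necessarily blows up as $t\to 0$.
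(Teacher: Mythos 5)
Your proposal takes a genuinely different route from the paper's, and it has a real gap at the step you yourself flag as the crux.

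The paper does not attempt to construct the velocity field at all. It bounds the metric derivative $|\dot\mu|(t)=\lim_{\delta\to0}W_2(\mu_t,\mu_{t+\delta})/\delta$ directly, via the $W_1$-dual formulation over $1$-Lipschitz test functions $f$: since $\lim_{\delta\to0}\tfrac{1}{\delta}\int f\,d(\mu_t-\mu_{t-\delta})=\int f\,(\partial_t\log\mu_t)\,\mu_t\,dx$, the pointwise polynomial bound on $|\partial_t\log\mu_t|$ (Lemma~\ref{lem:dtlog_mut_upper_bound}) together with the subgaussian moment bounds (Lemmas~\ref{lem:subgaussian_posterior}, \ref{lem:subgaussian_moment_bound}) control this integral, and the $W_1$ bound is then converted to a $W_2$ bound. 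The existence of a velocity field with $\Vert v_t\Vert_{L_2(\mu_t)}\le|\dot\mu|(t)$ is then taken for free from the abstract theory of absolutely continuous curves in Wasserstein space (the result the paper cites from \cite{ambrosio2008gradientflows,chewi2023log}). Your approach inverts this: you posit a velocity field of the form $v_t=w_t+u_t$ and try to bound it, which is more information than the lemma asks for.

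The gap is Step~2. You correctly observe that the generic route to bounding the solution $u_t$ of $-\nabla\cdot(\mu_t u_t)=\mu_t r_t$ in $L_2(\mu_t)$ goes through a Poincar\'e inequality for $\mu_t$, and that this would be disastrous here. But the fix you sketch --- decompose $\mu_t$ into strongly log-concave components $\mu_t^{x_0}$, handle the component evolutions explicitly, and argue that the evolution of the mixing weights $\alpha_t(x_0)$ only induces velocity fields that are "smeared over a Gaussian of width $\sqrt{1-e^{-2t}}$" --- is never actually carried out. Transporting mass between components of a mixture, when those components sit far apart and their relative weights are shifting (exactly the "discontinuity" scenario of Figure~\ref{fig:example}), is the hard case; asserting that Tweedie's conditional smoothing keeps the induced velocity polynomial in $(1-e^{-2t})^{-1}$ is plausible intuition, but you give no construction of the coupling of the joint law of $(X_0,X_t)$ across nearby $t$, and hence no $V_t$ to condition on and apply Jensen to. The paper's proof circumvents this entirely: because it works with $|\dot\mu|(t)$ rather than an explicit tangent vector, it never has to invert the divergence operator. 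If you want to salvage your construction, the missing ingredient is a concrete coupling of the mixture representation across time (i.e., a curve of plans, not just a curve of marginals) with an explicit $L_2$ bound on its speed --- that is genuinely more work than what the lemma needs, and the paper's $W_1$-duality shortcut is the cleaner move.
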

\begin{proof}
    We have $W_1(\mu, \nu) = \inf_{(X, Y)\sim \pi, \pi_X = \mu, \pi_Y = \nu}\int |X-Y|~d\pi$. From duality we get the following equivalent characterization
    \begin{equation}
    W_1(\mu, \nu) = \sup\left\{\int f~d\left(\mu-\nu\right)\bigg|~~\text{Lip}(f)\le 1\right\}  
    \end{equation}
    To tie this to $W_2$, recall that for all $\mathfrak{m}-$subgaussian $\mu, \nu$, we have $W_2(\mu, \nu)\le \sqrt{\mathfrak{m}}W_1(\mu, \nu)$.
    Without loss of generality we can assume $f\ge 0$, because for any constant $c$, in particular for $\inf f$, we have $\int f~d\left(\mu-\nu\right)=\int (f-c)~d\left(\mu-\nu\right)$. 
    
    So we have 
    \begin{align*}
        W_1(\mu, \nu) 
        &= \sup\left\{\int f~d\left(\mu-\nu\right)\bigg|~~\text{Lip}(f)\le 1\right\} \\
        &= \sup\left\{\int f~d\left(\mu-\nu\right)-\int\inf f ~d(\mu-\nu)\bigg|~~\text{Lip}(f)\le 1\right\} \\
        &= \sup\left\{\int f~d\left(\mu-\nu\right)\bigg|~~\text{Lip}(f)\le 1, f\ge 0\right\} \\
    \end{align*}

    Take any specific $f$. From $\text{Lip}(f)\le 1$, we have $f\le \Vert x\Vert$, and from Lemma~\ref{lem:dtlog_mut_upper_bound} we have $|\partial_t \log \mu_t|\le \frac{e^{-t}}{(1-e^{-2t})^4}\sum_{i=0}^2 a_i \Vert x\Vert^i$ for $a_i = d~\text{poly}(\mathfrak{m, L, R})$. Putting these together we have $$f |\partial_t\log \mu_t|\le \frac{e^{-t}}{(1-e^{-2t})^4}\sum_{i=0}^2 a_i \Vert x\Vert^i.$$
     From Lemmas \ref{lem:subgaussian_posterior} and \ref{lem:subgaussian_moment_bound} we have $\E_{\mu_t} f|\partial_t\log \mu_t|\le \frac{e^{-t}d}{(1-e^{-2t})^4}\text{poly}(\mathfrak{m, R, L})$. 

     From this, we have for all Lipschitz $f$ $$\int f~d\mu_t-~d\mu_{t+\delta} = \int_{t}^{t+\delta}\int f\partial_t \log \mu_t ~d\mu_t \le  \frac{e^{-t}d}{(1-e^{-2t})^4}\text{poly}(\mathfrak{m, R, L})\delta.$$
     
     Since this is true for all $f$, this shows uniform convergence of $\int f~d\mu_{t+\delta}$ to $\int f~d\mu_t$. In particular, this means it is also true of the supremum 
     $$\lim_{\delta \to 0} W_2(\mu_t, \mu_{t+\delta})\le \sqrt{\mathfrak{m}}\lim_{\delta\to 0}\frac{\sup \int f~d\left(\mu_t-\mu_{t-\delta}\right)}{\delta} = \frac{\sup\int f (\partial_t\ln\mu_t) \mu_t~dx}{\delta} \le \frac{e^{-t}d}{(1-e^{-2t})^4}\text{poly}(\mathfrak{m, R, L}).$$
\end{proof}

\FIbound*

\begin{proof}
    We use the following from Appendix C of \cite{balasubramanian2022theorynonlogconcavesamplingfirstorder}. We have that $\nabla \log \mu_{i\delta/\kappa}$ is $\mathfrak{L}$ Lipshitz
    \begin{equation*}
    \KL{\rho_{i\delta+\delta}}{\mu_{i\delta/\kappa}}-\KL{\rho_{i\delta}}{\mu_{i\delta/\kappa}} \ge \frac{1}{2}\int_{i\delta}^{i\delta+\delta}\FI{\rho_{i\delta+\delta}}{\mu_{i\delta/\kappa}}-4\mathfrak{L}^2d\delta^2
    \end{equation*}
    and
    \begin{align*}
    &\KL{\rho_{i\delta}}{\mu_{(i-1)\delta/\kappa}}-\KL{\rho_{i\delta}}{\mu_{i\delta/\kappa}} \\
    &\hspace{1cm}=\E_{\rho_{i\delta}}\log \frac{\rho_{i\delta}}{\mu_{(i-1)\delta/\kappa}}-\E_{\rho_{i\delta}}\log \frac{\rho_{i\delta}}{\mu_{i\delta/\kappa}}=\E_{\rho_{i\delta}}\log \frac{\mu_{i\delta/\kappa}}{\mu_{(i-1)\delta/\kappa}}
    \end{align*}

    Putting these together we have
    \begin{align*}
        \KL{\rho_{(i\delta+\delta}}{\mu_{i\delta/\kappa}}-\KL{\rho_{i\delta}}{\mu_{(i-1)\delta/\kappa}}+\E_{\rho_{i\delta}}\log \frac{\mu_{i\delta/\kappa}}{\mu_{(i-1)\delta/\kappa}}\ge \frac{1}{2}\int_{i\delta}^{i\delta+\delta}\FI{\rho_{t}}{\mu_{i\delta/\kappa}}~dt-4\mathfrak{L}^2d\delta^2
    \end{align*}
    We can telescope this:
    \begin{align*}
        &\sum_{i=i_*}^{T_{ws}\kappa/\delta} \left(\KL{\rho_{i\delta+\delta}}{\mu_{i\delta/\kappa}}-\KL{\rho_{i\delta}}{\mu_{(i-1)\delta/\kappa}}+\E_{\rho_{i\delta}}\log \frac{\mu_{i\delta/\kappa}}{\mu_{(i-1)\delta/\kappa}}\right)\\
        &\hspace{4cm}\ge \sum_{i=i_*}^{T_{ws}\kappa/\delta}\frac{1}{2}\left(\int_{i\delta}^{i\delta+\delta}\FI{\rho_{t}}{\mu_{i\delta/\kappa}}~dt-4\mathfrak{L}^2d\delta^2\right)\\
        &\implies \KL{\rho_{T}}{\mu_{T-\delta}}-\KL{\rho_\delta}{\mu_{i_*\delta/\kappa}}+\sum_{i=i_*}^{T_{ws}\kappa/\delta} \E_{\rho_{i\delta}}\log \frac{\mu_{i\delta/\kappa}}{\mu_{(i-1)\delta/\kappa}}\\
        &\hspace{4cm}\ge \sum_{i=i_*}^{T_{ws}\kappa/\delta}\frac{1}{2}\int_{i\delta}^{(i+1)\delta}\FI{\rho_{t}}{\mu_{i\delta/\kappa}}~dt-4\mathfrak{L}^2d\delta T_{ws}\kappa
    \end{align*}
    We need to bound $\sum \E_{\rho_{i\delta}} \log \frac{\mu_{i\delta/\kappa}}{\mu_{(i-1)\delta/\kappa}}$. Because $\rho_{i\delta}$ is $\mathfrak{m}-$subgaussian, we have
    \begin{align*}
        \sum \E_{\rho_{i\delta}}\log \frac{\mu_{i\delta/\kappa}}{\mu_{(i-1)\delta/\kappa}}
        &\le \sum \E_{\rho_{i\delta}} \log \frac{\mu_{i\delta/\kappa}}{\mu_{(i-1)\delta/\kappa}}\\
        &= \sum\E_{\rho_{i\delta}}\int_{(i-1)\delta/\kappa}^{i\delta}\partial_t \log \mu_{t}~dt \le\sum\int_{(i-1)\delta}^{i\delta} \E_{\rho_{i\delta}}|\partial_t \log \mu_{t}|~dt \\
        &\le \sum\int_{(i-1)\delta/\kappa}^{i\delta/\kappa} \frac{de^{-t}}{(1-e^{-2t})^4}\text{poly}(\mathfrak{m, R, L})~dt\\
        &=\frac{de^{-(T_{ws}\kappa/\delta)^\alpha\delta/\kappa}}{(1-e^{-2(T_{ws}\kappa/\delta)^\alpha\delta/\kappa})^4}\text{poly}(\mathfrak{m, R, L})
    \end{align*}
    so if $(T_{ws}\kappa/\delta)^\alpha\delta/\kappa<1$:
    \begin{align*}
    &\KL{\rho_{T}}{\mu_{T-\delta}}+\frac{d~\text{poly}(\mathfrak{m, R, L})}{16T_{ws}^{4\alpha}(\delta/\kappa)^{4-4\alpha}}+4L^2d\delta T_{ws}\kappa\\
    &\hspace{4cm}\ge \sum_{i=(T_{ws}\kappa/\delta)^\alpha}^{T_{ws}\kappa/\delta}\frac{1}{2}\int_{i\delta}^{(i+1)\delta}\FI{\rho_{t}}{\mu_{i\delta/\kappa}}~dt
    \end{align*}
    In LD, each of the $\textsf{FI}$ are computed with respect to the target distribution, and an average iterate guarantee can be derived using the convexity of $\textsf{FI}$ in its first argument. In our case, the second argument is changing over the course of the integral, so we need a ``triangle inequality'' to change the second argument to $\mu_0$. We have 
    \begin{align*}
        \FI{\rho_t}{\mu_0} 
        &= \E_{\rho_t} \left\Vert \nabla \log \rho_t -\nabla \log \mu_0\right\Vert^2\\
        &\le 2\E_{\rho_t} \left\Vert \nabla \log \rho_t -\nabla \log \mu_t\right\Vert^2+2\E_{\rho_t} \left\Vert \nabla \log \mu_t -\nabla \log \mu_0\right\Vert^2\\
        &\le 2\FI{\rho_t}{\mu_t}+2\E_{\rho_t} \left\Vert \nabla \log p_t -\nabla \log p_0\right\Vert^2\\
        &\le 2 \FI{\rho_t}{\mu_t} + \text{poly}(\mathfrak{m, L}, d)t^2
    \end{align*}
    % It would suffice to have a bound on $\sup_x\Vert \nabla \log p_t(x) - \nabla \log p_0(x)\Vert$ of $\mathcal{O}(t)$ for small $t$. Note that this is different qualitatively from Lemma \ref{lem:mu_t_tail}, which operates for large $t$. Let $p_{t\to 0}(x|x_t) = \Pr\{e^{-t}x+\sqrt{1-e^{-2t}}\eta = x_t, ~\eta\sim \gamma\}$ be the posterior of the OU process conditioned on a future iterate. We have
    % \begin{align*}
    %     \nabla \log p_t(x) = \E_{X\sim p_{t\to 0}(\cdot|x)}\nabla\log p_0(X)
    % \end{align*}

    % By \ref{lem:posterior_spread}, the posterior is contained within a radius of $t(\mathfrak{D}+\mathfrak{L}_0)+\sqrt{2td\log 1/\delta}$ with probability $1-\delta$. In this radius, the score $\nabla \log p_0$ itself varies by at most $d\mathfrak{L}_1t(\mathfrak{D}+\mathfrak{L}_0)+d^{3/2}\mathfrak{L}_1\sqrt{2t\log 1/\delta}$. In general, the scores are bounded everywhere by $\mathfrak{L}_0$, so we have that with probability $1-\frac{\delta}{T_{ws}\kappa}$
    % \begin{align*}
    %     \Vert \nabla \log p_{i\delta/\kappa}(x)-\nabla \log p_0(x)\Vert 
    %     &= \E_{X\sim p_{t\to 0}(\cdot|x)}\nabla\log p_0(X)-\nabla \log p_0(x)\\
    %     &\le d\mathfrak{L}_1i\delta(\mathfrak{D}+\mathfrak{L}_0)/\kappa+d^{3/2}\mathfrak{L}_1\sqrt{i\delta T_{ws}\log (T_{ws}\kappa/\delta)/\kappa}+\mathfrak{L}_0\sqrt{\frac{\delta}{T_{ws}\kappa}}
    % \end{align*}
    % and with probability $\delta/T_{ws}\kappa$, we have $$\Vert \nabla \log p_{i\delta/\kappa}(x)-\nabla \log p_0(x)\Vert \le \Vert \nabla\log p_{i\delta/\kappa}(x)\Vert + \Vert \nabla\log p_0(x)\Veryt\le 2\mathfrak{L}_0.$$
    
    We will use the bound 
    \begin{align*}
    \sum_{i=(T_{ws}\kappa/\delta)^\alpha}^{T_{ws}\kappa/\delta}\frac{1}{2}\int_{i\delta}^{(i+1)\delta}\FI{\rho_{t}}{\mu_{i\delta/\kappa}}~dt
    & \ge (T_{ws}\kappa/\delta)^\alpha\min_{i\in [(T_{ws}\kappa/\delta)^\alpha, 2(T_{ws}\kappa/\delta)^\alpha]}\frac{1}{2}\int_{i\delta}^{(i+1)\delta}\FI{\rho_{t}}{\mu_{i\delta/\kappa}}~dt\\
    & \ge (T_{ws}\kappa/\delta)^\alpha\min_{i\in [(T_{ws}\kappa/\delta)^\alpha, 2(T_{ws}\kappa/\delta)^\alpha]}\min_{t\in [i\delta, i\delta+\delta]}\frac{\delta}{2}\FI{\rho_{t}}{\mu_{i\delta/\kappa}}\\
    \end{align*}
    to get that there exists $~\tau\in [(T_{ws}\kappa/\delta)^\alpha, 2(T_{ws}\kappa/\delta)^\alpha]$ such that 
    
    $$\FI{\rho_{\tau}}{\mu_{i\delta/\kappa}}\le \frac{\KL{\rho_{T_{ws}\kappa}}{\mu_{T_{ws}}}+\frac{\text{poly}(\mathfrak{m, R, L})}{T_{ws}^{4\alpha}(\delta/\kappa)^{4-4\alpha}}+4\mathfrak{L}^2d\delta T_{ws}\kappa}{\delta(T_{ws}\kappa/\delta)^\alpha}$$
    
    From our approximate triangle inequality for $\textsf{FI}$, we have that there exists $~\tau\in [\delta(T_{ws}\kappa/\delta)^\alpha, 2\delta(T_{ws}\kappa/\delta)^\alpha]$ such that
    \begin{align*}
    \FI{\rho_{\tau}}{\mu_{0}}
    &\le 2\FI{\rho_{\tau}}{\mu_{i\delta/\kappa}}+d\text{poly}(\mathfrak{m, L})(T_{ws}\kappa/\delta)^{2\alpha}\delta^2/\kappa^2
    \end{align*}
    Suppose the warm start phase is run such that $\KL{\mu_{T_{ws}}}{\text{Law}(X_T)}\le \epsilon_{ws}$ (recall that this takes time $\text{poly}(1/\epsilon_{ws})$).
    \begin{align*}
        \FI{\rho_{\tau}}{\mu_{0}}
        &\le \frac{\epsilon_{ws}}{T_{ws}^\alpha\kappa^\alpha\delta^{1-\alpha}} + \frac{d\kappa^{4-5\alpha}}{T_{ws}^{5\alpha}\delta^{5-5\alpha}}\text{poly}(\mathfrak{m, R, L}) + d\delta^\alpha T_{ws}^{1-\alpha}\kappa^{1-\alpha} ~\text{poly}(\mathfrak{m, \mathfrak{R}, \mathfrak{L}})\\
        &\quad\quad\quad +d\text{poly}(\mathfrak{m, L})~T_{ws}^{2\alpha}\kappa^{2\alpha-2}\delta^{2-2\alpha}
    \end{align*}
    If we take $\kappa= \delta^{-4}$, we have
    \begin{align*}
        \FI{\rho_{\tau}}{\mu_{0}}
        &\le \frac{\epsilon_{ws}}{T_{ws}^\alpha\kappa^{\frac{5\alpha-1}{4}}} + d\left(\frac{\kappa^{\frac{21-25\alpha}{4}}}{T_{ws}^{5\alpha}} + T_{ws}^{1-\alpha}\kappa^{\frac{4-5\alpha}{4}} +T_{ws}^{2\alpha}\kappa^{\frac{5\alpha-5}{2}}\right)~\text{poly}(\mathfrak{m, \mathfrak{R}, \mathfrak{L}})
    \end{align*}
    Finally, setting $\alpha = \nicefrac{17}{20}$, we have
    \begin{align*}
        FI{\rho_{\tau}}{\mu_{0}}
        &\le \frac{\epsilon_{ws}}{T_{ws}^\alpha\kappa^{\frac{5\alpha-1}{4}}} + d\left(\frac{\kappa^{\frac{21-25\alpha}{4}}}{T_{ws}^{5\alpha}} + T_{ws}^{1-\alpha}\kappa^{\frac{4-5\alpha}{4}} +T_{ws}^{2\alpha}\kappa^{\frac{5\alpha-5}{2}}\right)~\text{poly}(\mathfrak{m, \mathfrak{R}, \mathfrak{L}})\\
        &\le \frac{\epsilon_{ws}}{T_{ws}^{\nicefrac{17}{20}}\kappa^{\nicefrac{13}{16}}} + \frac{d\kappa^{-\nicefrac{1}{16}}}{T_{ws}^{\nicefrac{17}{4}}} + d T_{ws}^{\nicefrac{3}{20}\kappa^{-\nicefrac{1}{16}}}
    \end{align*}
    For our choice of $T, T_{ws}$, we have $\epsilon_{ws}\le \frac{1}{\kappa}$. Overall the bound is 
    \begin{align*}
        \FI{\rho_{\tau}}{\mu_{0}}
        &\le d T_{ws}^{\nicefrac{3}{20}}\kappa^{-\nicefrac{1}{16}}\text{poly}(\mathfrak{m, L, R})
    \end{align*}
\end{proof}

\KLBound*

\begin{proof}
    From Lemma \ref{lem:warmstart} using $T = \mathcal{O}(d^3\kappa\log (\kappa\KL{\gamma}{\mu_\infty}))$, we know that $\KL{\mu_{T_{ws}}}{\textsf{Law}(X_{T_{ws}})}\le \frac{1}{\kappa}$. Because $\lim_{t\to \infty}\mu_t$ is strongly log-concave, as shown in \ref{lem:warmstart} for large $T_{ws}$ we can sample from $\mu_{T_{ws}}$ efficiently.
    From Lemma~\ref{lem:absolutely_continuous} we have 
    \begin{align*}
    \int_t^{T_{ws}} \Vert v_t\Vert^2_{L_2(\mu_t)}dt 
    &\le \int \frac{d^2e^{-2t}}{(1-e^{-2t})^8}~\text{poly}(\mathfrak{m, R, L})~ dt\\
    &\le \frac{d^2e^{-2t}~\text{poly}(\mathfrak{m, R, L})}{(1-e^{-2t})^8}
    \end{align*}
    From here, we adapt the discretization analysis of \cite{guo2024provable}. We will repeat some of it below to highlight just the differences.

    First note that $\nabla \log \mu_t$ inherits Lipschitzness from $\nabla \log p_t$ and $\nabla R$, following Lemma \ref{lem:score_bound}:
    \begin{align*}
        \Vert \nabla \log \mu_t(x)-\nabla \log \mu_t(y)\Vert 
        &\le \Vert \nabla \log p_t(x)-\nabla\log p_t(y)+\nabla R(y)-\nabla R(x)\Vert\\
        &\le (1+\mathfrak{L}e^{-t}+\mathfrak{R})\Vert x-y\Vert
    \end{align*}
    By the corollary of Girsanov's Theorem referenced above, Lemma \ref{lem:girsanov}, we see that 
    \begin{align*}
        \KL{\mu_t}{\rho_t}
        &=\KL{\mu_{T_{ws}}}{\textsf{Law}(X_{T_{ws}})}+\frac{1}{4}\int_t^{T_{ws}} \E_{\{\mu_t\}}~\Vert \left(\nabla \ln \mu_t(X_t)-\nabla\ln\mu_{k\delta}(X_{k\delta})\right)-v_t(X_t)\Vert^2 ~dt\\
        &\le \KL{\mu_{T_{ws}}}{\textsf{Law}(X_{T_{ws}})}+\int_t^{T_{ws}} \E_{\{\mu_t\}}~\Vert \nabla \ln \mu_t(X_t)-\nabla\ln\mu_{k\delta}(X_{k\delta})\Vert^2~dt+\int_t^{T_{ws}}\E_{\{\mu_t\}}\Vert v_t(X_t)\Vert^2 ~dt\\
        &\le \KL{\mu_{T_{ws}}}{\textsf{Law}(X_{T_{ws}})}+\int_t^{T_{ws}} \text{poly}(\mathfrak{R, L})\E_{\{\mu_t\}}~\Vert X_t-X_{k\delta}\Vert^2~dt+\int_t^{T_{ws}}\E_{\{\mu_t\}}\Vert v_t(X_t)\Vert^2 ~dt\\
    \end{align*}
    We bound $X_t-X_{k\delta}$ by
    \begin{align*}
        \Vert X_t-X_{k\delta}\Vert^2
        &=\E_{\{\mu_t\}}\Vert \int_{k\delta}^t (\nabla \ln \mu_t+v_t)(X_t)~dt+\sqrt{2 (t-k\delta)}\eta\Vert^2, \hspace{1cm}\eta\sim \gamma\\
        &\le \int_{k\delta}^t \E_{\{\mu_t\}}\Vert \nabla \ln \mu_t\Vert^2+\int_{k\delta}^t \E_{\{\mu_t\}}\Vert v_t(X_t)\Vert^2~dt+d\delta
    \end{align*}
    We can bound $\E_{\{\mu_t\}}\Vert \nabla \ln \mu_t\Vert^2$. 
    \begin{align*}
        \E_{\{\mu_t\}}\Vert \nabla \log \mu_t\Vert^2
        &\le \E_{\mu_t} \Vert \nabla \log p_t+\nabla R\Vert^2\\
        &\le \E_{\mu_t} \Vert \nabla \log p_t\Vert^2 +\E_{\mu_t}\Vert\nabla R\Vert^2\le \text{poly}(\mathfrak{m, L, R}).
    \end{align*}
    Putting these together, we have
    \begin{align*}
         \KL{\mu_t}{\rho_t}
        &\le \KL{\mu_{T_{ws}}}{\textsf{Law}(X_{T_{ws}})}+(1+ ~\text{poly}(\mathfrak{R, L})))\int_{t}^{T_{ws}} \E_{\{\mu_t\}}\Vert v_t(X_t)\Vert^2~dt+d\delta^2\text{poly}(\mathfrak{R, L})\\
        &\hspace{2cm}+\delta T_{ws}~\text{poly}(\mathfrak{R, L})
    \end{align*}
    An important observation here is that because $v_t$ itself is a Wasserstein gradient, the quantity $\int_t^{T_{ws}}\E_{\{\mu_t\}}\Vert v_t(X_t)\Vert^2 ~dt$ depends inversely on the scale that we use for time. Suppose we reparameterize time to go from $0$ to $T_{ws}\kappa$, rather than $0$ to $T$. Let $\mathcal{A}_{t_1}^{t_2}$ denote the integral $\int_{t_1}^{t_2}\E_{\{\mu_t\}}\Vert v_t(X_t)\Vert ~dt$. Consider the change of variable $s = \kappa t$, so $s$ goes from $0$ to $\kappa T$. Of course, we have the change of variables $ds = \kappa~dt$, but also $v_s = \frac{1}{\kappa}v_t$. Then we have $\int_{t_1 \kappa}^{t_2\kappa}\E_{\{\mu_s\}}\Vert v_s(X_s)\Vert^2 ~ds = \frac{1}{\kappa}\int_{t_1}^{t_2}\E_{\{\mu_t\}}\Vert v_t(X_t)\Vert^2 ~dt$. Over all, we have from lemma~\ref{lem:absolutely_continuous}
    \begin{align*}
        \KL{\mu_t}{\rho_{t\kappa/\delta}}
        &\le \KL{\mu_{T_{ws}}}{\textsf{Law}(X_{T_{ws}})}+\frac{(1+\delta \text{poly}(\mathfrak{R, L})))}{\kappa}\int_{t}^{T_{ws}} \E_{\{\mu_t\}}\Vert v_t(X_t)\Vert^2~dt+\\
        &\quad\quad\quad d\delta^2\text{poly}(\mathfrak{m, R, L})+\delta ~\text{poly}(\mathfrak{R, L})\\
        &\le \KL{\mu_{T_{ws}}}{\textsf{Law}(X_{T_{ws}})}+\frac{(1+\delta~\text{poly}(\mathfrak{m, R, L})))}{T_{ws}\kappa}\frac{d^2}{(1-e^{-2t})^3}\\
        &\quad\quad\quad +d\delta^2~\text{poly}(\mathfrak{m, R, L})+\delta ~\text{poly}(\mathfrak{R, L})\\
        &\le \KL{\mu_{T_{ws}}}{\textsf{Law}(X_{T_{ws}})}+\frac{d^2(1+\delta)~\text{poly}(\mathfrak{m, R, L})}{\kappa t^8}+\mathcal{O}\left(d\delta^2+\delta\right) 
    \end{align*}
    We will take $i_{\textsf{PS}} = (T_{ws}\kappa/\delta)^{\alpha}\delta/\kappa, \delta \asymp \kappa^{-1/4}$. Then we have
    \begin{align*}
    \KL{\mu_{i_{\textsf{PS}}}}{\rho_{i_{\textsf{PS}}\kappa/\delta}}
    &\le \KL{\mu_{T_{ws}}}{\textsf{Law}(X_{T_{ws}})}+\frac{d^2}{T_{ws}^{8\alpha}\kappa^{10\alpha-9}}+\mathcal{O}(d\kappa^{-\frac{1}{2}}+\kappa^{-\frac{1}{4}})
    \end{align*}
    Finally setting, $T = d^3\kappa^2\log \kappa\KL{\gamma}{\mu_\infty}$, $T_{ws} = \log\kappa d$, we have $\epsilon_{ws} = \frac{1}{\kappa}$ and choosing $\alpha = \frac{17}{20}$, we have 
    \begin{align*}
    \KL{\mu_{i_{\textsf{PS}}}}{\rho_{i_{\textsf{PS}}\kappa/\delta}}
    &\le\mathcal{O}(d^2\kappa^{-1/2})
    \end{align*}
\end{proof}

\FinalCorollary*
\begin{proof}
    All that is left to prove is that the run time is polynomial in $\kappa$. Note that we run the warm start phase for $\log \KL{\gamma}{\mu_\infty}/\epsilon$ iterations. Because $\gamma$ and $\mu_\infty$ are log-concave, we get $\KL{\gamma}{\mu_\infty}\le \textsf{LSI}(\mu_\infty)\FI{\gamma}{\mu_\infty} = \mathcal{O}(d)$. The annealing phase lasts $T_{ws}\kappa/\delta = \mathcal{O}(\kappa^{5/4})$ time, since $T_{ws} = \mathcal{O}(\log d/\epsilon)$.
\end{proof}
\section{Miscellaneous Bounds}\label{appx:bounds}
The role of this section is to establish bounds on various quantities. The main one is the global bound on $|\partial_t \log\mu_t|$ for $t>0$, which we use in a couple of places. 
\begin{itemize}
    \item We use it to bound the Wasserstein derivative of the annealed path in Lemma~\ref{lem:absolutely_continuous}, and this is used with Girsanov's Theorem to bound the $\textsf{KL}$ drift between the annealed LMC and the targets in Theorem \ref{thm:KLbound}.
    \item We also use it to bound the $\log \mu_t - \log\mu_\infty$ for large $t$ (Lemma \ref{lem:mu_t_tail}), which is used to show that we can transfer $\textsf{FI}$ bounds from $\log\mu_t$ to $\log \mu_\infty$ in Theorems \ref{lem:warmstart}, \ref{thm:FIbound}.
\end{itemize}

We will begin with a statement about the sub-gaussianity of posteriors from sub-gaussian priors.

\begin{lemma}\label{lem:subgaussian_posterior}
    Let $\mu$ denote the probability distribution of a sub-gaussian random variable with sub-gaussian parameter $\sigma$. Let $R\ge0$ denote a smooth convex function with minima $\mathfrak{x}$ satisfying $R(\mathfrak{x}) = 0$ and $\nabla^2R\preceq\mathfrak{R}I$. Let $\nu\propto \mu e^{-R}$ denote the posterior, and let $Y\sim \nu$. Then we have
    \begin{enumerate}
    \item $\nu$ is sub-gaussian with parameter $3\sigma(\sigma+\mathfrak{x}/2)\sqrt{\mathfrak{R}}$.
    \item $\Vert \E_{\nu} Y\Vert^2\le 3\mathfrak{R}\sigma^2$.
    \item $\E_{\nu} \Vert Y\Vert^2 \le 9\mathfrak{R}\sigma^2(\sigma+\mathfrak{x}/2)^2d + 3\mathfrak{R}\sigma^2$.
    \end{enumerate}
\end{lemma}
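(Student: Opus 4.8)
The plan is to reduce item~(3) to items~(1) and~(2) and prove the latter two from a single change-of-measure estimate. For item~(3), decompose $\E_\nu\|Y\|^2 = \|\E_\nu Y\|^2 + \E_\nu\|Y - \E_\nu Y\|^2$: the first term is exactly what item~(2) controls, and the second is at most $d$ times the square of the sub-gaussian parameter from item~(1), which reproduces the claimed $9\mathfrak{R}\sigma^2(\sigma+\mathfrak{x}/2)^2 d + 3\mathfrak{R}\sigma^2$. So the real work is in items~(1) and~(2), and the common tool for both is a pointwise Gaussian domination of the likelihood: since $R$ is $\mathfrak{R}$-strongly convex with minimizer $\mathfrak{x}$ satisfying $\nabla R(\mathfrak{x})=0$ and $R(\mathfrak{x})=0$, strong convexity gives $R(x)\ge\tfrac{\mathfrak{R}}{2}\|x-\mathfrak{x}\|^2$, hence $e^{-R(x)}\le e^{-\frac{\mathfrak{R}}{2}\|x-\mathfrak{x}\|^2}$ for every $x$.

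For item~(1) I would estimate the tails of $\nu$ centred at $\mathfrak{x}$ through the identity $\Pr_\nu(\|Y-\mathfrak{x}\|>t) = \E_\mu[\mathbf{1}\{\|X-\mathfrak{x}\|>t\}\,e^{-R(X)}]/\E_\mu[e^{-R(X)}]$. For the numerator, split the exponent from the Gaussian domination into two halves, using one half to absorb the event $\{\|X-\mathfrak{x}\|>t\}$ (which contributes a factor like $e^{-\mathfrak{R}t^2/4}$) and integrating the other half against the sub-gaussian law of $\mu$; this is where the $\sigma$ and $(\sigma+\mathfrak{x}/2)$ factors appear, via $\mu$'s sub-gaussian moment/MGF bound. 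The denominator $Z:=\E_\mu[e^{-R(X)}]$ must be bounded below, which I would do using the regularity of $R$ (so that $R$ is small on a ball $B_r(\mathfrak{x})$ of controlled radius, by Assumption~\ref{assumptions}) together with anti-concentration of $\mu$ on that ball, e.g.\ $Z\ge\Pr_\mu(\|X-\mathfrak{x}\|\le r)\,e^{-\max_{B_r(\mathfrak{x})}R}$, or alternatively Jensen's inequality $Z\ge e^{-\E_\mu R(X)}$ with $\E_\mu R(X)$ bounded via the sub-gaussian moments of $\mu$. Converting the resulting sub-gaussian tail into an Orlicz-norm/MGF statement yields the parameter in item~(1).

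Item~(2) uses the same representation $\E_\nu Y = \E_\mu[X\,e^{-R(X)}]/Z$: I would bound $\|\E_\mu[X e^{-R(X)}]\|\le\E_\mu[\|X\|\,e^{-\frac{\mathfrak{R}}{2}\|X-\mathfrak{x}\|^2}]$ by Cauchy--Schwarz against the sub-gaussian second moment of $\mu$ and the Gaussian weight, then divide by the lower bound on $Z$ from the previous step and track constants to obtain $\|\E_\nu Y\|^2\le 3\mathfrak{R}\sigma^2$.

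The main obstacle is the lower bound on the partition function $Z$. Unlike the pure-prior setting, where the score is automatically sub-gaussian, here the factor $e^{-R}$ can make $Z$ exponentially small (in $\|\mathfrak{x}\|$ and $\mathfrak{R}$), so the estimate has to be sharp enough that the exponential factor coming from $Z^{-1}$ is reabsorbed by the Gaussian decay in the numerator rather than compounding with it; pinning down the right interplay between $\mathfrak{R}$, the sub-gaussian parameter $\sigma$ of $\mu$, and the offset $\mathfrak{x}$ is the delicate part. Everything downstream is Gaussian-integral bookkeeping.
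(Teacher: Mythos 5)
Your item~(3) decomposition is exactly the paper's (variance plus squared mean, with the variance read off from the sub-gaussian parameter summed over an orthonormal basis), but your routes to items~(1) and~(2) differ, and the route to~(2) has a real gap. For item~(1), the paper does \emph{not} exploit the Gaussian decay of $e^{-R}$ in the numerator at all: it simply uses $e^{-R}\le 1$ to upper bound $\Pr[Y^\top\alpha>t]$ by $Z^{-1}\Pr[X^\top\alpha>t]$, so the $t^2$-decay in the tail comes entirely from $\mu$ (rate $1/\sigma^2$) and the degradation comes from the partition function. Your ``split the quadratic into two halves'' trick gives a decay rate of order $\mathfrak{R}$ rather than $1/\sigma^2$; this is a valid sub-gaussian bound (and even tighter when $\mathfrak{R}\sigma^2$ is large), but it will \emph{not} reproduce the form $3\sigma(\sigma+\mathfrak{x}/2)\sqrt{\mathfrak{R}}$ claimed in the lemma, since in your estimate the $\sigma$ and $\mathfrak{x}$ enter only through the threshold and the lower bound on $Z$, not through the decay rate. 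That mismatch is cosmetic; what follows is not.

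The genuine problem is your Cauchy--Schwarz treatment of item~(2). Writing
\[
\bigl\Vert \E_\nu Y\bigr\Vert \;=\; \frac{\bigl\Vert \E_\mu[X\,e^{-R(X)}]\bigr\Vert}{Z}
\;\le\; \frac{\sqrt{\E_\mu\Vert X\Vert^2}\,\sqrt{\E_\mu e^{-2R(X)}}}{Z}
\;\lesssim\; \frac{\sigma\sqrt{d}}{Z},
\]
and then substituting the lower bound $Z\gtrsim e^{-c\,\mathfrak{R}(\sigma+\mathfrak{x})^2}$, you are left with a bound of the form $\sigma\sqrt{d}\,e^{c\,\mathfrak{R}(\sigma+\mathfrak{x})^2}$. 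This is both dimension-dependent and \emph{exponential} in $\mathfrak{R}\sigma^2$, whereas the lemma claims a dimension-free, \emph{polynomial} bound $\sqrt{3\mathfrak{R}}\,\sigma$. The factor $Z^{-1}$ really can be exponentially large, so Cauchy--Schwarz followed by division by $Z$ cannot recover the statement. The paper sidesteps this entirely by applying the Donsker--Varadhan variational inequality
\[
\E_\nu\bigl[Y^\top\alpha\bigr]\;\le\;\KL{\nu}{\mu}\;+\;\log \E_\mu e^{X^\top\alpha},
\]
so the partition function enters only through $\KL{\nu}{\mu}=-\E_\nu R-\log Z$, i.e.\ \emph{additively} as $|\log Z|=O(\mathfrak{R}\sigma^2)$ rather than multiplicatively as $Z^{-1}$. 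That single change of measure is what converts the exponential blowup into the claimed polynomial bound, and it is the piece your proposal is missing. (The paper's item~(2) constants are, admittedly, not perfectly bookkept---the DV route with $\KL{\nu}{\mu}\le 3\mathfrak{R}\sigma^2$ appears to give $\Vert\E_\nu Y\Vert\lesssim\mathfrak{R}\sigma^2$ rather than $\Vert\E_\nu Y\Vert^2\lesssim\mathfrak{R}\sigma^2$---but it does yield a polynomial, dimension-free estimate, which your route does not.)
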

\begin{proof}
\begin{enumerate}
    \item Let $X\sim \mu$. One of the characterizations of a $\sigma-$sub-gaussian random variable is decay of the tail probabilities $\Pr[X^\top \alpha>t]\le 2e^{-\frac{t^2}{\sigma^2}}$. Let $Y\sim \nu$. We have $$\Pr[Y^\top \alpha>t] = \int_t^\infty\frac{\int_{x^\top \alpha = s} \mu(x)e^{-R(x)}}{\int \mu(x)e^{-R(x)}~dx}~ds.$$ The partition function can be lower bounded as 
    \begin{align*}
    \int \mu(x)e^{-R(x)}~dx 
    &\ge \int_{\Vert x\Vert < 2\mathfrak{m}+\mathfrak{x}}\mu(x)e^{-R(x)}~dx\\
    &\ge \left(\min_{\Vert x\Vert \le 2\mathfrak{m}+\mathfrak{x}} e^{-R(x)}\right)\int_{\Vert x\Vert < 2\mathfrak{m}+\mathfrak{x}}\mu(x)~dx \\
    &= e^{-\max_{\Vert x\Vert \le 2\mathfrak{m}+\mathfrak{x}}R(x)}\Pr[X<2\mathfrak{m}+\mathfrak{x}]\ge e^{-2(\mathfrak{m}+\mathfrak{x}/2)^2\mathfrak{R}}/2
    \end{align*}
    The tail can now be upper bounded as 
    \begin{align*}
        \Pr[Y^\top \alpha>t]
        &\le \int_t^\infty\frac{\int_{x^\top \alpha = s} \mu(x)e^{-R(x)}}{\int \mu(x)e^{-R(x)}~dx}~ds\\
        &\le 2e^{2(\mathfrak{m}+\mathfrak{x}/2)^2\mathfrak{R}}\int_t^\infty\int_{x^\top \alpha = s} \mu(x)~ds\\
        &\le 2e^{2(\mathfrak{m}+\mathfrak{x}/2)^2\mathfrak{R}}\Pr[X^\top \alpha > t]\le 4e^{2(\mathfrak{m}+\mathfrak{x}/2)^2\mathfrak{R}-\frac{t^2}{\mathfrak{m}^2}}.
    \end{align*}
    Of course this bound is vacuous until $4e^{2(\mathfrak{m}+\mathfrak{x}/2)^2\mathfrak{R}-\frac{t^2}{\mathfrak{m}^2}} < 1$, which happens when $$2(\mathfrak{m}+\mathfrak{x}/2)^2\mathfrak{R}-\frac{t^2}{\mathfrak{m}^2} < -\log 4 \implies t > \sqrt{\mathfrak{m}^2((\mathfrak{m}+\mathfrak{x}/2)^2\mathfrak{R}+\log 4)}.$$ When $t > \sqrt{\mathfrak{m}^2((\mathfrak{m}+\mathfrak{x}/2)^2\mathfrak{R}+\log 4)}$, we have $2(\mathfrak{m}+\mathfrak{x}/2)^2\mathfrak{R}-\frac{t^2}{\mathfrak{m}^2} < -\frac{t^2}{\mathfrak{m}^2(2(\mathfrak{m}+\mathfrak{x}/2)^2\mathfrak{R} + 2)}$. Overall, this shows that $\nu$ is a sub-gaussian distribution with parameter $\mathfrak{m}\sqrt{2(\mathfrak{m}+\mathfrak{x}/2)^2\mathfrak{R} + 2)}\le 3\mathfrak{m}(\mathfrak{m}+\mathfrak{x}/2)\sqrt{\mathfrak{R}}$.
    \item From Donsker-Varadhan, we have $\E_{\mu_t}X \le \KL{\mu_t}{p_t} + \log \E_{p_t}e^{X}$. From sub-gaussianity we have $\log E_{p_t}e^{X}\le e^{\mathfrak{m}^2/2}$. The $\textsf{KL}$ can be bounded as
    \begin{align*}
        \KL{\mu_t}{p_t} 
        &= -\E_{\mu_t}R-\log \E_{p_t}e^{-R}\\
        &\le -\log\E_{p_t}e^{-R} && \cdots R>0\\
        &=-\log \int e^{-R(x)}p_t(x)~dx \\
        &\le -\log \int_{\Vert x\Vert \le \mathfrak{m_2}} e^{-R(x)}p_t(x)~dx\\
        &\le -\log e^{-R_1\mathfrak{m}^2}(1-2e^{-1})\\
        &\le 2+\mathfrak{R}\mathfrak{m}^2\le 3\mathfrak{R}\mathfrak{m}^2
    \end{align*}
    Here the last inequality follows because $R(x)\le \mathfrak{m}^2 \mathfrak{R}$ in the region $\Vert x\Vert\le \mathfrak{m_2}$, and $\Pr_{p_t}(X>\mathfrak{m_2}) \le 2e^{-1}$ from sub-gaussianity.
    \item For simplicity we will consider the zero-mean case, the general, full second moment will be the sum of the centered second moment and the square of the mean. We have $\text{Var}(Y^\top \alpha) \le 9\mathfrak{R}\sigma^2(\sigma+\mathfrak{x}/2)^2$ for all $\alpha$. Now consider an orthonormal basis $\{\alpha_i\}$, summing the above relation for each of them we have 
    \begin{align*}
    \sum_i\text{Var}(Y^\top \alpha_i) 
    &= \sum_i\E_{\nu}(Y^\top \alpha_i)^2 = \E_{\nu}\sum_i(Y^\top \alpha_i)^2 \\
    &= \E_{\nu}\sum_i(Y^\top \alpha_i\alpha_i^\top Y)= \E_{\nu}\sum_i(Y^\top \alpha_i\alpha_i^\top Y)\\
    &= \E_{\nu}(Y^\top \left(\sum_i\alpha_i\alpha_i^\top \right)Y)= \E_{\nu}\Vert Y\Vert^2
    \end{align*}

    Finally, if $\E_{\nu} Y \ne 0$, we write $$\E_{\nu} \Vert Y\Vert^2 = \E_{\nu}\Vert Y - \E_{\nu}Y \Vert^2 + \Vert\E_{\nu} Y\Vert^2 = 9\mathfrak{R}\sigma^2(\sigma+\mathfrak{x}/2)^2d + 3\mathfrak{R}\sigma^2.$$
    \end{enumerate}
\end{proof}
\begin{lemma}\label{lem:subgaussian_p_t}
    Let $p_0$ by $\mathfrak{m}$-subgaussian. The law of the OU process $p_t$ is subgaussian with norm $\mathfrak{m}e^{-t}+(1-e^{-2t})$.
\end{lemma}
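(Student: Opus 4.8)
The plan is to reduce the statement to the standard fact that subgaussianity is preserved under deterministic rescaling and under independent sums, applied to the closed-form solution of the OU semigroup. If $X_0\sim p_0$, the solution of \ref{eq:OU} is $X_t = e^{-t}X_0 + \sqrt{2}\int_0^t e^{-(t-s)}\,dB_s$, in which the stochastic integral is a centered Gaussian vector independent of $X_0$ with covariance $\bigl(2\int_0^t e^{-2(t-s)}\,ds\bigr)\,I = (1-e^{-2t})\,I$. Hence $X_t \overset{d}{=} e^{-t}X_0 + \sqrt{1-e^{-2t}}\,\eta$ with $\eta\sim\gamma$ independent of $X_0$ — the same representation already used in \ref{eq:tweedie} and in the proof of Lemma~\ref{lem:mu_infty_exists}.

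Next I would bound the two summands separately. Since $(e^{-t}X_0)^\top\alpha = X_0^\top(e^{-t}\alpha)$ for every unit vector $\alpha$, the directional subgaussian tail bound for $X_0$ with parameter $\mathfrak m$ transfers verbatim, so $e^{-t}X_0$ is $(e^{-t}\mathfrak m)$-subgaussian; and $\sqrt{1-e^{-2t}}\,\eta$ is Gaussian with every one-dimensional marginal distributed as $\gN(0,1-e^{-2t})$, hence subgaussian with a dimension-free parameter $\asymp\sqrt{1-e^{-2t}}$. Using independence, the MGF of $X_t^\top\alpha$ factorizes; multiplying the two Gaussian-type MGF bounds gives a subgaussian MGF with variance proxy $e^{-2t}\mathfrak m^2 + (1-e^{-2t})$, so $X_t$ is subgaussian with parameter $\sqrt{e^{-2t}\mathfrak m^2 + (1-e^{-2t})} \le e^{-t}\mathfrak m + \sqrt{1-e^{-2t}}$, which is the claimed bound up to the routine bookkeeping of universal constants between the MGF and tail normalizations.

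I do not expect a real obstacle: this is the textbook computation showing that the forward OU channel cannot increase subgaussianity by more than the noise it injects. The only two points needing a little care are (i) keeping the ``subgaussian parameter'' normalization of Lemma~\ref{lem:subgaussian_posterior} (the tail form $\Pr[X^\top\alpha > t]\le 2e^{-t^2/\sigma^2}$, uniform over unit $\alpha$) consistent through the rescaling and the sum, where the MGF$\leftrightarrow$tail dictionary contributes universal constants; and (ii) verifying the estimate is genuinely dimension-free, which holds because the only injected randomness is the standard Gaussian $\gamma$, whose directional subgaussian parameter is independent of $d$.
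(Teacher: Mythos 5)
The paper states this lemma without proof, so there is no ``paper's own proof'' to compare against; your decomposition $X_t \overset{d}{=} e^{-t}X_0 + \sqrt{1-e^{-2t}}\,\eta$ with $\eta \sim \gamma$ independent, followed by MGF factorization across the independent summands, is the natural and almost certainly intended argument, and it is sound.

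However, you should not characterize the mismatch between your derived bound and the lemma's stated bound as ``routine bookkeeping of universal constants.'' Your computation yields a subgaussian parameter $\sqrt{e^{-2t}\mathfrak m^2 + (1-e^{-2t})} \le \mathfrak m e^{-t} + \sqrt{1-e^{-2t}}$, whereas the lemma as written claims $\mathfrak m e^{-t} + (1-e^{-2t})$ (no square root on the second term). Since $1-e^{-2t} \le \sqrt{1-e^{-2t}}$ with the ratio $\sqrt{1-e^{-2t}} \to 0$ as $t \to 0$, the paper's stated bound is \emph{strictly smaller} than what your (correct) argument establishes, and the gap is not a universal constant but a factor that degrades like $\sqrt{t}$ near $t=0$. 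Your argument cannot yield the paper's stated bound; rather, the lemma as printed appears to contain a typographical error, and the correct statement should read $\mathfrak m e^{-t} + \sqrt{1-e^{-2t}}$ (consistent with the paper's own tail-based normalization $\Pr[X^\top\alpha > s] \le 2e^{-s^2/\sigma^2}$ used in Lemma~\ref{lem:subgaussian_posterior}, under which the injected noise $\sqrt{1-e^{-2t}}\,\eta$ has directional subgaussian parameter proportional to its standard deviation $\sqrt{1-e^{-2t}}$, not its variance). Be explicit that you are proving the corrected statement.
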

We also need the following, about moments of subgaussian random variables.
\begin{lemma}\label{lem:subgaussian_moment_bound}
    Let $\nu$ denote a $\mathfrak{m}-$subgaussian distribution. For any $f$ satisfying $f(x)\le \sum_{i=1}^k a_i\Vert x\Vert^k$, we have $$\E_{\nu} f(x)\le \sum_{i=1}^k (2\mathfrak{m})^i i^{i/2}a_i.$$
\end{lemma}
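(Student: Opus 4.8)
The plan is to peel the problem down to the scalar moments $\E_\nu\|x\|^i$ and then integrate the subgaussian tail. First, by linearity of expectation together with the pointwise hypothesis $f(x)\le\sum_{i=1}^k a_i\|x\|^i$ (reading the exponent on $\|x\|$ as $i$, to match the form of the conclusion),
\[
\E_\nu f(x)\;\le\;\sum_{i=1}^k a_i\,\E_\nu\|x\|^i,
\]
so it is enough to prove $\E_\nu\|x\|^i\le(2\mathfrak{m})^i i^{i/2}$ for each $i\ge1$ and then sum.

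Next I would fix $i$ and write the $i$-th moment of the nonnegative random variable $\|x\|$ through the layer-cake identity $\E_\nu\|x\|^i=\int_0^\infty i\,t^{i-1}\Pr_\nu[\|x\|>t]\,dt$. Since $\nu$ is $\mathfrak{m}$-subgaussian it obeys a tail bound of the form $\Pr_\nu[\|x\|>t]\le 2e^{-t^2/\mathfrak{m}^2}$, and the substitution $u=t^2/\mathfrak{m}^2$ converts the integral into a Gamma function:
\[
\E_\nu\|x\|^i\;\le\;2i\int_0^\infty t^{i-1}e^{-t^2/\mathfrak{m}^2}\,dt\;=\;i\,\mathfrak{m}^i\,\Gamma\!\left(\tfrac{i}{2}\right).
\]
It then remains to check $i\,\Gamma(i/2)\le 2^i i^{i/2}$, which follows from Stirling's estimate for $\Gamma$; the extra $2^i$ leaves a wide margin (for even $i=2m$ the left side is $2\,m!\le 2m^m$, while the right side is $8^m m^m$). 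Reassembling the sum gives $\E_\nu f(x)\le\sum_i a_i(2\mathfrak{m})^i i^{i/2}$.

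The hard part is bookkeeping rather than a genuine obstacle: one must be careful about the precise normalization of ``$\mathfrak{m}$-subgaussian.'' If one only controls the one-dimensional marginals, $\Pr_\nu[\langle x,\alpha\rangle>t]\le 2e^{-t^2/\mathfrak{m}^2}$ for unit $\alpha$ (the convention used in Lemma~\ref{lem:subgaussian_posterior}), then passing to a tail bound for $\|x\|$ --- e.g. via a covering of the sphere, or via the concentration of $\|x\|$ around its mean --- costs an extra dimensional factor of order $\sqrt d$ inside $\mathfrak{m}$. This is immaterial for how the lemma is applied, since it always feeds into a $\text{poly}(\mathfrak{m},\mathfrak{R},\mathfrak{L},d)$ estimate (as in Lemma~\ref{lem:absolutely_continuous}).
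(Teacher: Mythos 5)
Your proof is correct and is exactly the ``standard result'' the paper is gesturing at (the paper itself supplies no argument beyond a one-line pointer): layer-cake $+$ subgaussian tail $+$ Gamma-function bound, with the crude estimate $i\,\Gamma(i/2)\le 2^i i^{i/2}$ closing it out. You are also right to read the hypothesis as $f(x)\le\sum_{i=1}^k a_i\|x\|^i$; the exponent $k$ in the paper is a typo. Your caveat about the subgaussian convention is a genuine observation and not merely bookkeeping: under the directional convention used in Lemma~\ref{lem:subgaussian_posterior} one only has $\Pr_\nu[\|x\|>t]\le 2e^{-ct^2/(\mathfrak{m}^2 d)}$ for $t\gtrsim\mathfrak{m}\sqrt d$, so the moment bound should carry an extra $d^{i/2}$; since every downstream use of this lemma absorbs dimension into a $\mathrm{poly}(\cdot,d)$ factor, this is harmless, but strictly speaking the statement as written holds only under a norm-subgaussian reading of $\mathfrak{m}$.
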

\begin{proof}
    Follows from standard results of subgaussian random variables.
\end{proof}

\begin{lemma}
\label{lem:density_upper_bound}
    The density $p_t$ is upper bounded by 
    $$p_t\le \frac{1}{(2\pi (1-e^{-2t}))^{d/2}}$$
\end{lemma}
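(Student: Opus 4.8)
The plan is to write $p_t$ explicitly as a convolution against a Gaussian and then bound that Gaussian by its peak value. Recall that $p_t$ is the law of the OU process $dX_s = -X_s\,ds + \sqrt 2\,dB_s$ at time $t$ with $X_0\sim p_0$; solving the SDE gives $X_t = e^{-t}X_0 + \sqrt{1-e^{-2t}}\,Z$ with $Z\sim\gamma$ independent of $X_0$. Hence the density of $X_t$ is the density of $e^{-t}X_0$ convolved with $\gamma_{1-e^{-2t}}$, i.e.
\begin{equation*}
p_t(x) = \int_{\mathbb{R}^d} e^{td}\,p_0(e^{t}y)\,\gamma_{1-e^{-2t}}(x-y)\,dy,
\end{equation*}
where $e^{td}p_0(e^t\cdot)$ is the (probability) density of $e^{-t}X_0$ by the change-of-variables formula, and $\gamma_{\sigma^2}(u) = (2\pi\sigma^2)^{-d/2}\exp(-\|u\|^2/(2\sigma^2))$.

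Next I would use the trivial pointwise bound $\gamma_{1-e^{-2t}}(x-y)\le \gamma_{1-e^{-2t}}(0) = (2\pi(1-e^{-2t}))^{-d/2}$, valid for every $x,y$. Pulling this constant out of the integral yields
\begin{equation*}
p_t(x) \le \frac{1}{(2\pi(1-e^{-2t}))^{d/2}} \int_{\mathbb{R}^d} e^{td}\,p_0(e^{t}y)\,dy = \frac{1}{(2\pi(1-e^{-2t}))^{d/2}},
\end{equation*}
since $\int e^{td}p_0(e^ty)\,dy = \int p_0(z)\,dz = 1$ after substituting $z = e^t y$. This holds for all $x$, which is exactly the claimed bound. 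Alternatively, one could invoke Lemma~\ref{lem:identities}(2) / the fact that $p_t = (e^{td}p_0(e^t\cdot)) * \gamma_{1-e^{-2t}}$ and Young's inequality $\|f*g\|_\infty \le \|f\|_1\|g\|_\infty$ directly, with $\|e^{td}p_0(e^t\cdot)\|_1 = 1$ and $\|\gamma_{1-e^{-2t}}\|_\infty = (2\pi(1-e^{-2t}))^{-d/2}$.

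There is essentially no obstacle here; the only point requiring a line of care is confirming the form of $p_t$ as the convolution above (matching the paper's normalization $p_t(x) = e^{td}p(e^tx)*\gamma$ with the OU solution and the variance $1-e^{-2t}$ of the added noise). The convolution/probability-mass argument is otherwise immediate and dimension-robust, and it is exactly the bound used in the proof of Lemma~\ref{lem:mu_infty_exists}.
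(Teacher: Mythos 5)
Your proof is correct and follows exactly the same approach as the paper: write $p_t$ as the convolution of the rescaled prior with $\gamma_{1-e^{-2t}}$, bound the Gaussian factor pointwise by its peak value $(2\pi(1-e^{-2t}))^{-d/2}$, and observe that the remaining factor integrates to one. You are in fact slightly more careful than the paper's own one-line proof, which drops the Jacobian factor $e^{td}$ inside the integral (so that $\int p(e^t y)\,dy$ would actually equal $e^{-td}$, not $1$); your version writes $\int e^{td}p_0(e^t y)\,dy = 1$ explicitly, which is the correct normalization.
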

\begin{proof}
    We have $$p_t(x) = \int p(e^t y)\gamma_{1-e^{-2t}}(x-y) dy \le \sup_y \gamma_{1-e^{-2t}}(y)\int p(e^t y) dy=\frac{1}{(2\pi (1-e^{-2t}))^{d/2}}$$
\end{proof}
\textbf{Note: }Of course, the density can blow up at $t=0$ (that is, for unsmoothed distributions), but once we add heat the density is bounded.

\begin{lemma}\label{lem:log_prob_derivative}
    Let $p_t$ denote the law of $X_t$, where $X_0\sim p_0$ and $X_t$ satisfies \ref{eq:OU}. Then we have
    $$\left|\partial_t \log p_t \right|\le \frac{e^{-t}}{(1-e^{-2t})^4}\sum_{i=0}^2 a_i\Vert x\Vert^i.$$
    For $a_i = \text{poly}(\mathfrak{m}, \mathfrak{R}, \mathfrak{L})$.
\end{lemma}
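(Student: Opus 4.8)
The plan is to differentiate the Gaussian‑mixture representation of $p_t$ and reduce the claim to a moment bound for the associated denoising posterior. Writing $\sigma_t^2 := 1-e^{-2t}$, we have $p_t(x) = \int p_0(u)\,\gamma_{\sigma_t^2}(x-e^{-t}u)\,du$ (the OU channel $X_t = e^{-t}X_0 + \sigma_t\eta$). Since $p_t$ is a Gaussian convolution it is everywhere strictly positive, so $\log p_t$ is smooth, and—justifying the interchange of $\partial_t$ and $\int$ by dominated convergence, using subgaussianity of $p_0$ and Gaussian tails—one gets
\[
\partial_t\log p_t(x) \;=\; \frac{\int p_0(u)\,\partial_t\gamma_{\sigma_t^2}(x-e^{-t}u)\,du}{\int p_0(u)\,\gamma_{\sigma_t^2}(x-e^{-t}u)\,du} \;=\; \E_{u\sim q_x}\!\left[\partial_t\log\gamma_{\sigma_t^2}(x-e^{-t}u)\right],
\]
where $q_x(u)\propto p_0(u)\,\gamma_{\sigma_t^2}(x-e^{-t}u)$ is the law of $X_0$ given $X_t=x$, equivalently $p_0$ reweighted by the Gaussian density $\mathcal N(e^t x,(e^{2t}-1)I)$ in $u$. (Alternatively, the Fokker--Planck equation for the OU process together with Tweedie's formula gives $\partial_t\log p_t = d + x\cdot\nabla\log p_t + \Delta\log p_t + \|\nabla\log p_t\|^2$, which leads to the same estimates via $\nabla\log p_t$ and $\Delta\log p_t$.)

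Next I would carry out the elementary chain‑rule computation of $\partial_t\log\gamma_{\sigma_t^2}(x-e^{-t}u)$ using $\partial_t\sigma_t^2 = 2e^{-2t}$, $\partial_t e^{-t}=-e^{-t}$, and Lemma~\ref{lem:gaussians}; it is a degree‑$2$ polynomial in $u$,
\[
\partial_t\log\gamma_{\sigma_t^2}(x-e^{-t}u) \;=\; -\frac{d\,e^{-2t}}{\sigma_t^2} + \frac{e^{-2t}\|x\|^2}{\sigma_t^4} - \frac{(e^{-t}+e^{-3t})\,(x\cdot u)}{\sigma_t^4} + \frac{e^{-2t}\|u\|^2}{\sigma_t^4}.
\]
Taking $\E_{q_x}$ of both sides and bounding $|x\cdot\E_{q_x}u|\le\|x\|\,\E_{q_x}\|u\|$, the claim reduces to controlling $\E_{q_x}\|u\|$ and $\E_{q_x}\|u\|^2$ with the dependence on $\|x\|$ and on inverse powers of $\sigma_t$ made explicit.

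The crux is this posterior‑moment estimate, and the obstacle is that $p_0$ need not be log‑concave, so $q_x$ need not be either and Brascamp--Lieb‑type covariance bounds are unavailable. Instead I would run the partition‑function argument from the proof of Lemma~\ref{lem:subgaussian_posterior} applied to the quadratic tilt $\psi_{t,x}(u):=\tfrac{e^{-2t}}{2\sigma_t^2}\|u-e^tx\|^2$ (non‑negative, strongly convex, minimized at $e^tx$): lower‑bounding $\int p_0(u)e^{-\psi_{t,x}(u)}\,du$ by restricting the integral to a ball of radius $O(\mathfrak m\sqrt d)$ about the origin—where $p_0$ carries constant mass—and combining with $\psi_{t,x}\ge 0$ and the subgaussian tail of $p_0$, one finds that $q_x$ is subgaussian with parameter $\lesssim \sigma_t^{-1}\,\text{poly}(\mathfrak m,d)\,(1+\|x\|)$; hence $\E_{q_x}\|u\|^2 \lesssim \sigma_t^{-2}\,\text{poly}(\mathfrak m,d)\,(1+\|x\|^2)$ and $\E_{q_x}\|u\| \lesssim \sigma_t^{-1}\,\text{poly}(\mathfrak m,d)\,(1+\|x\|)$ by Cauchy--Schwarz.

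Finally I would substitute these bounds back: every resulting monomial in $\|x\|$ of degree $i\in\{0,1,2\}$ carries a prefactor of the form $e^{-jt}(1-e^{-2t})^{-m}$ with $j\ge 1$ and $m\le 3$, which is dominated by $\tfrac{e^{-t}}{(1-e^{-2t})^{4}}$ since $e^{-t}\le 1$ and $1-e^{-2t}\le 1$ (the accumulated inverse powers of $1-e^{-2t}$ never exceed the allotted exponent $4$, and each spare factor $e^{-t}$ only helps). This yields the stated bound with $a_0,a_1,a_2 = \text{poly}(\mathfrak m,\mathfrak R,\mathfrak L,d)$; in fact only $\mathfrak m$ and $d$ genuinely enter and $R$ plays no role here, the constants $\mathfrak R,\mathfrak L$ being listed only for uniformity with the convention (Remark~\ref{rem:constants_greater_than_one}) that all problem constants exceed $1$. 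The only delicate points are the posterior‑moment estimate of the previous paragraph and the bookkeeping of powers of $e^{-t}$ and $1-e^{-2t}$ in this last step.
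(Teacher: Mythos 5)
Your proposal is correct and, at its core, uses the same key ingredient as the paper: reducing $\partial_t\log p_t$ to a low‑degree polynomial expectation under the denoising posterior (your $q_x$, the paper's $c_x$), and then invoking the partition‑function / posterior‑moment argument of Lemma~\ref{lem:subgaussian_posterior} with the implicit quadratic tilt whose curvature is $\Theta(\sigma_t^{-2})$. Two minor ways your route differs from the paper's: (i) you differentiate the convolution representation $p_t = (p_0\circ e^t)\ast\gamma_{\sigma_t^2}$ directly under the integral, rather than first applying Fokker--Planck and then unwinding $\Delta\log(p_t/\gamma)$ and $\nabla\log p_t$ with the calculus identities of Lemmas~\ref{lem:identities}--\ref{lem:gaussians}; this is somewhat more streamlined and also avoids the sign slip in the paper's proof (Eq.~\ref{eq:OU} has $\partial_t\rho_t = +\nabla\cdot(\rho_t\nabla\log(\rho_t/\gamma))$, while the proof of Lemma~\ref{lem:log_prob_derivative} writes $-\nabla\cdot$; your Fokker--Planck identity $\partial_t\log p_t = d + x\cdot\nabla\log p_t + \Delta\log p_t + \|\nabla\log p_t\|^2$ is the correct one, though since the final bound is on the absolute value this is cosmetic). (ii) You bound the cross term by Cauchy--Schwarz on $\E_{q_x}\|u\|$, whereas the paper bounds its analogue $|\nabla\log p_t\cdot x|$ via $\mathfrak{L}$‑Lipschitzness of the score; this is why you can state, correctly, that $\mathfrak{L}$ (and $\mathfrak{R}$) do not genuinely enter. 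The only point worth tightening in your writeup is the partition‑function step: restricting to the ball $\|u\|\lesssim\mathfrak m\sqrt d$ makes $\psi_{t,x}$ as large as $\Theta\bigl(\sigma_t^{-2}(\mathfrak m\sqrt d + \|x\|)^2\bigr)$, so the resulting lower bound on the partition function is exponentially small in $\|x\|^2/\sigma_t^2$; this is harmless for the tail/subgaussianity argument of Lemma~\ref{lem:subgaussian_posterior} (the prior's Gaussian tail dominates for $s\gtrsim\sigma_t^{-1}\mathfrak m(\mathfrak m+\|x\|)$), but it is not harmless for a crude numerator bound using only $e^{-\psi}\le 1$, so you should route through the tail bound as the cited lemma does rather than through $\psi_{t,x}\ge 0$ alone.
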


\begin{proof}
    We will directly compute $\partial_t \log p_t$
    \begin{align*}
    \partial_t\log p_t 
    &=\partial_t\log p_t= \frac{\partial_t p_t}{  p_t} && \text{Lemma } \ref{lem:identities} (5)\\
    &=\frac{-\nabla\cdot (  p_t \nabla\log \frac{p_t}{\gamma})}{  p_t} && \text{\ref{eq:FP}}\\
    &=\frac{-\nabla p_t\cdot\nabla\log\frac{p_t}{\gamma} - p_t \Delta \log \frac{p_t}{\gamma}}{  p_t}&& \text{Lemma } \ref{lem:identities} (4)\\
    &=-\nabla \log p_t\cdot\nabla\log\frac{p_t}{\gamma} -  \Delta \log \frac{p_t}{\gamma}&& \text{Lemma } \ref{lem:identities} (5)\\
    &=\nabla \log p_t\cdot\nabla\log\gamma -  \left(\Delta \log \frac{p_t}{\gamma}+\Vert \nabla \log p_t\Vert^2\right)
    \end{align*}
    We have
    \begin{align*}
        \Delta \log \frac{p_t}{\gamma}
        &=\Delta \log p_t - \Delta \log \gamma = d+\nabla\cdot(\frac{\nabla p_t}{p_t}) &&\text{Lemma }\ref{lem:gaussians}(3)\\
        &= d-\frac{\Vert\nabla p_t\Vert^2}{p_t^2}+\frac{\Delta p_t}{p_t}&&\text{Lemma }\ref{lem:identities}(4)\\
        &=d+\frac{(p\circ e^{t})*\Delta \gamma_{1-e^{-2t}}}{(p\circ e^{t})*\gamma_{1-e^{-2t}}}-\Vert\nabla \log p_t\Vert^2&&\text{Lemma }\ref{lem:identities}(3, 5)\\
        &=d+\frac{\int (p\circ e^{t})(x-y)\left(\frac{\Vert y\Vert^2}{(1-e^{-2t})^2}-\frac{d}{1-e^{-2t}}\right)\gamma_{1-e^{-2t}}(y)~dy}{\int (p\circ e^{t})(x-y)\gamma_{1-e^{-2t}}(y)~dy}-\Vert \nabla \log p_t\Vert^2&&\text{Lemma }\ref{lem:gaussians}(2)\\
        &=\frac{e^{-2t}}{e^{-2t}-1}d+\frac{\int (p\circ e^{t})(x-y)\frac{\Vert y\Vert^2}{(1-e^{-2t})^2}\gamma_{1-e^{-2t}}(y)~dy}{\int (p\circ e^{t})(x-y)\gamma_{1-e^{-2t}}(y)~dy}-\Vert \nabla \log p_t\Vert^2\\
    \end{align*}

    Note that $\circ$ refers to composition. As a shorthand, we will write $c_x(y) = \frac{(p\circ e^{t})(x-y)\gamma_{1-e^{-2t}}(y)}{\int (p\circ e^{t})(x-y)\gamma_{1-e^{-2t}}(y)~dy}$. Note that $c_x(y)$ can be interpreted as a posterior. Let $\tau_x$ denote the isometry $\tau_x(y) = x-y$, then we can interpret $\frac{1}{e^{dt}}p\circ e^t\circ \tau_x$ as a prior, and $\gamma$ is a likelihood. At this stage, the following identity about the gradient will be useful
    \begin{align}
        \nabla \log p_t 
        &= \frac{\nabla p_t}{p_t}&&\text{Lemma }\ref{lem:identities}(5)\nonumber\\
        &= \frac{(p\circ e^t) *\nabla \gamma_{1-e^{-2t}}}{(p\circ e^t) *\gamma_{1-e^{-2t}}} && \text{Lemma }\ref{lem:identities}(2)\nonumber\\
        &=\frac{(p\circ e^t) *\frac{y}{1-e^{-2t}}\gamma_{1-e^{-2t}}}{(p\circ e^t) *\gamma_{1-e^{-2t}}} && \text{Lemma }\ref{lem:gaussians}(1)\nonumber\\
        &=\frac{\int (p(e^t(x-y))\frac{y}{1-e^{-2t}}\gamma_{1-e^{-2t}}(y)~dy}{\int p(e^t(x-y)) \gamma_{1-e^{-2t}}(y)~dy}\nonumber \\
        &=\frac{1}{1-e^{-2t}}\int y~c_x(y)~dy\label{eq:grad_helper}
    \end{align}
    We have
    \begin{align*}
        &\Delta \log \frac{p_t}{\gamma}+\Vert\nabla\log p_t\Vert^2-\nabla\log p_t\cdot\nabla\log\gamma \\
        &\hspace{1cm}= \frac{e^{-2t}}{e^{-2t}-1}d+\frac{1}{(1-e^{-2t})^2}\int \Vert y\Vert^2 c_x(y)~dy-\nabla \log p_t\cdot \nabla \log \gamma\\
        &\hspace{1cm}=\frac{e^{-2t}}{e^{-2t}-1}d+ \int\left(\frac{\Vert y\Vert^2}{(1-e^{-2t})^2} -\frac{y\cdot x}{1-e^{-2t}}\right)c_x(y)~dy\\
        &\hspace{1cm}=\frac{e^{-2t}}{e^{-2t}-1}d+\frac{1}{(1-e^{-2t})^2}\int \left(\Vert x-y\Vert^2-x\cdot(y-x)+e^{-2t}y\cdot x\right)c_x(y)~dy
    \end{align*}
    Lets consider the terms in the integral. 
    \begin{align*}
    &\int \Vert x-y\Vert^2 c_x(y)~dy \\
    &\le \int \left(\Vert \E_{y\sim c_x(\cdot)} y-y\Vert^2 + \Vert x- \E_{y\sim c_x(\cdot)} y\Vert^2\right) c_x(y)~dy\\
    &= \int \Vert \E_{y\sim c_x(\cdot)} y-y\Vert^2 c_x(y)~dy + \Vert x- \E_{y\sim c_x(\cdot)} y\Vert^2
    \end{align*}
    We will now use Lemma \ref{lem:subgaussian_posterior} to bound these terms. 
    
    The first is just the variance of the posterior $c_x$. Note that in the application of the lemma, the prior is $p_t\circ e^t\circ \tau_x$, which has mean $x$ (since $p_t$ has zero mean) and subgaussian parameter $\mathfrak{m}e^{-t}$, and the likelihood is $\gamma_{1-e^{-2t}}$, which has minima at $\mathfrak{x} = x$, and Hessian bounded by $\mathfrak{R} = \frac{1}{1-e^{-2t}}$. By Lemma \ref{lem:subgaussian_posterior} (3) we have $$\int \Vert \E_{y\sim c_x(\cdot)} y-y\Vert^2 c_x(y)~dy \le \frac{9}{1-e^{-2t}}e^{-2t}\mathfrak{m}^2(\mathfrak{m}e^{-t} + \frac{\Vert x\Vert}{2})^2d + \frac{3}{1-e^{-2t}}\mathfrak{m}^2e^{-2t}.$$
    
    The second is controlled by Lemma \ref{lem:subgaussian_posterior} (2), since $\E_{X\sim p_t\circ e^t \circ \tau_x} X = x$. We have that 
    $$\Vert x-\E_{Y\sim c_x} Y\Vert^2 \le 9\mathfrak{m}^4\frac{e^{-4t}}{(1-e^{-2t})^2}.$$

    For readability we will assume $\mathfrak{m}, d > 1$. Then we have
    $$\int \Vert x-y\Vert^2 c_x(y)~dy \le \frac{1}{(1-e^{-2t})^2}9\mathfrak{m}^2de^{-2t}\left(3\mathfrak{m}^2+\Vert x\Vert^2\right).$$
    % \begin{comment}
    %     &\quad= \frac{\int_{\Vert x-y\Vert<\mathfrak{m}\sqrt{t} e^{-t}} \Vert x-y\Vert^2 (p\circ e^t)(x-y)\gamma_{1-e^{-2t}}(y) + \int_{\Vert x-y\Vert\ge\mathfrak{m}\sqrt{t} e^{-t}} \Vert x-y\Vert^2 (p\circ e^t)(x-y)\gamma_{1-e^{-2t}}(y)}{\int (p\circ e^t)(x-y)\gamma_{1-e^{-2t}}(y)~dy}\\
    % &\quad\le \frac{\int_{\Vert x-y\Vert<\mathfrak{m}\sqrt{t} e^{-t}} \Vert x-y\Vert^2 (p\circ e^t)(x-y)\gamma_{1-e^{-2t}}(y) + e^{-t}}{\int (p\circ e^t)(x-y)\gamma_{1-e^{-2t}}(y)~dy}\\
    % &\quad\le \frac{\int_{\Vert x-y\Vert<\mathfrak{m}\sqrt{t} e^{-t}} \Vert x-y\Vert^2 (p\circ e^t)(x-y)\gamma_{1-e^{-2t}}(y) + e^{-t}}{\int_{\Vert x-y\Vert<\mathfrak{m}\sqrt{t} e^{-t}} (p\circ e^t)(x-y)\gamma_{1-e^{-2t}}(y)~dy}\\
    % &\quad\le \mathfrak{m}\sqrt{t}e^{-t} + \frac{e^{-t}}{\int_{\Vert x-y\Vert<\mathfrak{m}\sqrt{t} e^{-t}} (p\circ e^t)(x-y)\gamma_{1-e^{-2t}}(y)~dy}\\
    % &\quad\le \mathfrak{m}\sqrt{t}e^{-t} + \frac{e^{-t}}{e^{-(\Vert x\Vert + \mathfrak{m}\sqrt{t} e^{-t})^2}\int_{\Vert x-y\Vert<\mathfrak{m}\sqrt{t} e^{-t}} (p\circ e^t)(x-y)~dy}\\
    % &\quad\le \mathfrak{m}\sqrt{t}e^{-t} + \frac{e^{-t}}{e^{-(\Vert x\Vert + \mathfrak{m}\sqrt{t} e^{-t})^2}(1-e^{-t})}\\
    % \end{comment}
    Similarly
    \begin{align*}
    \int \Vert x-y\Vert c_x(y)~dy
    &\le \left(\int \Vert x-y\Vert^2 c_x(y)~dy\right)^{1/2}\\
    &\le \frac{1}{1-e^{-2t}}3\mathfrak{m}e^{-t}\sqrt{d\left(3\mathfrak{m}^2+\Vert x\Vert^2\right)}
    \end{align*}
    So we have
    \begin{align*}
        &\left|\Delta \log \frac{p_t}{\gamma}+\Vert\nabla\log p_t\Vert^2-\nabla\log p_t\cdot\nabla\log\gamma\right| \\
        &\hspace{1cm}=\left|\frac{e^{-2t}}{e^{-2t}-1}d+\frac{1}{(1-e^{-2t})^2}\int \left(\Vert x-y\Vert^2-x\cdot(y-x)+e^{-2t}y\cdot x\right)c_x(y)~dy\right|\\
        &\hspace{1cm}\le \frac{e^{-2t}}{1-e^{-2t}}d+\frac{1}{(1-e^{-2t})^4}\left|12\mathfrak{m}^2de^{-t}\left(3\mathfrak{m}^2+\Vert x\Vert^2\right) +\int e^{-2t}y\cdot xc_x(y)~dy\right|\\
        &\hspace{1cm}\le \frac{e^{-2t}}{1-e^{-2t}}d+\frac{12\mathfrak{m}^2de^{-t}\left(3\mathfrak{m}^2+\Vert x\Vert^2\right)}{(1-e^{-2t})^4}+\left|\frac{e^{-2t}}{(1-e^{-2t})}\nabla \log p_t\cdot x\right|&&\text{from \ Equation }(\ref{eq:grad_helper})\\
        &\hspace{1cm}\le \frac{12\mathfrak{m}^2de^{-t}\left(3\mathfrak{m}^2+\Vert x\Vert^2\right)}{(1-e^{-2t})^4}+\frac{e^{-2t}}{(1-e^{-2t})}(d+\mathfrak{L}\Vert x\Vert +\mathfrak{L}\Vert x\Vert^2) 
    \end{align*}
    We can write this as $|\partial_t \log p_t| \le \frac{e^{-t}}{(1-e^{-2t})^4}\sum_{i=0}^2 a_i\Vert x\Vert^i$ for $a_i = d\text{poly}(\mathfrak{m}, \mathfrak{L}, \mathfrak{R})$. 
\end{proof}

\begin{lemma}
\label{lem:dtlog_mut_upper_bound}
    We have $\left|\partial_t \log \mu_t\right|\le\frac{e^{-t}}{(1-e^{2t})^4}\sum_{i=0}^2 a_i\Vert x\Vert^i$
    for $a_i = d\text{poly}(\mathfrak{m_2}, \mathfrak{L}, \mathfrak{R})$.
\end{lemma}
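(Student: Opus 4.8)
The plan is to use the explicit form $\mu_t = p_t e^{-R}/Z_t$, where $Z_t = \int p_t(x)e^{-R(x)}\,dx$, so that $\log \mu_t(x) = \log p_t(x) - R(x) - \log Z_t$. Since $R$ is independent of $t$, differentiating gives $\partial_t \log \mu_t(x) = \partial_t \log p_t(x) - \partial_t \log Z_t$. The first term already has exactly the desired shape by Lemma \ref{lem:log_prob_derivative}, namely $|\partial_t \log p_t(x)| \le \frac{e^{-t}}{(1-e^{-2t})^4}\sum_{i=0}^2 a_i \Vert x\Vert^i$ with $a_i = \text{poly}(\mathfrak{m},\mathfrak{L},\mathfrak{R},d)$, so the only real task is to bound the scalar (i.e.\ $x$-independent) quantity $\partial_t \log Z_t$ by a constant of the same order $\frac{e^{-t}}{(1-e^{-2t})^4}\cdot\text{poly}$, after which the claim follows from the triangle inequality by absorbing that constant into $a_0$.

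Next I would compute $\partial_t \log Z_t$ by differentiating under the integral sign. Using $\partial_t p_t = (\partial_t \log p_t)\,p_t$, one gets $\partial_t \log Z_t = \frac{\int (\partial_t \log p_t(x))\, p_t(x) e^{-R(x)}\,dx}{\int p_t(x) e^{-R(x)}\,dx} = \E_{\mu_t}[\partial_t \log p_t]$. Taking absolute values and applying Lemma \ref{lem:log_prob_derivative} pointwise, $|\partial_t \log Z_t| \le \E_{\mu_t}|\partial_t \log p_t| \le \frac{e^{-t}}{(1-e^{-2t})^4}\sum_{i=0}^2 a_i\,\E_{\mu_t}\Vert x\Vert^i$.

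To finish I would bound the moments $\E_{\mu_t}\Vert x\Vert^i$ for $i\le 2$ uniformly in $t$. By Lemma \ref{lem:subgaussian_p_t}, $p_t$ is $(\mathfrak{m}e^{-t}+(1-e^{-2t}))$-subgaussian, hence $(\mathfrak{m}+1)$-subgaussian for every $t\ge 0$; applying Lemma \ref{lem:subgaussian_posterior} with prior $p_t$ and log-likelihood $R$ (which by Assumption \ref{assumptions}(iii) is nonnegative, strongly convex with $\nabla^2 R \succeq \mathfrak{R}I$, and minimized at some $\mathfrak{x}$ with $\Vert\mathfrak{x}\Vert \le \mathfrak{D}$) shows that $\mu_t$ is subgaussian with parameter $\text{poly}(\mathfrak{m},\mathfrak{R},\mathfrak{D})$, independent of $t$. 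Then Lemma \ref{lem:subgaussian_moment_bound} gives $\E_{\mu_t}\Vert x\Vert^i \le \text{poly}(\mathfrak{m},\mathfrak{R},\mathfrak{D},d)$ for $i\le 2$, whence $|\partial_t \log Z_t| \le \frac{e^{-t}}{(1-e^{-2t})^4}\,\text{poly}(\mathfrak{m},\mathfrak{L},\mathfrak{R},\mathfrak{D},d)$. Combining with $|\partial_t \log \mu_t| \le |\partial_t \log p_t| + |\partial_t \log Z_t|$ and merging the extra constant into $a_0$ yields the stated bound.

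The only mildly delicate point is the justification for interchanging $\partial_t$ with the integral defining $Z_t$ (and, inside $\partial_t \log p_t$, with the convolution integrals); this is routine dominated convergence, using the density bound of Lemma \ref{lem:density_upper_bound} together with square-integrability of $e^{-R}$, and I would flag it rather than dwell on it. Beyond that, the argument is pure bookkeeping of polynomial factors, so I do not anticipate a genuine obstacle.
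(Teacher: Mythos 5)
Your proposal is correct and follows essentially the same route as the paper: differentiate $\log\mu_t = \log p_t - R - \log Z_t$, identify $\partial_t \log Z_t = \E_{\mu_t}[\partial_t\log p_t]$, bound both terms via Lemma~\ref{lem:log_prob_derivative}, and control $\E_{\mu_t}\Vert x\Vert^i$ via the sub-gaussianity of $\mu_t$ (Lemmas~\ref{lem:subgaussian_p_t}, \ref{lem:subgaussian_posterior}, \ref{lem:subgaussian_moment_bound}). You spell out the moment bound and the differentiation-under-the-integral justification a bit more explicitly than the paper does, but the decomposition and the key estimates are identical.
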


\begin{proof}
    We have
    \begin{align*}
        \partial_t \log \mu_t
        &=\partial_t\log \frac{p_te^{-R}}{\int p_te^{-R}} = \partial_t\log p_t-\partial_tR-\partial_t\log\int p_te^{-R}\\
        &= \partial_t\log p_t-\frac{\partial_t\int p_te^{-R}}{\int p_te^{-R}} = \partial_t\log p_t-\frac{\int p_t\partial_t\log p_te^{-R}}{\int p_te^{-R}}\\
        &\le \partial_t\log p_t+\E_{\mu_t}\partial_t\log p_t \le \partial_t\log p_t+\E_{\mu_t}|\partial_t\log p_t| \\
        &\le \frac{e^{-t}}{(1-e^{2t})^4}\sum_{i=0}^2 a_i\Vert x\Vert^i && \ref{lem:log_prob_derivative}, \ref{lem:subgaussian_moment_bound}.
    \end{align*}
    For $a_i = d\text{poly}(\mathfrak{m_2}, \mathfrak{L}, \mathfrak{R})$
\end{proof}

\begin{lemma}\label{lem:mu_t_tail}
    Let $\mu_t\propto p_te^{-R}$. For T > 1, we have $$|\log \mu_T(x) - \log \mu_\infty(x)| \le \frac{e^{-T}}{(1-e^{-2T})^4} \sum_{i=0}^2 a_i\Vert x\Vert^i.$$
    Where $a_i =\text{poly}(\mathfrak{m}, \mathfrak{L}, \mathfrak{R}, d).$
\end{lemma}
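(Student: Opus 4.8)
The plan is to write $\log\mu_T - \log\mu_\infty$ as a time-integral of $\partial_t\log\mu_t$ and then feed in the pointwise derivative bound already proved in Lemma~\ref{lem:dtlog_mut_upper_bound}. Fix $x$. From Lemma~\ref{lem:mu_infty_exists} — together with the pointwise convergence $p_t(x)\to\gamma(x)$ and the convergence of the normalizing constants $\int p_t e^{-R}\to\int \gamma e^{-R}>0$ established inside its proof — we get $\mu_t(x)\to\mu_\infty(x)$, hence $\log\mu_t(x)\to\log\mu_\infty(x)$ as $t\to\infty$. Moreover, for $t>0$ the map $t\mapsto\log\mu_t(x)$ is absolutely continuous: $\partial_t\log p_t$ is locally bounded and continuous by Lemma~\ref{lem:log_prob_derivative}, and the normalizing constant is smooth and bounded away from zero, so Lemma~\ref{lem:dtlog_mut_upper_bound} applies along the whole ray. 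Therefore
$$\log\mu_\infty(x)-\log\mu_T(x)=\int_T^\infty \partial_t\log\mu_t(x)\,dt,$$
so that $|\log\mu_T(x)-\log\mu_\infty(x)|\le \int_T^\infty |\partial_t\log\mu_t(x)|\,dt$, provided the right-hand side is finite (verified below).

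Next I would substitute the bound of Lemma~\ref{lem:dtlog_mut_upper_bound}, namely $|\partial_t\log\mu_t(x)|\le \frac{e^{-t}}{(1-e^{-2t})^4}\sum_{i=0}^2 a_i\Vert x\Vert^i$ with $a_i=\mathrm{poly}(\mathfrak m,\mathfrak L,\mathfrak R,d)$. The factor $\sum_i a_i\Vert x\Vert^i$ is $t$-independent and pulls out of the integral. For $t\ge T>1$ we have $1-e^{-2t}\ge 1-e^{-2T}$, hence $(1-e^{-2t})^{-4}\le (1-e^{-2T})^{-4}$, and so
$$\int_T^\infty \frac{e^{-t}}{(1-e^{-2t})^4}\,dt\;\le\;\frac{1}{(1-e^{-2T})^4}\int_T^\infty e^{-t}\,dt\;=\;\frac{e^{-T}}{(1-e^{-2T})^4}.$$
This is finite, which retroactively justifies the integral representation above, and combining gives
$$\sup_x\frac{|\log\mu_T(x)-\log\mu_\infty(x)|}{\sum_{i=0}^2 a_i\Vert x\Vert^i}\;\le\;\frac{e^{-T}}{(1-e^{-2T})^4},$$
i.e. exactly the claimed estimate (the $a_i$ are the same polynomials, up to renaming).

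The genuinely delicate step is the first one: justifying the fundamental theorem of calculus across $[T,\infty)$ and the limit $\log\mu_t(x)\to\log\mu_\infty(x)$. Everything there is already available — absolute continuity from the uniform-on-compacta bound on $\partial_t\log\mu_t$ (Lemmas~\ref{lem:log_prob_derivative} and~\ref{lem:dtlog_mut_upper_bound}) and the limit from Lemma~\ref{lem:mu_infty_exists} — so this is bookkeeping rather than a real obstacle; the remaining integral estimate is routine.
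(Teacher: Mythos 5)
Your proposal matches the paper's own proof essentially exactly: both express $\log\mu_T-\log\mu_\infty$ as $\int_T^\infty \partial_t\log\mu_t\,dt$, insert the pointwise bound from Lemma~\ref{lem:dtlog_mut_upper_bound}, and integrate using $(1-e^{-2t})^{-4}\le(1-e^{-2T})^{-4}$ for $t\ge T$. The only difference is that you spell out the justification for the fundamental-theorem-of-calculus step and the $t\to\infty$ limit, which the paper leaves implicit.
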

\begin{proof}
\begin{align*}
    \left|\log \mu_T - \log \mu_\infty\right|
    &= \left|\int_T^\infty \partial_t\log\mu_t ~dt\right|\le \int_T^\infty |\partial_t\log\mu_t|~dt\\
    &\le \int_{T}^\infty \frac{e^{-t}}{(1-e^{-2t})^4}\sum_{i=0}^2 a_i\Vert x\Vert^i~dt\\
    &\le \frac{e^{-T}}{(1-e^{-2T})^4} \sum_{i=0}^2 a_i\Vert x\Vert^i
\end{align*}
\end{proof}

\begin{lemma}\label{lem:target_score}
    Let $p_{t\to 0}(x|x_t) = \Pr\{e^{-t}x+\sqrt{1-e^{-2t}}\eta = x_t, ~\eta\sim \gamma\}$ be the posterior of the OU process conditioned on a future iterate. We have
    \begin{equation*}
        \nabla \log p_t(x) = \E_{X\sim p_{t\to 0}(\cdot|x)}\nabla\log p_0(X)
    \end{equation*}
\end{lemma}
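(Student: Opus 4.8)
The plan is to prove this as an instance of ``differentiation commuting with the denoising kernel.'' Write the forward OU transition density as $q_t(x\mid x_0)$, the Gaussian in $x$ centered at $e^{-t}x_0$ with covariance $(1-e^{-2t})I$, so that $p_t(x)=\int p_0(x_0)\,q_t(x\mid x_0)\,dx_0$ and, by Bayes' rule, $p_{t\to 0}(x_0\mid x)=p_0(x_0)q_t(x\mid x_0)/p_t(x)$. The goal then reduces to showing $\nabla_x\log p_t(x)=\int \nabla\log p_0(x_0)\,p_{t\to 0}(x_0\mid x)\,dx_0$.

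First I would differentiate under the integral sign, which for $t>0$ is justified because $q_t(\cdot\mid x_0)$ and its $x$-gradient have Gaussian tails with nondegenerate covariance and $p_0$ is $\mathfrak{m}$-subgaussian (Assumption \ref{assumptions}); this gives $\nabla_x\log p_t(x)=p_t(x)^{-1}\int p_0(x_0)\,\nabla_x q_t(x\mid x_0)\,dx_0$. The structural heart of the argument is that $q_t(x\mid x_0)$ depends on $(x,x_0)$ only through the combination $x-e^{-t}x_0$, so $\nabla_x q_t(x\mid x_0)$ and $\nabla_{x_0}q_t(x\mid x_0)$ are (up to an explicit scalar) negatives of one another. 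This lets me rewrite the integral in terms of $\nabla_{x_0}q_t$ and integrate by parts in $x_0$: the boundary terms vanish since the Gaussian factor forces $p_0(x_0)q_t(x\mid x_0)\to 0$ as $\|x_0\|\to\infty$ (using subgaussianity of $p_0$, and Lipschitzness of $\nabla\log p_0$ to control $\nabla p_0$). The derivative is thereby transferred onto the prior, $\int p_0(x_0)\,\nabla_{x_0}q_t(x\mid x_0)\,dx_0 = -\int \nabla p_0(x_0)\,q_t(x\mid x_0)\,dx_0 = -\int \nabla\log p_0(x_0)\,p_0(x_0)\,q_t(x\mid x_0)\,dx_0$; dividing by $p_t(x)$ and recognizing $p_0(x_0)q_t(x\mid x_0)/p_t(x)=p_{t\to 0}(x_0\mid x)$ yields the claim.

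An equivalent route I would keep in reserve is to use reversibility of the OU semigroup with respect to $\gamma$ to write $p_t=\gamma\cdot P_t(p_0/\gamma)$ (with $P_t$ the OU semigroup), apply the commutation identity $\nabla P_t f=e^{-t}P_t\nabla f$, expand $\nabla(p_0/\gamma)=(p_0/\gamma)(\nabla\log p_0+\mathrm{id})$, and then use Tweedie's formula \eqref{eq:tweedie} to collapse the extra terms; this reduces the statement to standard semigroup facts. Either way the algebra is short, so I expect the only real work — the main (if routine) obstacle — to be the analytic bookkeeping: justifying differentiation under the integral and the vanishing of the boundary terms in the integration by parts, which is precisely where the smoothing ($t>0$) together with the subgaussian-prior and Lipschitz-score hypotheses of Assumption \ref{assumptions} come in. The intertwining of $\nabla_x$ and $\nabla_{x_0}$ for the Gaussian kernel is the conceptual core and carries essentially no difficulty.
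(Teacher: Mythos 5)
The paper does not supply an argument for this lemma; its ``proof'' is a one-line citation to Proposition~2.1 of \cite{debortoli2024targetscorematching}. Your sketch gives the standard argument behind that citation: differentiate $p_t(x)=\int p_0(x_0)q_t(x\mid x_0)\,dx_0$ under the integral, use that $q_t$ depends on $(x,x_0)$ only through $x-e^{-t}x_0$ to trade $\nabla_x q_t$ for $\nabla_{x_0}q_t$, integrate by parts to move the derivative onto $p_0$, and identify the weight $p_0(x_0)q_t(x\mid x_0)/p_t(x)$ as the posterior. The analytic issues you flag (dominated convergence via subgaussianity of $p_0$ and Lipschitzness of $\nabla\log p_0$, vanishing of boundary terms) are exactly the right places to spend care, and the semigroup alternative you keep in reserve is an equivalent packaging of the same identities. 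Conceptually your proof is correct and strictly more informative than what the paper writes.

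The one step you should not hand-wave is the ``explicit scalar.'' Because $q_t(x\mid x_0)$ is a Gaussian in $x-e^{-t}x_0$, the chain rule gives $\nabla_{x_0}q_t=-e^{-t}\nabla_x q_t$, i.e.\ $\nabla_x q_t=-e^{t}\nabla_{x_0}q_t$. Carrying this through your integration by parts produces
\begin{equation*}
\nabla\log p_t(x)=e^{t}\,\E_{X\sim p_{t\to 0}(\cdot\mid x)}\!\left[\nabla\log p_0(X)\right],
\end{equation*}
not the formula as printed; a sanity check with $p_0=\gamma_{\tau^2}$ confirms the $e^{t}$ prefactor, and Proposition~2.1 of \cite{debortoli2024targetscorematching} is stated in the form $\alpha_t^{-1}\E[\cdot]$ with $\alpha_t=e^{-t}$ here. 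Your sketch asserts the computation ``yields the claim,'' which silently sets the scalar to $1$. Either the lemma statement is missing the $e^{t}$ (an apparent typo given the parametrization $\alpha_t=e^{-t}$, $\sigma_t^2=1-e^{-2t}$ used to define $p_{t\to 0}$), or a different noising schedule is tacitly intended; in either case you should display the constant rather than absorb it. Your semigroup route has the same accounting: $\nabla P_tf=e^{-t}P_t\nabla f$ injects the $e^{-t}$, and after collapsing the extra terms with Tweedie the prefactor $e^{t}$ reappears.
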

\begin{proof}
Please see Proposition 2.1 of \citep{debortoli2024targetscorematching}.
\end{proof}

\begin{lemma}\label{lem:score_bound}
    Let $X_0\sim p_0$ with $\nabla \log p_0$ being $\mathfrak{L}-$Lipshitz for $\mathfrak{L} > 1$, and let $X_t$ denote the OU process run for time $t$, with law $X_t\sim p_t$. Then $\nabla \log p_t$ is $\mathfrak{L}$-Lipshitz. 
\end{lemma}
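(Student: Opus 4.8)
The plan is to reduce the statement to two–sided operator bounds on the conditional covariance appearing in the second–order Tweedie identity. With $a_t=e^{-t}$, $\sigma_t^2=1-e^{-2t}$, so that $X_t=a_tX_0+\sigma_t\eta$, $\eta\sim\gamma$, let $q_x$ denote the law of $X_0$ given $X_t=x$, i.e.\ $q_x(x_0)\propto p_0(x_0)\exp(-\tfrac{a_t^2}{2\sigma_t^2}\|x_0-x/a_t\|^2)$. Differentiating $\nabla\log p_t(x)=-\tfrac1{\sigma_t^2}(x-a_t\E_{q_x}[X_0])$ once more in $x$ (as in the computation behind Lemma~\ref{lem:log_prob_derivative}) yields the exact identity
\[
\nabla^2\log p_t(x)=-\frac1{\sigma_t^2}I+\frac{a_t^2}{\sigma_t^4}\Cov_{q_x}(X_0).
\]
Since ``$\nabla\log p_0$ is $\mathfrak L$-Lipschitz'' is the same as $-\mathfrak LI\preceq\nabla^2\log p_0\preceq\mathfrak LI$, the potential $U_x:=-\log q_x$ obeys $\nabla^2U_x=-\nabla^2\log p_0+\tfrac{a_t^2}{\sigma_t^2}I\in[\tfrac{a_t^2}{\sigma_t^2}-\mathfrak L,\tfrac{a_t^2}{\sigma_t^2}+\mathfrak L]\,I$, and the target $-\mathfrak LI\preceq\nabla^2\log p_t\preceq\mathfrak LI$ is equivalent to the pair $\Cov_{q_x}(X_0)\succeq\tfrac{\sigma_t^2(1-\mathfrak L\sigma_t^2)}{a_t^2}I$ and $\Cov_{q_x}(X_0)\preceq\tfrac{\sigma_t^2(1+\mathfrak L\sigma_t^2)}{a_t^2}I$, uniformly in $x$.

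The lower covariance bound is the easy half: if $\mathfrak L\sigma_t^2\ge1$ it is vacuous since $\Cov_{q_x}(X_0)\succeq0$; otherwise the Cram\'er--Rao inequality gives $\Cov_{q_x}(X_0)\succeq(\E_{q_x}\nabla^2U_x)^{-1}\succeq(\tfrac{a_t^2}{\sigma_t^2}+\mathfrak L)^{-1}I$, and with $a_t^2=1-\sigma_t^2$ the required inequality $(\tfrac{a_t^2}{\sigma_t^2}+\mathfrak L)^{-1}\ge\tfrac{\sigma_t^2(1-\mathfrak L\sigma_t^2)}{a_t^2}$ collapses to $0\ge\mathfrak L\sigma_t^4(1-\mathfrak L)$, which holds exactly because $\mathfrak L>1$. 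Equivalently this is the contraction $L\mapsto L/(1+L\sigma^2)$ of the semi-log-concavity constant under Gaussian smoothing, composed with the rescaling $X_0\mapsto e^{-t}X_0$, and $\mathfrak L>1$ is precisely what keeps the contracted constant at or below $\mathfrak L$.

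The upper covariance bound is the crux. For small $t$, where $\tfrac{a_t^2}{\sigma_t^2}>\mathfrak L$ (i.e.\ $t<\tfrac12\log\tfrac{1+\mathfrak L}{\mathfrak L}$), $U_x$ is uniformly strongly convex and Brascamp--Lieb gives $\Cov_{q_x}(X_0)\preceq(\tfrac{a_t^2}{\sigma_t^2}-\mathfrak L)^{-1}I$; for large $t$ the added Gaussian noise swamps the negative curvature permitted in $\log p_0$ and $p_t$ becomes log-concave, so $\nabla^2\log p_t\preceq0$. The obstacle is the intermediate range: the raw Brascamp--Lieb estimate is lossy there (it sees only the worst eigenvalue of $\nabla^2U_x$, not that $-\log p_0$ is globally $\mathfrak L$-smooth), so I expect the clean route is the parabolic maximum principle for the flow $\partial_t H=\Delta H+((x+2\nabla\log p_t)\cdot\nabla)H+2H+2H^2$ with $H=\nabla^2\log p_t$: the transport and Laplacian terms are harmless, and one must show $\{\,-\mathfrak LI\preceq H\preceq\mathfrak LI\,\}$ is an invariant region, using the normalization $\int p_t=1$ to tame the reaction $2H(I+H)$ on the upper face. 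Here $\mathfrak L>1$ is essential --- the stationary score $\nabla\log\gamma=-x$ sits at $H\equiv-I$, which is inside the region only when $\mathfrak L\ge1$ --- and this is the step I expect to require the most care, since the naive pointwise bound on the reaction term is not by itself enough. Assembling the two directions gives $\|\nabla^2\log p_t\|\le\mathfrak L$ for all $t\ge0$, i.e.\ $\nabla\log p_t$ is $\mathfrak L$-Lipschitz.
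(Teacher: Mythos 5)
The paper gives this lemma without a proof, so there is no reference argument to match against; here is an assessment on the merits.

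Your framing via the second-order Tweedie identity is the natural one, and the \textbf{lower bound} (i.e.\ $\nabla^2\log p_t \succeq -\mathfrak{L} I$) is complete and correct: the Cram\'er--Rao step and the algebra that reduces $(\tfrac{a_t^2}{\sigma_t^2}+\mathfrak L)^{-1}\ge \tfrac{\sigma_t^2(1-\mathfrak L\sigma_t^2)}{a_t^2}$ to $\mathfrak L\sigma_t^4(\mathfrak L-1)\ge 0$ both check out, and you correctly pinpoint that $\mathfrak L>1$ is exactly what makes the contraction $L\mapsto L/(1+L\sigma^2)$ stay below $\mathfrak L$ after the $e^{-t}$ rescaling (indeed for $\mathfrak L<1$ the stationary Hessian $-I$ already falls outside the band, so the hypothesis $\mathfrak L>1$ is necessary).

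The \textbf{upper bound} ($\nabla^2\log p_t\preceq\mathfrak L I$) has a real gap, which you acknowledge but do not close. Brascamp--Lieb on $\Cov_{q_x}(X_0)\preceq(\tfrac{a_t^2}{\sigma_t^2}-\mathfrak L)^{-1}I$ degenerates once $\sigma_t^2 \ge \tfrac{1}{1+\mathfrak L}$, and your ``large $t$'' endpoint (``$p_t$ becomes log-concave'') is itself a consequence of the sub-Gaussianity in Assumption~\ref{assumptions}, not of score-Lipschitzness alone --- worth saying explicitly, since as you observe the two regimes leave a genuine intermediate window. For that window the matrix Riccati flow $\partial_t H=\Delta H+(x+2\nabla\log p_t)\cdot\nabla H+2H(I+H)$ is the right object, but the maximum principle fails pointwise exactly where you flag it: on the upper face $\lambda_{\max}(H)=\mathfrak L$ the reaction in the top eigendirection is $2\mathfrak L(1+\mathfrak L)>0$, pushing outward, and the nonpositive $\Delta H$ and vanishing transport at a spatial maximum give no compensating sign. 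The integral identity $\int p_t\,\nabla^2\log p_t = -\int p_t\,\nabla\log p_t\otimes\nabla\log p_t\preceq 0$ shows the \emph{average} Hessian is negative, but that does not by itself control $\sup_x\lambda_{\max}$. So as written the argument establishes only one of the two one-sided bounds. To make this a proof you would need either a global (non-pointwise) invariance argument for the Riccati flow that exploits the normalization $\int p_t=1$, or a covariance upper bound that combines the local $\mathfrak L$-smoothness of $\log p_0$ with the sub-Gaussian tails of the conditional law rather than relying on Brascamp--Lieb alone.

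One more small point: the intermediate regime is not a technicality you can wave away by interpolation --- the Brascamp--Lieb constant $e^{2t}\mathfrak L/(1-\mathfrak L(e^{2t}-1))$ actually \emph{diverges} as $t\uparrow\tfrac12\log(1+1/\mathfrak L)$, so there is no hope of a crude bound there; the argument genuinely has to change.
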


\section{$\mathsf{FI}$ is not sufficient}\label{appx:example}

\begin{lemma}\label{lem:localFI}
Take two distributions $\gamma_1, \gamma_2$. Let $\gamma_{1|\mathcal{B}_\varepsilon (x)}$ (respectively,  $\gamma_{2|\mathcal{B}_\varepsilon (x)}$) denote the distribution $\gamma_1$ conditioned on being within a ball of radius $\epsilon$ around the point $x$. Then we have
$$\E_{X\sim \gamma_1}\KL{\gamma_{1|\mathcal{B}_\varepsilon (X)}}{\gamma_{2|\mathcal{B}_\varepsilon (X)}} \lesssim \varepsilon \FI{\gamma_{1}}{\gamma_{2}}.$$
\end{lemma}
\begin{proof}
For $\gamma$ smooth around $x$, $\gamma(y) = \gamma(x)+(y-x)^\top \gamma(x)+\mathcal{O}(\Vert y-x\Vert^2)$, so 
$$\int_{y\in \mathcal{B}_{\epsilon}(x)} \gamma(y)~dy = \left(\gamma(x) + O(\epsilon^2)\right)\text{vol}(\mathcal{B}_{\epsilon}(x))$$ 
and $$\gamma_{|\mathcal{B}_\varepsilon(x)}(x) = \frac{\gamma(x)}{\int_{y\in \mathcal{B}_\varepsilon(x)} \gamma(y)~dy} = \frac{\gamma(x)}{\left(\gamma(x) + \Theta(\epsilon^2)\right)\text{vol}(\mathcal{B}_{\epsilon})}=_{\epsilon}\frac{1}{\text{vol}(\mathcal{B}_\epsilon)}.$$ 
Let $\nabla_a f\mid_{b}$ denote the gradient with respect to $a$ evaluated at $b$, then we also have
$$\nabla_z \log \gamma_{|\mathcal{B}_\varepsilon(x)}(z)\mid_{z=x} = \nabla_z \log \gamma(z)\mid_{z=x}-\nabla_z \log\int_{y\in \mathcal{B}_\varepsilon(x)}\gamma(y)~dy\mid_{z=x} = \nabla \log \gamma(x),$$ so we have:
\begin{align*}
&\E_{X\sim \gamma_1}\KL{\gamma_{1|\mathcal{B}_\varepsilon (X)}}{\gamma_{2|\mathcal{B}_\varepsilon (X)}}\\ 
& \quad= \E_{X\sim \gamma_1}\E_{Y\sim\gamma_{1|\mathcal{B}_\varepsilon (X)}}[\log\gamma_{1|\mathcal{B}_\varepsilon (X)}(Y) - \log\gamma_{2|\mathcal{B}_\varepsilon (X)}(Y)]\\
& \quad= \E_{X\sim \gamma_1}\E_{Y\sim \gamma_{1|\mathcal{B}_\varepsilon (X)}}[\log\gamma_{1|\mathcal{B}_\varepsilon (X)}(X+(Y-X)) - \log\gamma_{2|\mathcal{B}_\varepsilon (X)}(X+(Y-X))]\\
& \quad= \E_{X\sim \gamma_1}\E_{Y\sim \gamma_{1|\mathcal{B}_\varepsilon (X)}}[\log\gamma_{1|\mathcal{B}_\varepsilon (X)}(X)- \log\gamma_{2|\mathcal{B}_\varepsilon (X)}(X)+(Y-X)^\top (\nabla\log \gamma_{1|\mathcal{B}_\varepsilon (X)}(X) - \nabla\log \gamma_{2|\mathcal{B}_\varepsilon (X)}(X))]\\
& \quad\approx \E_{X\sim \gamma_1}\E_{Y\sim \gamma_{1|\mathcal{B}_\varepsilon (X)}}[(Y-X)^\top (\nabla\log \gamma_{1|\mathcal{B}_\varepsilon (X)}(Y) -\nabla\log \gamma_{2|\mathcal{B}_\varepsilon (X)}(Y))]\\
& \quad\le \varepsilon\E_{X\sim \gamma_1}\E_{Y\sim \gamma_{1|\mathcal{B}_\varepsilon (X)}}[\Vert \nabla\log \gamma_{1|\mathcal{B}_\varepsilon (X)}(X) -\nabla\log \gamma_{2|\mathcal{B}_\varepsilon (X)}(X)\Vert]\\
& \quad= \varepsilon~\E_{X\sim \gamma_1}[\Vert \nabla\log \gamma_1(X) -\nabla\log \gamma_2(X)\Vert] = \varepsilon \FI{\gamma_{1}}{\gamma_{2}}
\end{align*}
\end{proof}
\begin{lemma}\label{lem:FIMogoriginal}
Let
$$p_0 = \frac{1}{2}\mathcal{N}(\mathbf{0}, I) + \frac{1}{2}\mathcal{N}\left(\lambda\begin{bmatrix}1 \\1\end{bmatrix}, I_2\right),\hspace{1cm} R(\mathbf{x}) = \frac{1}{2\eta}\Vert \text{diag}([0, 1]) \mathbf{x}\Vert^2$$
Let $\frac{1}{\eta'} = 1+ \frac{1}{\eta}$, and let $A_\square = \text{diag}([1, \square])$ for any $\square$. Then the posterior can be written as
$$\mu_0 = \alpha_0\mathcal{N}\left(\mathbf{0}, A_{\eta'}\right) +(1-\alpha_0)\mathcal{N}\left(\lambda A_{\eta'}\begin{bmatrix}1\\1\end{bmatrix}, A_{\eta'}\right)$$

with $\alpha_0 = \frac{1}{1+e^{-\frac{\lambda^2}{1+\eta}}}$, and the distribution 
$$\mu_0' = \frac{1}{2}\mathcal{N}\left(\mathbf{0}, A_{\eta'}\right) +\frac{1}{2}\mathcal{N}\left(\lambda A_{\eta'}\begin{bmatrix}1\\1\end{bmatrix}, A_{\eta'}\right)$$
satisfies $$\FI{\mu_1}{\mu_2}\le \lambda e^{2\lambda^2/(1+\eta)-\lambda^2/8}$$
\end{lemma}
\begin{proof}
Take the marginals of $\mu_0, \mu_0'$ onto the two coordinates (denoted "$x$" and "$y$").

\begin{align*}
\mu_{0, x}' = \frac{1}{2}\mathcal{N}(0, 1) + \frac{1}{2}\mathcal{N}(\lambda, 1)&\hspace{1cm} \mu_{0, x} = \alpha_0 \mathcal{N}(0, 1) + (1-\alpha_0)\mathcal{N}(\lambda , 1)\\
\mu_{0, y}' = \frac{1}{2}\mathcal{N}(0, \eta') + \frac{1}{2}\mathcal{N}(\frac{\lambda\eta}{1+\eta}, \eta')&\hspace{1cm} \mu_{0, x} = \alpha_0 \mathcal{N}(0, \eta') + (1-\alpha_0)\mathcal{N}(\frac{\lambda\eta}{1+\eta}, \eta')\\
\end{align*}

We have $\FI{\mu_0'}{\mu_0}\le \FI{\mu_{0, x}'}{\mu_{0, x}}+\FI{\mu_{0, y}'}{\mu_{0, y}}$. We can apply Lemma \ref{lem:FIMoG} to each of these marginals seperately to get
$$\FI{\mu_{0}'}{\mu_{0}}\le \frac{\lambda}{(1-\alpha_0)^2}e^{-\nicefrac{\lambda^2}{8}}\le \lambda e^{2\lambda^2/(1+\eta)-\lambda^2/8}.$$
\end{proof}
\begin{lemma}\label{lem:FIMoG}
Consider two mixtures of scalar Gaussians
\begin{align*}
\mu_1 
&= \alpha_1 \mathcal{N}(0, \sigma) + (1-\alpha_1)\mathcal{N}(\beta, \sigma)\\
\mu_2 
&= \alpha_2 \mathcal{N}(0, \sigma) + (1-\alpha_2)\mathcal{N}(\beta, \sigma)\\
\end{align*}
with $\alpha_2 > \alpha_1 > \frac{1}{2}$. We have 
$$\FI{\mu_1}{\mu_2}\le \frac{(1-\alpha_1)^2}{(1-\alpha_2)^2}\frac{\beta}{\sigma}e^{-\nicefrac{\beta^2}{8\sigma^2}}.$$
\end{lemma}
\begin{proof}
For convenience, we write $\gamma_1 = \mathcal{N}(0, \sigma), \pi_2 = \mathcal{N}(\beta, \sigma)$. Note that $\nabla\log \pi_1  = -\nicefrac{x}{\sigma}, \nabla \log \pi_2 = -\nicefrac{(x-\beta)}{\sigma}$. We upper bound the $\FI{\mu_1}{\mu_2}$ as follows (this follows the argument in \cite{balasubramanian2022theorynonlogconcavesamplingfirstorder} very closely, just with modified parameters)
\begin{align*}
\nabla \log \nicefrac{\mu_1}{\mu_2} 
&= \frac{1}{\mu_1\mu_2}\left(\mu_2 \left(\alpha_1\nabla \pi_1+(1-\alpha_1)\nabla \pi_2\right) - \mu_1\left(\alpha_2\nabla \pi_1+(1-\alpha_2)\nabla \pi_2\right)\right)\\
&= \frac{(\alpha_2-\alpha_1)}{\mu_1\mu_2}\left(\pi_1\nabla \pi_2-\pi_2\nabla\pi_1\right)\\
&= (\alpha_2-\alpha_1)\frac{\pi_1\pi_2}{\mu_1\mu_2}\left(\nabla \log\pi_2-\nabla\log \pi_1\right) = (\alpha_2-\alpha_1)\frac{\pi_1\pi_2}{\mu_1\mu_2}\frac{\beta}{\sigma}\\
\end{align*}
so we have
\begin{align*}
    \FI{\mu_1}{\mu_2} 
    &= \E[(\nabla \log \nicefrac{\mu_1}{\mu_2})^2]\\
    &= (\alpha_2-\alpha_1)^2\frac{\beta^2}{\sigma^2}\int \frac{\pi_1^2\pi_2^2}{\mu_1^2\mu_2^2}~d\mu_1\\
    &= (\alpha_2-\alpha_1)^2\frac{\beta^2}{\sigma^2}\int \frac{\pi_1^2\pi_2^2}{\mu_1\mu_2^2}~dx\\
    &= (\alpha_2-\alpha_1)^2\frac{\beta^2}{\sigma^2}\int \frac{\pi_1^2\pi_2^2}{(\alpha_1\pi_1 + (1-\alpha_1)\pi_2)(\alpha_2 \pi_1 + (1-\alpha_2)\pi_2)^2}~dx\\
    &\le (\alpha_2-\alpha_1)^2\frac{\beta^2}{\sigma^2}\left(\frac{1}{(1-\alpha_1)\alpha_2(1-\alpha_2)}\int_{x\le\nicefrac{\beta}{2}} \frac{\pi_2^2}{\pi_1}~dx+\frac{1}{(1-\alpha_1)(1-\alpha_2)^2}\int_{x\ge\nicefrac{\beta}{2}} \frac{\pi_1^2}{\pi_2}~dx\right)\\
    &\le \frac{(\alpha_2-\alpha_1)^2}{(1-\alpha_1)(1-\alpha_2)^2}\frac{\beta^2}{\sigma^2}\left(\int_{x\le\nicefrac{\beta}{2}} \frac{\pi_2^2}{\pi_1}~dx+\int_{x\ge\nicefrac{\beta}{2}} \frac{\pi_1^2}{\pi_2}~dx\right)\\
\end{align*}
Finally 
$$\int_{x\le \nicefrac{\beta}{2}} \nicefrac{\pi_2^2}{\pi_1} = \frac{1}{\sqrt{2\pi}\sigma}\int_{x\le \nicefrac{\beta}{2}} e^{-(x-\beta)^2+\frac{1}{2}x^2}=\frac{e^{\beta^2}}{\sqrt{2\pi}\sigma}\int_{x\le \nicefrac{\beta}{2}} e^{-\frac{1}{2}(x-2\beta)^2}\le \frac{1}{\sqrt{2\pi}\sigma\beta}e^{-\nicefrac{9\beta^2}{8}}.$$
This also holds for the other term $\int_{x\ge \nicefrac{\beta}{2}}\frac{\pi_1^2}{\pi_2}$. Overall we have $\FI{\mu_1}{\mu_2}\le \frac{(\alpha_2-\alpha_1)^2}{(1-\alpha_1)(1-\alpha_2)^2}\frac{\beta}{\sigma}e^{-\nicefrac{\beta^2}{8\sigma^2}}$. 
\end{proof}
\newpage
\section{Synthetic Simulations}
We have include some synthetic simulations of our method below. We use three priors for illustration (Figure \ref{fig:priors}), a mixture of gaussians with $5$ components, a set of vertical bars, some of which have gaps in them (similar to the discussion in Remark \ref{rem:synthetic}), and a pair of "moons". We illustrate the posterior sampling algorithm with two choices of measurement models, $y = Ax+\eta$ for $A = \begin{pmatrix}1&0\\0&0\end{pmatrix}$ (Figure \ref{fig:inpainting_measurements}) and also simply $y = x+\eta$ for $\eta \sim \mathcal{N}(0, \frac{1}{\mathfrak{R}})$ (Figure \ref{fig:gaussian_measurements}). The sampler of Algorithm \ref{alg:AULA}, with $\kappa = 400$ and $T_{ws}/\delta = 200$ total noising levels is shown in Figures \ref{fig:inpainting_sampler} and \ref{fig:gaussian_sampler}.

\begin{figure}[ht]
    \centering
    \begin{subfigure}[b]{0.32\textwidth}
        \centering
        \includegraphics[width=0.8\textwidth]{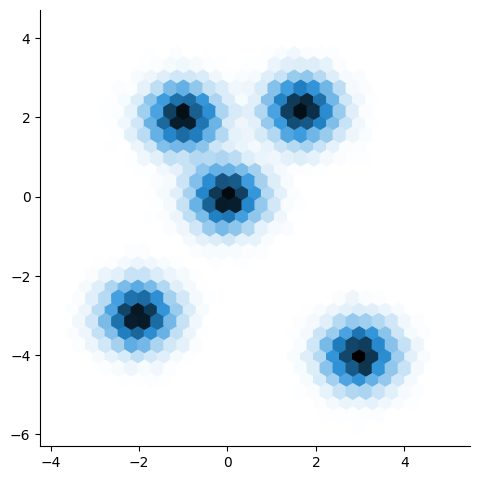}
        % \caption{Caption 1}
    \end{subfigure}
    \hfill
    \begin{subfigure}[b]{0.32\textwidth}
        \centering
        \includegraphics[width=0.8\textwidth]{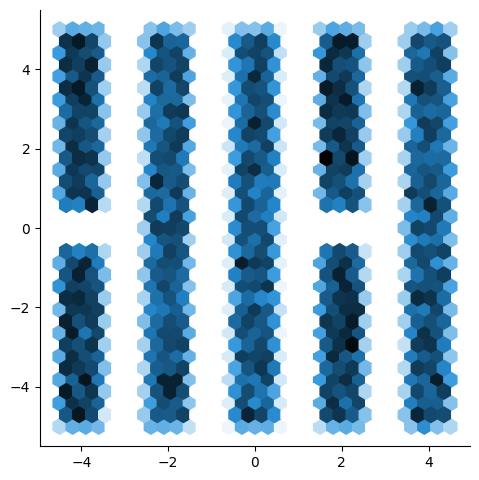}
    \end{subfigure}
    \hfill
    \begin{subfigure}[b]{0.32\textwidth}
        \centering
        \includegraphics[width=0.8\textwidth]{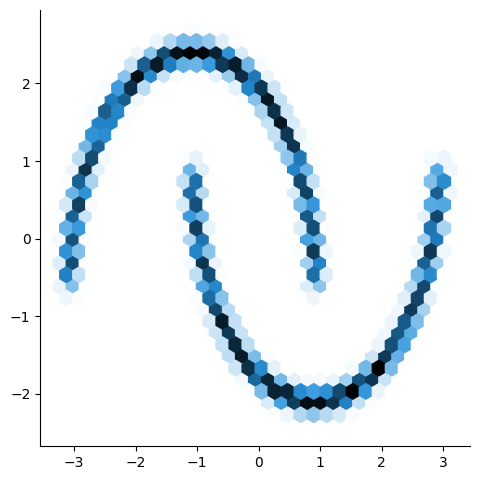}
        % \caption{Caption 3}
        \label{fig:inpainting_samples}
    \end{subfigure}
    \caption{Three priors used in our experiments. A Mixture-of-Gaussians prior on the left, a "Vertical Bars" prior in the center (similar to Remark \ref{rem:synthetic}), and a "moons" prior on the right.}
    \label{fig:priors}
\end{figure}

\begin{figure}[ht]
    \centering
    \begin{subfigure}[b]{0.32\textwidth}
        \centering
        \includegraphics[width=0.8\textwidth]{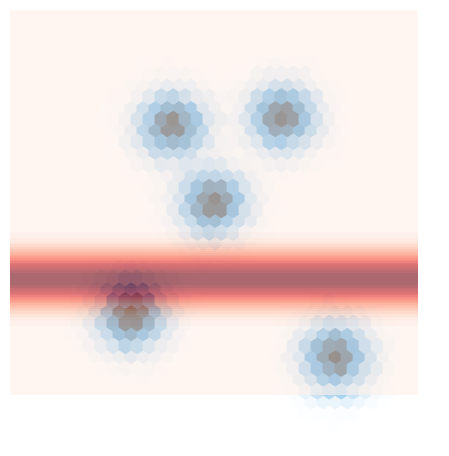}
        % \caption{Caption 1}
    \end{subfigure}
    \hfill
    \begin{subfigure}[b]{0.32\textwidth}
        \centering
        \includegraphics[width=0.8\textwidth]{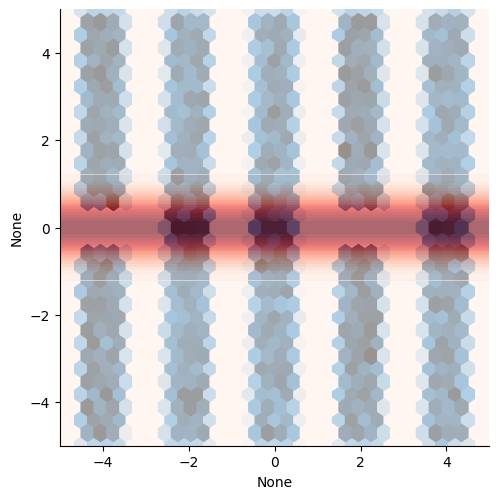}
        % \caption{Caption 2}
    \end{subfigure}
    \hfill
    \begin{subfigure}[b]{0.32\textwidth}
        \centering
        \includegraphics[width=0.8\textwidth]{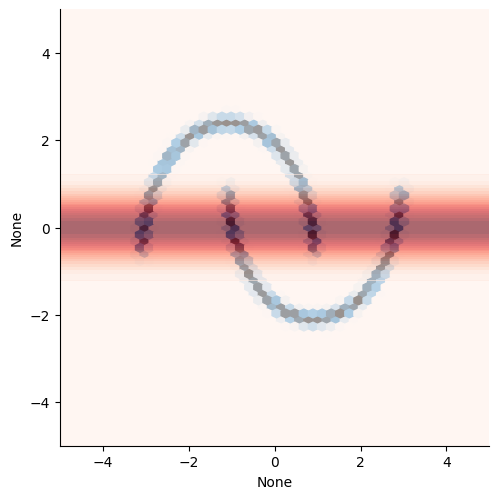}
        % \caption{Caption 3}
    \end{subfigure}
    \caption{Likelihood functions used to define the posterior. $R(x) = \mathfrak{R}\Vert Ax\Vert^2$ where $A = \begin{pmatrix} 1 & 0 \\ 0 & 0 \end{pmatrix}$. Essentially these are "noisy projections", somewhat analogous to an inpainting problem (one coordinate is seen, the other is not).}
    \label{fig:inpainting_measurements}
\end{figure}

\begin{figure}[ht]
    \centering
    \begin{subfigure}[b]{0.32\textwidth}
        \centering
        \includegraphics[width=0.8\textwidth]{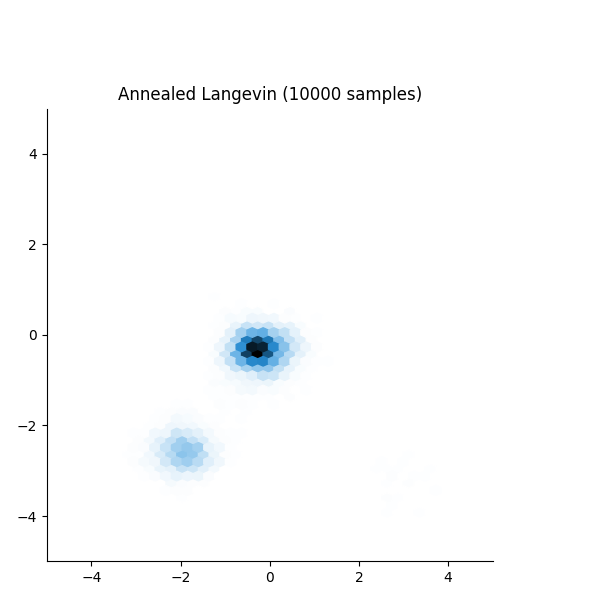}
    \end{subfigure}
    \hfill
    \begin{subfigure}[b]{0.32\textwidth}
        \centering
        \includegraphics[width=0.8\textwidth]{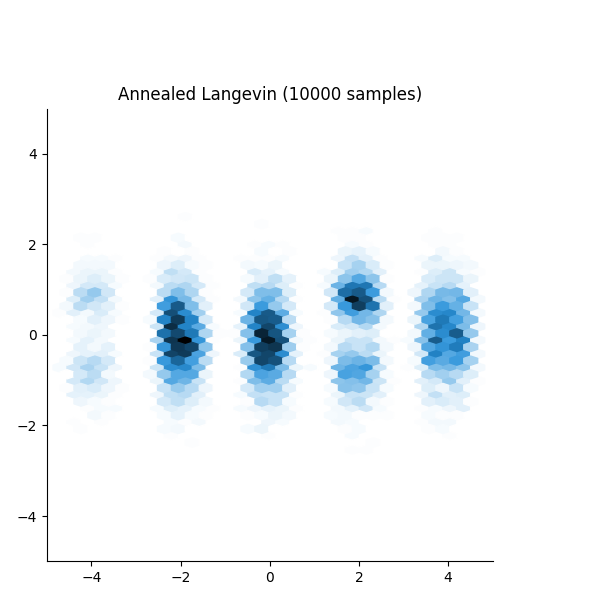}
    \end{subfigure}
    \hfill
    \begin{subfigure}[b]{0.32\textwidth}
        \centering
        \includegraphics[width=0.8\textwidth]{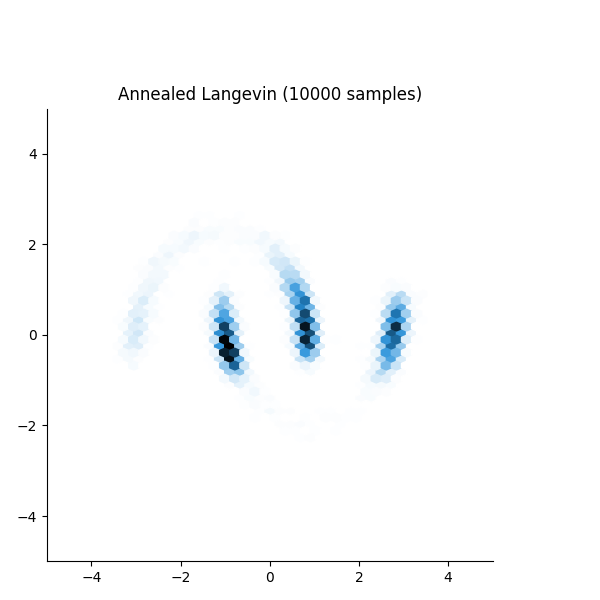}
        % \caption{Caption 3}
        \label{fig:1c}
    \end{subfigure}
    \caption{Resulting sampler, run with $\kappa = 400$. Shown are hex-jointplots of 10000 samples each.}
    \label{fig:inpainting_sampler}
\end{figure}

\newpage

\begin{figure}[ht]
    \centering
    \begin{subfigure}[b]{0.32\textwidth}
        \centering
        \includegraphics[width=0.8\textwidth]{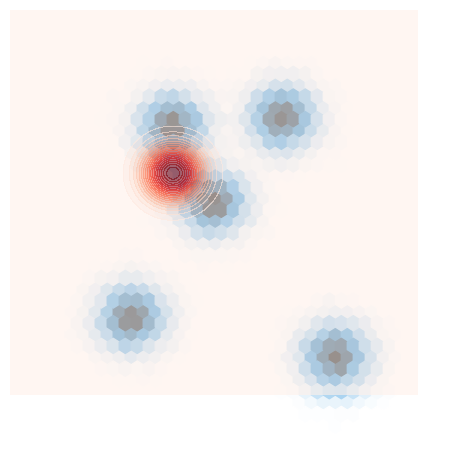}
    \end{subfigure}
    \hfill
    \begin{subfigure}[b]{0.32\textwidth}
        \centering
        \includegraphics[width=0.8\textwidth]{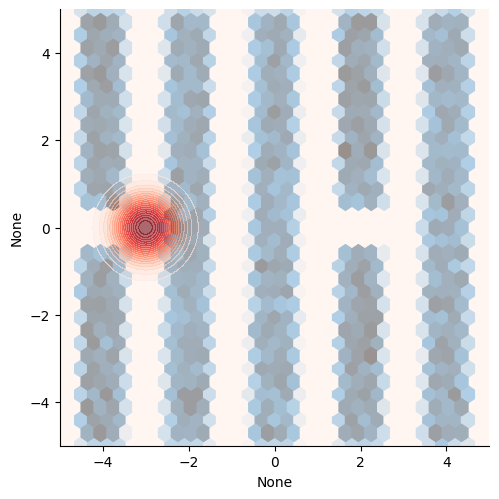}
    \end{subfigure}
    \hfill
    \begin{subfigure}[b]{0.32\textwidth}
        \centering
        \includegraphics[width=0.8\textwidth]{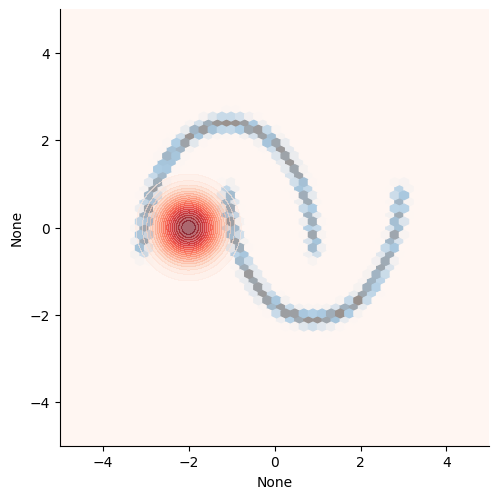}
    \end{subfigure}
    \caption{Likelihood functions used to define the posterior, corresponding to a noisy gaussian measurement $R(x) = \mathfrak{R}\Vert x\Vert^2$.}
    \label{fig:gaussian_measurements}
\end{figure}

\begin{figure}[ht]
    \centering
    \begin{subfigure}[b]{0.32\textwidth}
        \centering
        \includegraphics[width=0.8\textwidth]{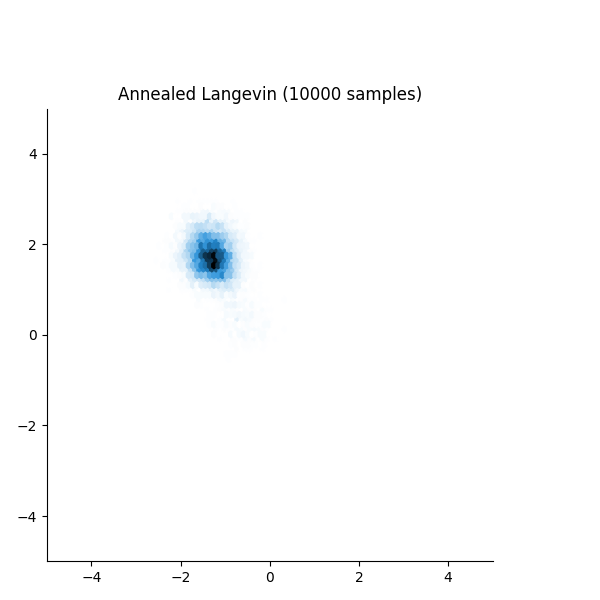}
    \end{subfigure}
    \hfill
    \begin{subfigure}[b]{0.32\textwidth}
        \centering
        \includegraphics[width=0.8\textwidth]{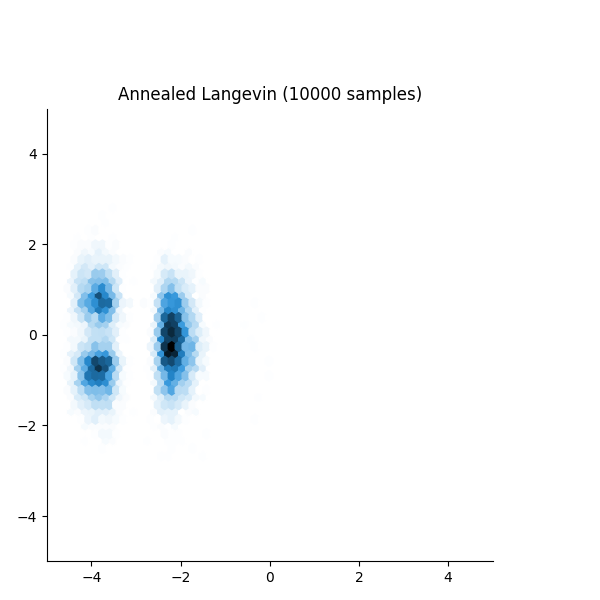}
    \end{subfigure}
    \hfill
    \begin{subfigure}[b]{0.32\textwidth}
        \centering
        \includegraphics[width=0.8\textwidth]{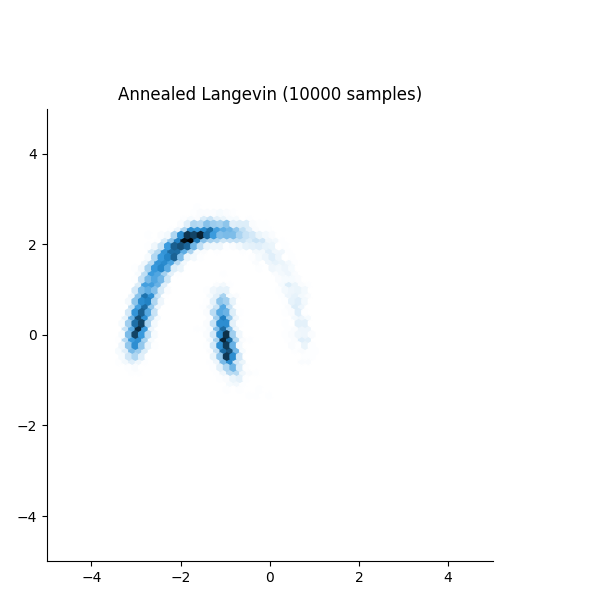}
    \end{subfigure}
    \caption{Resulting sampler, run with $\kappa = 400$, with $200$ levels of noising (so a total of $80000$ iterations). Shown are hex-jointplots of 10000 samples each.}
    \label{fig:gaussian_sampler}
\end{figure}

We see that each of the modes are discovered (avoiding the mode collapse phenomemon associated with $\textsf{FI}$).
\begin{figure}[ht]
    \centering

    % --- Row 1 ---
    \begin{subfigure}{0.22\textwidth}
        \centering
        \includegraphics[width=0.9\textwidth]{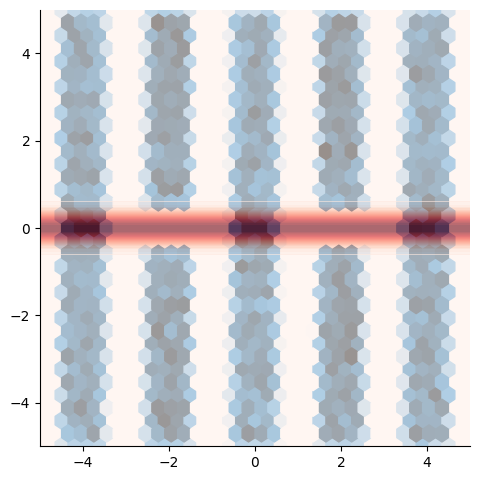}
        % \caption{1}
    \end{subfigure}\hfill
    \begin{subfigure}{0.22\textwidth}
        \centering
        \includegraphics[width=0.9\textwidth]{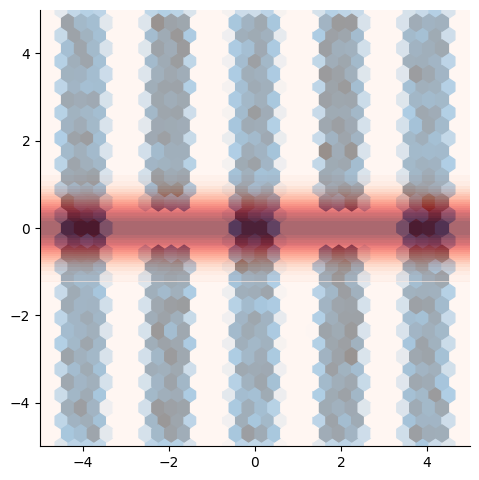}
        % \caption{2}
    \end{subfigure}\hfill
    \begin{subfigure}{0.22\textwidth}
        \centering
        \includegraphics[width=0.9\textwidth]{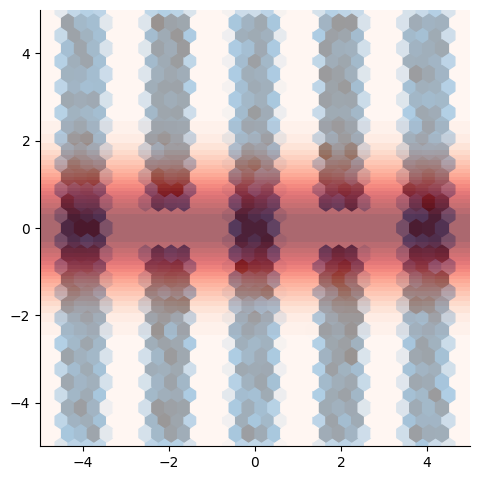}
        % \caption{3}
    \end{subfigure}\hfill
    \begin{subfigure}{0.22\textwidth}
        \centering
        \includegraphics[width=0.9\textwidth]{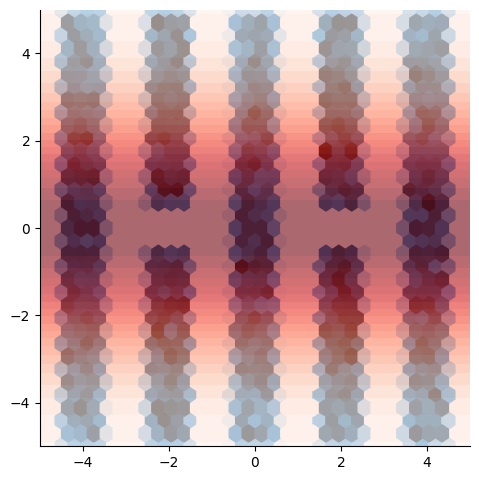}
        % \caption{4}
    \end{subfigure}

    \vspace{1em}

    % --- Row 2 ---
    \begin{subfigure}{0.22\textwidth}
        \centering
        \includegraphics[width=\textwidth]{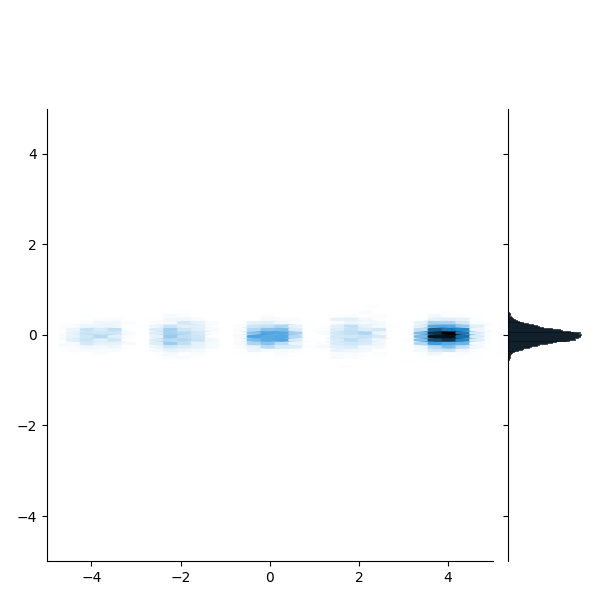}
        % \caption{5}
    \end{subfigure}\hfill
    \begin{subfigure}{0.22\textwidth}
        \centering
        \includegraphics[width=\textwidth]{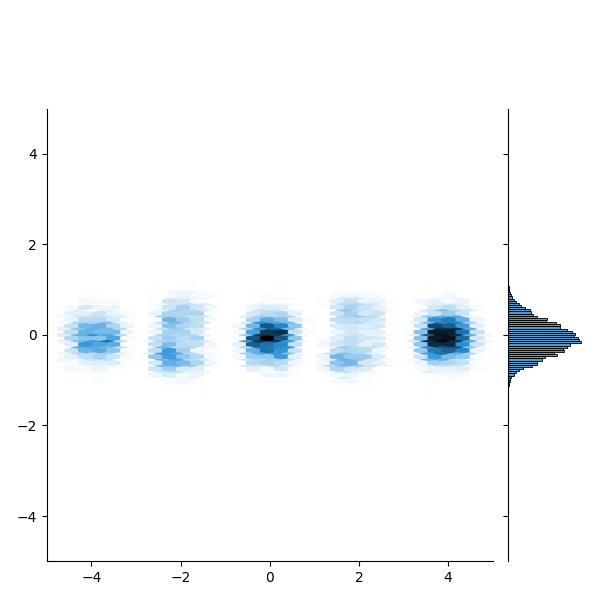}
        % \caption{6}
    \end{subfigure}\hfill
    \begin{subfigure}{0.22\textwidth}
        \centering
        \includegraphics[width=\textwidth]{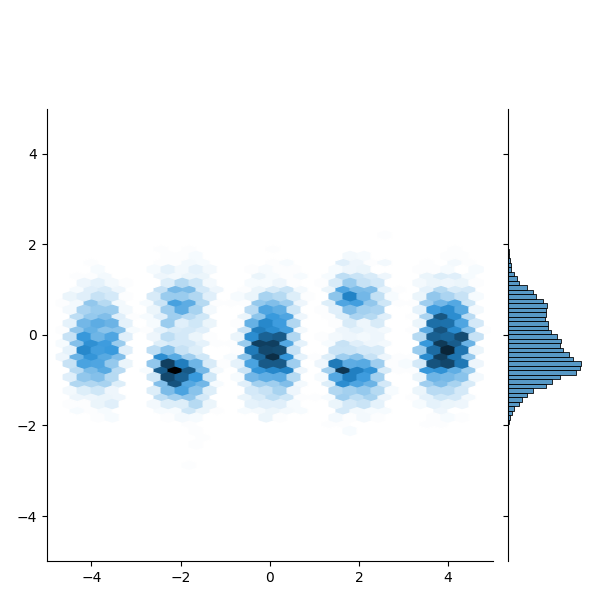}
        % \caption{7}
    \end{subfigure}\hfill
    \begin{subfigure}{0.22\textwidth}
        \centering
        \includegraphics[width=\textwidth]{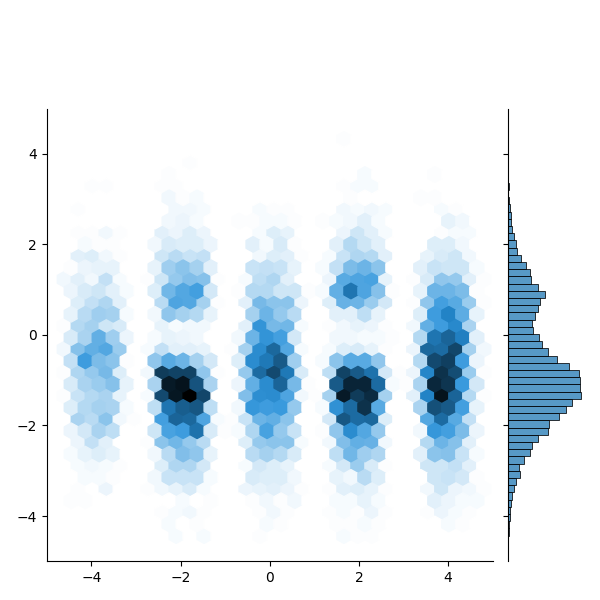}
        % \caption{8}
    \end{subfigure}
    \caption{Here we demonstrate the consequences of changing the variance of the noise used in the measurement (which is related to $\mathfrak{R}$ as we see in \ref{rem:assumptions}). For large values of $\mathfrak{R}$, the gap in the second and fourth vertical bars is much less stark, but the mass dedicated to these bars does not vanish.}
    \label{fig:grid2x4}
\end{figure}
\end{document}